\documentclass[journal]{IEEEtran}
\usepackage{cite}
\usepackage{amsmath,amssymb,amsfonts}
\usepackage{algorithmic}
\usepackage{graphicx,color}
\usepackage{textcomp}
\usepackage{xcolor}
\usepackage{hyperref}
\hypersetup{hidelinks=true}
\usepackage{algorithm,algorithmic}
\def\BibTeX{{\rm B\kern-.05em{\sc i\kern-.025em b}\kern-.08em
    T\kern-.1667em\lower.7ex\hbox{E}\kern-.125emX}}

\usepackage{amsthm}

\newtheorem{theorem}{Theorem}
\newtheorem{cor}{Corollary}
\newtheorem{remark}{Remark}
\usepackage[caption=false,font=footnotesize]{subfig}
\newtheorem{lemma}{Lemma}
\usepackage{orcidlink}

\begin{document}

\title{Cluster-Aware Over-the-Air Federated Learning with Energy-Harvesting Devices: From Global Training to Model Personalization}

% \author{First A. Author\authorrefmark{1}, Fellow, IEEE, Second B. Author\authorrefmark{2},\\ and Third C. Author Jr.\authorrefmark{3}, Member, IEEE}

% \author{Furkan Bagci\,\orcidlink{0009-0008-0473-5764}, Busra Tegin\,\orcidlink{0000-0002-3342-5414} (Member, IEEE),\\Mohammad Kazemi\,\orcidlink{0000-0001-5177-1874} (Member, IEEE), and Tolga M. Duman\,\orcidlink{0000-0002-5187-8660} (Fellow, IEEE)}

\author{Furkan Bagci, Busra Tegin, Mohammad Kazemi, and Tolga M. Duman

\thanks{

This work is funded by TUBITAK through the CHIST-ERA
project SONATA (CHIST-ERA-20-SICT-004, funded by TUBITAK, Turkey
Grant 221N366). Furkan Bagci's work was also supported by Türk Telekomünikasyon A.S. within the framework of the 5G and Beyond Joint Graduate Support Programme coordinated by the Information and Communication Technologies Authority. Mohammad Kazemi's work is supported by UKRI under the U.K. government's Horizon Europe Funding Guarantee under Grant 101103430.
Part of this work was conducted while Furkan Bagci and Busra Tegin were affiliated with Bilkent University. An earlier version of this paper
was presented in part at the 2025 IEEE International Conference on Communications (ICC) Workshops \cite{bagci2025update}. }

\thanks{ 

Furkan Bagci is with the Department of Electrical and Computer Engineering, University of Illinois Chicago, Chicago, IL 60607 USA (e-mail: fbagc@uic.edu).

Busra Tegin is with IETR-UMR CNRS 6164, CentraleSupélec Rennes Campus, Cesson-Sévigné 35576, France (e-mail: busra.tegin@centralesupelec.fr). 

Mohammad Kazemi is with the Department of Electrical and Electronic Engineering, Imperial College London, London SW7 2BT, U.K. (e-mail: mohammad.kazemi@imperial.ac.uk). 

Tolga M. Duman is with the Department of Electrical and Electronics Engineering, Bilkent University, Ankara 06800, Türkiye (e-mail: duman@ee.bilkent.edu.tr).
}

}

%\IEEEspecialpapernotice{(Invited Paper)}

\maketitle

\begin{abstract}
Federated learning (FL) enables distributed optimization and learning across decentralized edge devices while preserving data privacy, but its performance is fundamentally constrained by heterogeneous data distributions, limited communication resources, and energy availability. In practical wireless networks, mobile devices (MDs) often exhibit diverse data and learning objectives, naturally forming clusters of users with jointly trainable models. When devices rely on energy harvesting (EH), stochastic energy arrivals further complicate participation and scheduling under communication constraints. In this work, we study over-the-air (OTA) FL with EH MDs under heterogeneous data distributions, and investigate two closely related learning objectives within a unified framework: one aiming for a more representative global model by reducing data bias, and the other learning more personalized cluster-specific models by exploiting this bias. 
% Both problems originate from the same clustered heterogeneous structure and are jointly shaped by the communication and energy limitations of wireless OTA FL systems. 
In the global training mode, cluster information guides energy- and diversity-aware scheduling, ensuring that the scheduled active users provide a more representative aggregate update. In the personalization mode, the same cluster structure defines cluster-level learning objectives and OTA recovery targets, enabling the parameter server to train multiple cluster-specific models through simultaneous transmissions over the wireless multiple-access channel. Numerical results demonstrate that the proposed unified framework improves fairness or personalization, depending on the operating mode, while reducing communication overhead.
\end{abstract}

\begin{IEEEkeywords}
Federated learning, over-the-air computation, energy harvesting, personalized learning, diverse sampling.
\end{IEEEkeywords}

% \section{Introduction}
\section{INTRODUCTION}

Federated Learning (FL) is a decentralized machine learning paradigm in which multiple users collaboratively train a shared model under central server coordination, without sharing their local data. Initially introduced in \cite{mcmahan2017commef}, FL offers key advantages, including improved data privacy, lower latency, and enhanced model quality by leveraging diverse user data while keeping it on-device \cite{konecny2017flstrat, nguyen2021FLIot}. One of the main challenges in real-world deployments is the limited bandwidth of the communication channel between devices and the parameter server (PS). Consequently, reducing communication overhead is crucial for the efficient and scalable implementation of FL. To address this issue, over-the-air (OTA) computation has been proposed, leveraging the superposition property of the wireless multiple-access channel (MAC) to enable simultaneous aggregation of model updates during transmission \cite{yang2020ota, amiri2020dsgd}.

Several studies have utilized OTA aggregation and proposed solutions to tackle practical challenges in real-world FL deployments. 
% For instance, \cite{sun2022dynamic} introduces an energy-aware dynamic scheduling algorithm for OTA FL systems that employs analog gradient aggregation, whereas digital aggregation for users with random participation is investigated in \cite{Tarizzo2025}. 
For instance, \cite{sun2022dynamic} proposes an energy-aware dynamic scheduling algorithm for OTA FL with analog gradient aggregation, while \cite{Tarizzo2025} studies digital aggregation under random user participation.
Another study \cite{bereyhi2023device} formulates the device scheduling problem as a sparse support selection task, aiming to select a subset of devices under a combined cost constraint. Also, in \cite{kim2023beamforming}, the authors address beamforming vector design and device scheduling for OTA FL. 
In many OTA FL settings, mobile devices (MDs) are often assumed to be blind, i.e., lacking channel state information (CSI), whereas the PS is assumed to possess full or partial CSI for aggregation. Even under such asymmetric CSI assumptions, prior work shows that accurate model aggregation is achievable \cite{amiri2020BFL, tegin2021blind, ozan2024}.

The deployment of FL faces many challenges, such as hardware impairments, data distribution issues, and limited energy for mobile devices. Energy-harvesting (EH) devices, which continuously collect energy from the environment, are increasingly adopted in wireless communication systems to alleviate energy constraints. Various studies examine the capacity of the EH channel across different battery sizes \cite{sennur2015EH}. 

FL for EH devices has attracted growing interest, with recent works addressing it through data-utility–aware subcarrier allocation \cite{zeng2024ehd} and energy- and channel-aware device scheduling \cite{hamdi2022ehwn}.
OTA FL with EH devices has been studied under various settings, including joint user selection and receive beamforming \cite{chen2023joint}, energy- and arrival-aware weighted aggregation \cite{ozan2022ehfl}, online transceiver design for dynamic scheduling \cite{an2024online}, and MDP-based approaches for joint scheduling and power control \cite{zhang2024mdp}. 
However, existing approaches often overlook the impact of user data distributions in energy-harvesting OTA FL systems operating over wireless fading channels.

Studies on FL with non-independent and identically distributed (non-i.i.d.) data show that heterogeneous local datasets significantly impact model accuracy and convergence \cite{zhao2018noniid, li2019convergence}. 
These highly non-i.i.d. data distributions can cause users and the system to drift toward different, often conflicting objectives, naturally leading to the formation of clusters with similar data characteristics and learning goals.
The key challenge is that energy availability and data heterogeneity interact. EH constraints determine which users can participate, while non-i.i.d. data determines whether the set of participants is representative of all users. In OTA FL, this issue is further complicated because the PS observes only superimposed updates rather than individual client updates. This motivates the use of a user-cluster structure as a compact representation of user diversity, which can guide energy-aware scheduling when a single global model is desired and define cluster-level learning objectives when personalization is needed.

Several studies have explored how to leverage inter-user relationships to improve convergence and performance in federated learning. 
For global training, the authors in \cite{fraboni2021clusteredsampling} propose clustered sampling, a user selection strategy that groups users based on the cosine similarity of their updates and samples a diverse subset across clusters. This approach encourages representative participation and leads to smoother convergence. Similarly, federated averaging with diverse user selection (DivFL) selects a small but diverse subset of users in each round to reduce communication overhead while approximating the effect of global aggregation \cite{balakrishnan2022diverse}. 
The aforementioned studies are based on the premise that users with similar data distributions tend to produce similar model updates, thereby enabling inference of characteristics of the underlying local data from these updates. While this similarity can potentially lead to privacy leakage, it can also be leveraged to improve model performance and accelerate convergence in FL.

The same inter-user similarity has also motivated clustered and personalized FL methods. 
In clustered federated learning (CFL), users with aligned learning objectives are partitioned into clusters, and separate models are trained for each cluster rather than a single global model \cite{Sattler2021CFL, Ghosh2022CFL}. 
This improves personalization under data heterogeneity, but it also introduces additional communication and computation costs, especially when multiple cluster models must be trained and updated separately. Hence, CFL in wireless communication setups is gaining traction as a promising research direction, particularly in light of the challenges posed by data heterogeneity and limited communication resources  \cite{Sami2024CFL, Lin2024CFL, Li2025PFL}. In \cite{Sami2024CFL}, a clustered OTA FL framework is proposed that leverages beamforming and gradient compression to train multiple models. Meanwhile, \cite{Lin2024CFL} introduces a bi-level optimization approach to jointly optimize global and personalized local models. Additionally, \cite{yao2025multitask} presents a hybrid beamforming framework for OTA multitask FL that aims to learn task-specific models.

Existing works typically exploit user similarity for a specific learning objective: diversity-based client selection methods use it to improve the representativeness of the participating users for single-model training, whereas clustered and personalized FL methods use it to train separate models for different user groups. 
However, these two roles of user similarity have largely been studied separately. 
In contrast, we study user-cluster structure as a common abstraction for EH OTA-FL and show how it can be used in two complementary operating modes depending on the system objective.
When a single global model is desired, the cluster structure guides energy-aware and diversity-aware scheduling to reduce participation bias. 
When personalization is desired, the same cluster structure defines cluster-level learning objectives and OTA recovery targets, enabling simultaneous training of multiple cluster-specific models exploiting the data bias. 
This perspective is particularly important in OTA-FL with EH devices, where intermittent participation, superimposed update observations, limited CSI, and communication efficiency must be handled jointly.

In this paper, we develop a unified cluster-aware OTA-FL framework for EH mobile devices with highly heterogeneous data distributions. The key idea is to treat the cluster structure as a common mechanism that can support different learning objectives under the same wireless and energy-limited setting. When the system objective is to train a single global model, the cluster structure is used to guide diversity-aware scheduling. When the objective is personalization, the same cluster structure is used to define cluster-level learning objectives and OTA recovery targets, enabling the PS to train multiple cluster-specific models through simultaneous transmissions over the wireless MAC. 
For the global model setting, we propose scheduling methods that leverage users’ data characteristics to reduce redundant transmissions and improve energy efficiency. These methods exploit inter-user relationships induced by underlying data distributions.
Initially, we assume that user data distributions are known at the server and show that incorporating this information, along with users’ battery levels, into the scheduling strategy, referred to as entropy-based scheduling, can enhance global learning performance and reduce system overhead. We then show that, even without explicit knowledge of user data distributions, the PS can infer user characteristics from global updates using the proposed least-squares-based user representation estimation approach.
The estimation process does not recover the exact updates; instead, it yields a representation that reflects the overall direction of each user’s updates across multiple iterations. These inferred patterns can then be used to guide effective user scheduling and improve overall learning efficiency.

In the personalized setting, where users prioritize models that better reflect their own data distributions rather than a single global model averaged across all devices, which may be suboptimal for certain users, we aim to provide more personalized models for each cluster.
Accordingly, we study CFL approaches in the OTA FL setup, in the presence of EH MDs with different goals or interests. The primary goal is to train a dedicated model for each cluster to better capture their specific preferences, referred to as personalized clustered federated learning, while simultaneously serving all clusters using a single parameter server through OTA transmission to ensure communication efficiency. To enable simultaneous serving across clusters, we propose different combining methods at the server side, each tailored to a specific level of available CSI. 
The proposed CFL framework enables reliable cluster-level model recovery and enhances personalization performance under practical OTA and energy-harvesting constraints.

Our main contributions are summarized as follows:
\begin{itemize}
\item We study over-the-air federated learning with energy-harvesting mobile devices under highly heterogeneous data distributions, and develop a unified cluster-aware framework that either mitigates heterogeneity-induced bias or exploits heterogeneity for personalized learning, depending on system objectives.
%and derive cluster-aware strategies to mitigate bias caused by partial participation and limited energy availability.

\item We provide a convergence analysis for the proposed OTA-FL 
framework with EH devices, characterizing the limiting error under 
decaying stepsizes and the steady-state error floor under constant 
stepsizes. Both results explicitly decompose the bound into 
contributions from OTA channel noise, data heterogeneity, and 
gradient approximation error, providing a principled theoretical 
foundation for the scheduling and combining strategies proposed 
subsequently.

\item For the global model training, we propose diversity-aware user scheduling methods that exploit user data characteristics and energy states. Specifically, we develop entropy-based scheduling for known data distributions and a least-squares–based inference framework that estimates user representations directly from aggregated OTA signals when data distributions are unknown.

\item For personalized training, we introduce a personalized clustered federated learning framework that trains separate models for naturally aligned user clusters and enables simultaneous OTA aggregation across clusters using a single parameter server. We develop combining techniques tailored to different levels of CSI, supporting communication-efficient multi-model training with blind transmitters.

\item Extensive simulations on MNIST, FMNIST, and CIFAR-10 demonstrate that the proposed cluster-aware scheduling and OTA-based CFL approaches significantly improve fairness or personalization while reducing communication overhead in EH wireless FL systems.
\end{itemize}

The paper is organized as follows. Section \ref{sec2} introduces the OTA FL setup with EH MDs. Section \ref{sec3} presents the proposed update estimation and scheduling policies to achieve unbiased and diverse client participation in FL under both known and unknown data distributions. In Section \ref{sec4}, we first introduce a clustered personalized FL approach to train multiple personalized models based on user preferences, and then extend it to different levels of CSI. Section \ref{sec5} provides numerical results, and Section \ref{conc} concludes the paper.

\textit{Notations:} For vectors $\mathbf{x}$ and $\mathbf{y}$ of equal dimension, $\mathbf{x} \circ \mathbf{y}$ denotes the element-wise (Hadamard) product. We define $[i] \triangleq \{1, \dots, i\}$. The notation $\|\cdot\|_F$ represents the Frobenius norm, and $\operatorname{Tr}(\cdot)$ denotes the trace operator.

% ------------------------------------------------------------
% ------------------------------------------------------------
% ------------------------------------------------------------

% ------------------------------------------------------------
% ------------------------------------------------------------
% ------------------------------------------------------------

% \section{System Model and Preliminaries}
\section{\MakeUppercase{System Model and Preliminaries}}
\label{sec2}

We begin by describing the general system model used throughout this paper, which serves as the basis for the subsequent analyses and proposed methods. We consider an OTA FL setup with $M$ EH single-antenna MDs, each holding locally distributed and potentially heterogeneous datasets. The devices transmit their local model updates to a central parameter server through a fading multiple-access channel. It is assumed that there is no CSI at the transmitters (CSIT), while the PS, equipped with $K$ antennas, uses aggregated channel information to coherently align and recover the received signals, exploiting the OTA aggregation property of the MAC channel.

In FL, the primary objective is to minimize a global loss function, denoted as $F(\boldsymbol{\theta })$, collaboratively across $M$ devices, where $\boldsymbol{\theta } \in \mathbb{R}^{2N}$ represents the parameters of the global model to be optimized. The global loss function is defined as
\begin{equation}\label{gl_loss}
F(\boldsymbol{\theta }) =  \sum_{m=1}^{M} \frac{\left|\mathcal{B}_{m}\right|}{B} F_{m}(\boldsymbol{\theta}),
\end{equation}
where $\boldsymbol{\theta}$ represents the global model parameters, $\mathcal{B}_{m}$ is the local dataset of the $m$-th user, $m \in [M] $, where $[M] \triangleq \{1, \dots, M\}$, and $ B \triangleq \sum_{m=1}^{M} \left| \mathcal{B}_{m} \right| $. Also, $F_{m}(\boldsymbol{\theta})$ represents the average empirical local loss of the $m$-th user, which is
\begin{equation} \label{lc_loss}
    F_{m} \left ({\boldsymbol {\theta } }\right) = \frac {1}{\left| \mathcal{B}_{m} \right|} \sum \limits _{{u} \in \mathcal {B}_{m}} f \left ({\boldsymbol {\theta }, \boldsymbol{u} }\right),
\end{equation}
with $f(\boldsymbol{\theta }, \boldsymbol{u})$ denoting the empirical loss function corresponding to the data sample $\boldsymbol{u}$ in the local dataset $B_m$.

In FL with EH devices, unlike traditional FL, limited energy availability can leave some users without sufficient energy to perform local computations or transmit. Consequently, contributions will come only from MDs with sufficient energy and will be selected in accordance with the adopted scheduling policy. At each global iteration $t$, the PS broadcasts the latest global model, $\boldsymbol{\theta }(t)$. In response, the selected MDs perform $\tau$ local stochastic gradient descent (SGD) iterations to minimize their individual local loss functions, $F_m(\boldsymbol{\theta })$, for $m \in \mathcal{S}(t)$, where $\mathcal{S}(t)$ is the set of scheduled users in the $t$-th global iteration. Subsequently, the model updates obtained by mobile users are transmitted back to the PS, contributing to the global learning process.

To compute the local model updates, the $m$-th user (for the $i$-th local and $t$-th global iteration) employs the following update rule: 
\begin{equation} \label{lc_update_rule}
    \boldsymbol{\theta}_{m}^{i+1}(t)=\boldsymbol{\theta}_{m}^{i}(t)-\eta_{m}^{i}(t)\nabla F_{m}(\boldsymbol{\theta}_{m}^{i}(t), \xi_{m}^{i}(t)),
\end{equation}
where $i \in [\tau]$, $\eta_{m}^{i}(t)$ is the learning rate and $\nabla F_{m}\left(\boldsymbol{\theta}_{m}^{i}(t), \xi_{m}^{i}(t)\right)$ represents the stochastic gradient estimate for the $\boldsymbol{\theta}_{m}^{i}(t)$ and the local mini-batch sample $\xi_{m}^{i}$ randomly chosen from the local dataset $\mathcal{B}_{m}$.

After the local SGD steps, the $m$-th user computes the model update, which is aimed to be shared with the PS as 
\begin{equation} \label{lc_update}
    \Delta\boldsymbol{\theta}_{m}(t)=\boldsymbol{\theta}_{m}^{\tau}(t)-\boldsymbol{\theta}_{m}^{1}(t).
\end{equation}

Using OTA transmission over a fading MAC, the received signal at the $k$-th antenna of the PS at the $t$-th global iteration is given as
\begin{equation} \label{rec_signal_0}
    \boldsymbol{y}_{PS,k}(t)=\sum_{m\in \mathcal{S}(t)}\boldsymbol{h}_{m,k}(t)\circ \boldsymbol{x}_{m}(t)+\boldsymbol{z}_{PS,k}(t),  
\end{equation}
where $\boldsymbol{x}_{m}(t)$ is the signal transmitted by the $m$-th user with dimension $N$, and $\boldsymbol{h}_{m,k}(t) \in \mathbb{C}^{N}$ is the independent and identically distributed (i.i.d.) channel gains from the $m$-th user to the $k$-th antenna with complex Gaussian entries ${h}_{m,k}^{n}(t) \sim \mathcal{C N}(0, \sigma_{h}^{2})$.  Similarly, ${z}_{PS,k}^{n}(t)$ denotes the $n$-th entry of the channel noise, $\boldsymbol{z}_{PS,k}(t)$, which is i.i.d. circularly symmetric white Gaussian noise (AWGN), i.e., it is distributed according to $\mathcal{C N}(0, \sigma_{z}^{2})$.

The PS uses the received signals from the $K$ antennas to update the global model as
\begin{equation} \label{gl_update_part}
    \boldsymbol{\theta}_{PS}(t+1)=\boldsymbol{\theta}_{PS}(t)+ \Delta \hat{\boldsymbol{\theta}}_{PS}(t),
\end{equation}
where $\boldsymbol{\theta }_{\text{PS}}(t)$ represents the global model vector at global iteration $t$ and $\Delta \hat{\boldsymbol{\theta}}_{PS}(t)$ is the estimate of the average of the local updates. Note that if there were no noise or fading, the average of the local updates would be
\begin{equation} \label{gl_update_error_free}
    \Delta \boldsymbol{\theta}_{PS}(t)=\frac{1}{\left| \mathcal{S}(t) \right| }\sum_{m\in \mathcal{S}(t)}\Delta\theta_{m}(t). 
\end{equation}

\begin{figure*}[t]
    \centering
        \centering
        \includegraphics[trim={2.5cm 2.5cm 2.5cm 5.3cm},clip,width=1\textwidth]{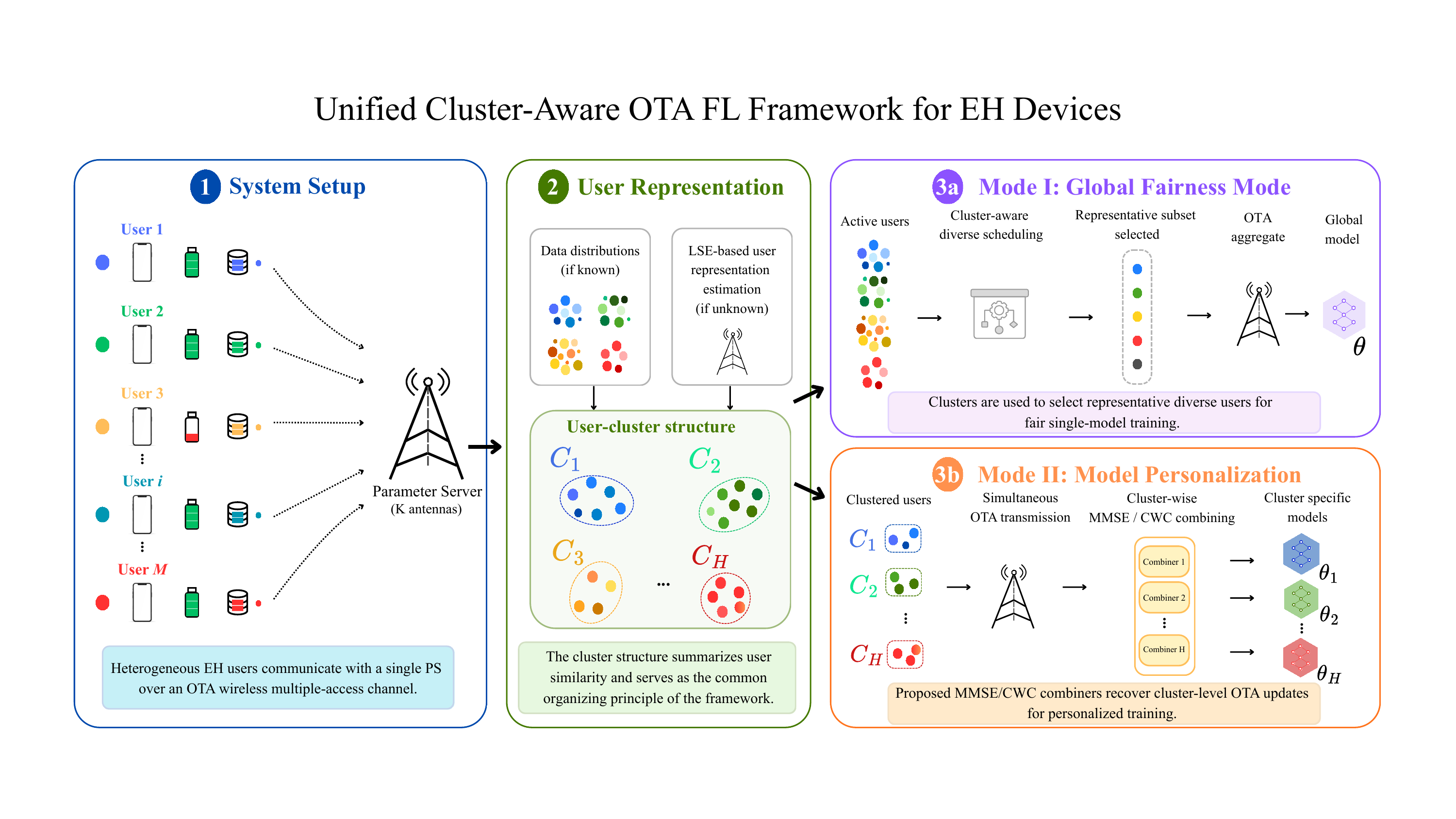}
        \vspace{-0.7cm}
        % \captionsetup{font=scriptsize}
\caption{Unified cluster-aware OTA-FL framework for EH devices. 
Heterogeneous EH users communicate with a single PS over an OTA wireless multiple-access channel, and their data/update similarities define a user-cluster structure. 
The framework uses this structure in two operating modes: Mode I performs cluster-aware diverse scheduling among active users to train a fair global model, while Mode II trains cluster-specific personalized models by recovering cluster-level OTA updates using the proposed MMSE/CWC combiners. }
        \label{fig:overview}
\end{figure*}
% \vspace{-0.7cm}

Building on the described general FL setup, we consider more practical scenarios in which the MDs experience highly heterogeneous data distributions, have limited energy due to energy harvesting, and operate under the no-CSIT assumption.
We consider a unified cluster-aware OTA-FL framework for EH devices, as illustrated in Fig.~\ref{fig:overview}, in which heterogeneous EH users communicate with a single PS over an OTA wireless multiple-access channel and participate intermittently based on their available energy. 
Their heterogeneous local data induce a natural user-cluster structure, which serves as the common basis for both operating modes of the proposed framework.
The proposed framework uses this cluster structure in two complementary operating modes. 
In Mode I, the global fairness mode, clusters guide diverse scheduling among active users to select a representative subset for fair single-model training. 
In Mode II, the model personalization mode, clusters define cluster-specific learning objectives and propose combiners that recover cluster-level OTA updates from simultaneous transmissions, thereby training personalized models at a single PS.

\subsection{\MakeUppercase{EH Devices}}

We study FL with OTA for EH devices with unit-sized batteries. At each global iteration, devices harvest energy with varying success and store it for future use, following the harvest-store-use approach \cite{EHsensors}. Surplus energy is wasted if the battery is full when energy arrives. Local SGD and update transmissions consume one energy unit per global iteration, highlighting stochastic energy availability as a key constraint. In this work, the EH constraint is modeled through device availability and participation, i.e., whether a user has sufficient harvested energy to perform local computation and update transmission in a given iteration, while transmit-power adaptation is beyond the scope of the considered model.

We consider a Bernoulli energy arrival process where, at each global iteration $t$, the $m$-th user receives unit energy with probability $p_{e}^{m}(t)$. Active users are those who have sufficient energy to participate in a given iteration. In the presence of scheduling, policies select clients from the pool of active users, whereas those who are active but unscheduled store their energy for future iterations. Energy arrivals are shared with the PS after each iteration, as also adopted in \cite{an2024online, chen2023joint}.

\subsection{\MakeUppercase{OTA FL with EH Devices with Unit Battery}}

We next consider FL with OTA aggregation, where only active users contribute to the iterations due to limited energy arrivals.
In this case, the model updates of the scheduled users, $\Delta \boldsymbol{\theta}_{m}^{cx}(t) \in \mathbb{C}^{N}$, $m\in \mathcal{S}(t)$, are transmitted as complex signals at iteration $t$, represented as 
\begin{subequations}
\begin{align} 
    \Delta \boldsymbol{\theta}_{m}^{r e}(t) &\triangleq \left[\Delta \theta_{m}^{1}(t), \Delta \theta_{m}^{2}(t), \ldots, \Delta \theta_{m}^{N}(t)\right]^{T} \label{complex_update_re}, \\
    \Delta \boldsymbol{\theta}_{m}^{i m}(t) &\triangleq \left[\Delta \theta_{m}^{N+1}(t), \Delta \theta_{m}^{N+2}(t) , \ldots, \Delta \theta_{m}^{2N}(t)\right]^{T} \label{complex_update_im}, \\
    \Delta \boldsymbol{\theta}_{m}^{cx}(t) &\triangleq \Delta \boldsymbol{\theta}_{m}^{r e}(t) + j \Delta \boldsymbol{\theta}_{m}^{i m}(t)
    \label{complex_update}.
\end{align}
\end{subequations}

Using the channel output at each antenna as given in \eqref{rec_signal_0}, the PS combines the signals from the $K$ antennas using the sum of the channel gains from the scheduled users as follows:
\begin{equation} \label{combined_signal}
    \boldsymbol{y}_{PS}(t)= \frac{1}{K}\sum_{k=1}^{K}\left(\sum_{m\in \mathcal{S}(t)} \boldsymbol{h}_{m,k}(t)\right)^{*}\circ \boldsymbol{y}_{PS,k}(t),
\end{equation}
where the received signal at the $k$-th antenna of the PS, $\boldsymbol{y}_{PS,k}(t)$, corresponding to the transmitted signal $\Delta\boldsymbol{\theta}_{m}^{cx}(t)$'s, is given in (\ref{rec_signal_0}).

The $n$-th symbol of (\ref{combined_signal}) can be partitioned into three parts
\begingroup
\allowdisplaybreaks
\begin{align} 
    y_{PS}^{n}(t)=&\underset{y_{PS}^{n,sig}(t)(\text{signal term})}{\underbrace{\sum_{m\in\mathcal{S}(t)}\left(\frac{1}{K}\sum_{k=1}^{K}\vert h_{m,k}^{n}(t)\vert ^{2}\right)\Delta\theta_{m}^{n,cx}(t)}} \notag\\ & \hspace{-0.9cm}+\underset{y_{PS}^{n,int}(t)(\text{interference term})}{\underbrace{\frac{1}{K}\sum_{m\in\mathcal{S}(t)}\underset{m^{\prime}\neq m}{\sum_{m^{\prime}\in\mathcal{S}(t)}}\sum_{k=1}^{K}(h_{m,k}^{n}(t))^{*}h_{m^{\prime},k}^{n}(t)\Delta\theta_{m^{\prime}}^{n,cx}(t)}} \notag \\ &\hspace{-0.9cm}+\underset{y_{PS}^{n,noise}(t)(\text{noise term})}{\underbrace{\frac{1}{K}\sum_{m\in\mathcal{S}(t)}\sum_{k=1}^{K}(h_{m,k}^{n}(t))^{*}z^{n}_{PS,k}(t)}} \label{rec_signal}.
\end{align}
\endgroup

As shown in \cite{amiri2020BFL}, the variance of the interference coefficient $\Delta \theta_{m^{\prime}}^{n, cx}(t)$ decreases with the number of antennas $K$. Hence, a sufficient number of antennas allows for accurate estimation and recovery of noisy aggregated updates as follows
\begin{subequations} \label{global_update}
\begin{align}
    \Delta\hat{\boldsymbol{\theta}}_{PS}^{n}(t)&=\frac{1}{ \left| \mathcal{S}(t) \right| \sigma_{h}^{2}}{\text{Re}}\{y_{PS}^{n}(t)\}, \\
    \Delta\hat{\boldsymbol{\theta}}_{PS}^{n+N}(t)&=\frac{1}{ \left| \mathcal{S}(t) \right| \sigma_{h}^{2}}{\text{Im}}\{y_{PS}^{n}(t)\},
\end{align}
\end{subequations}
with $n \in [N] $, to update the global model, as given in (\ref{gl_update_part}).

% ------------------------------------------------------------
% ------------------------------------------------------------
% ------------------------------------------------------------

% ------------------------------------------------------------
% ------------------------------------------------------------
% ------------------------------------------------------------

\section{\MakeUppercase{Update Estimation and Scheduling for OTA FL with EH Devices for Fair Global Model Training}}
\label{sec3}

In this section, we demonstrate that in setups that aim to train an unbiased single global model under highly heterogeneous conditions, users’ data distributions play a critical role in the scheduling procedure. This is especially important for highly non-i.i.d. data, where user scheduling can help minimize the error bound due to partial participation by EH devices. We first perform a convergence analysis for the OTA FL system with EH mobile devices. The results of this analysis are then utilized in our proposed approaches to minimize the error bound. Next, we propose an entropy-based user scheduling policy for known data distributions. We then extend our discussion to the case of unknown data distributions and show that the characteristics of the data distribution can be estimated via least-squares on user representations for scheduling.

% \vspace{-1cm}
\subsection{\MakeUppercase{Convergence Analysis}}
 
In this section, we provide a convergence analysis for the OTA FL with EH devices and no CSIT by upper-bounding the distance between our model estimate and the optimal model.

The optimal solution minimizing (\ref{gl_loss}) is $\boldsymbol{\theta}^{*} \triangleq \arg \min_{\boldsymbol{\theta}} F(\boldsymbol{\theta})$, with optimal loss $F^{*} = F\left(\boldsymbol{\theta}^{*}\right)$. For user $m \in [M]$, the optimal local model is $\boldsymbol{\theta}_{m}^{*} \triangleq \arg \min_{\boldsymbol{\theta}_{m}} F_{m}(\boldsymbol{\theta}_{m})$, with corresponding loss $F_{m}^{*} = F\left(\boldsymbol{\theta}_{m}^{*}\right)$.

% \subsubsection{Preliminaries}
\noindent\textit{1) Preliminaries:}
The amount of bias and heterogeneity across devices is represented by the following parameter
$\Gamma = F^{*} - \sum_{m=1}^{M} \frac{\left| \mathcal{B}_{m} \right|}{B} F^{*}_{m}.$
A high magnitude of $\Gamma$ indicates a significant non-i.i.d. data distribution, while $\Gamma \to 0$ reflects near i.i.d. data.

We consider the same learning rate across users and local iterations, \(\eta_m^i(t) = \eta(t)\), but allow it to vary between global iterations. The local model update at the $m$-th user for global iteration $t$ and local iteration $i \in [\tau]$ is given as 
\begin{equation} \label{updatess}
    \boldsymbol {\theta }_{m}^{i+1} (t)\,\,- \boldsymbol {\theta }_{m}^{1} (t)\,\,= - \eta (t)~\sum \limits _{l=1}^{i} \nabla F_{m} \left ({\boldsymbol {\theta }_{m}^{l} (t), \xi _{m}^{l} (t)~}\right).
\end{equation}

To perform a convergence analysis, following existing studies \cite{amiri2020BFL, busra2023, ozan2024}, we assume that the loss functions \(F_1, \ldots, F_M\) are all \(L\)-smooth and \(\mu\)-strongly convex. We note that the strong convexity assumption is adopted mainly to obtain a tractable theoretical characterization of the proposed framework.  In particular, the analysis identifies how OTA aggregation error, partial user participation, and scheduling-induced approximation error contribute to the convergence bound. The numerical experiments with neural networks are intended to empirically assess whether the resulting design insights generalize to non-convex learning models.

Also, it is assumed that the expected squared \(\ell_2\)-norm of the stochastic gradients is bounded; that is, for all \(i \in [\tau]\), \(m \in [M]\), and \(t\), we have $\mathbb{E}_{\xi} \left[ \left\| \nabla F_m \left( \boldsymbol{\theta}_m^i(t), \xi_m^i(t) \right) \right\|_2^2 \right] \leq G^2$.

% \subsubsection{Convergence Rate}
\noindent\textit{2) Convergence Rate:}
For the convergence analysis of OTA FL with EH devices and data heterogeneity, we compare the error-free system model with ours, where only a subset of users participate in each iteration. Using these findings, we will make user-scheduling decisions to minimize discrepancies in the resulting bound. Our main result is as follows.
\begin{theorem}  \label{thm1} For \( 0 < \eta(t) \leq \min \left\{ 1, \frac{1}{\mu \tau} \right\}, \forall t \). We have
\begin{align}\label{conv1}&\hspace {-.5pc}\mathbb {E} \left [{{ \left \|{{ \boldsymbol {\theta } (t)- {\boldsymbol {\theta }}^{*} }}\right \|_{2}^{2} }}\right ] \leq \left({{\prod \limits _{i=0}^{t-1} A(i) }}\right) \left \|{{ {\boldsymbol {\theta }} (0) - {\boldsymbol {\theta }}^{*} }}\right \|_{2}^{2} \notag\\& \qquad\qquad\qquad\qquad\qquad {+\, \sum \limits _{j=0}^{t-1} B(j) \prod \limits _{i=j+1}^{t-1} A(i),}\end{align}
with

\begingroup
\allowdisplaybreaks

\begin{align} \label{conv2} A(i)\triangleq&1 - \mu \eta (i)~\left ({\tau - \eta (i) (\tau - 1) }\right),\\ B(i)\triangleq& \frac { \eta ^{2}(i) \tau ^{2} G^{2}}{K} + \frac {\sigma _{z}^{2}N}{ K {\left| \mathcal{S}(i) \right|} \sigma _{h}^{2}} \notag\\&+ \left ({1+ \mu (1- \eta (i)) }\right) \eta ^{2}(i) G^{2} \frac {\tau (\tau -1)(2\tau -1)}{6} \notag\\&+ \eta ^{2}(i) (\tau ^{2} + \tau -1)  G^{2} + 2 \eta (i) (\tau - 1) \Gamma \notag\\
& +\left( \eta^2(i) \tau(\tau-1)LG + \eta(i)\tau \epsilon  \right)^{2} \notag\\
&+ \left( \eta^2(i) \tau(\tau-1)LG + \eta(i)\tau \epsilon \right) c,\label{conv3} \end{align} 
\endgroup
for gradient approximation error $\epsilon$ and some constant $c \geq 0$.
\label{theorem1}
\end{theorem}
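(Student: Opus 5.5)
We follow the one-step recursion technique of \cite{amiri2020BFL, busra2023, ozan2024}: we show that for every global iteration
\begin{equation*}
\mathbb{E}\big[\|\boldsymbol{\theta}(t+1)-\boldsymbol{\theta}^*\|_2^2\big] \le A(t)\,\mathbb{E}\big[\|\boldsymbol{\theta}(t)-\boldsymbol{\theta}^*\|_2^2\big] + B(t),
\end{equation*}
and then unroll this over $t$, which gives \eqref{conv1} directly. To obtain the one-step bound, we introduce two auxiliary aggregates. The first is the error-free update over all users, $\Delta\boldsymbol{\theta}^{\mathrm{full}}(t)=\sum_m \frac{|\mathcal{B}_m|}{B}\Delta\boldsymbol{\theta}_m(t)$. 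The second is the noiseless partial-participation aggregate $\Delta\boldsymbol{\theta}_{PS}(t)$ from \eqref{gl_update_error_free}. We then write
\begin{align*}
\boldsymbol{\theta}(t+1)-\boldsymbol{\theta}^* ={}& \big(\boldsymbol{\theta}(t)+\Delta\boldsymbol{\theta}^{\mathrm{full}}(t)-\boldsymbol{\theta}^*\big) + \big(\Delta\boldsymbol{\theta}_{PS}(t)-\Delta\boldsymbol{\theta}^{\mathrm{full}}(t)\big)\\
&+ \big(\Delta\hat{\boldsymbol{\theta}}_{PS}(t)-\Delta\boldsymbol{\theta}_{PS}(t)\big).
\end{align*}
We expand the squared norm of this sum. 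Cross terms involving the OTA error vanish in expectation, because the noise and interference in \eqref{rec_signal} are zero mean given the updates. Here the fading is circularly symmetric and independent across users, and the noise is independent of the channels.

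\textbf{OTA term.} Using \eqref{rec_signal} and \eqref{global_update}, we compute the second moment of the interference and noise contributions after scaling by $1/(|\mathcal{S}|\sigma_h^2)$. The interference coefficients have variance of order $\sigma_h^4/K$. Combined with $\|\Delta\boldsymbol{\theta}_m\|^2\le \eta^2\tau^2G^2$, which follows from \eqref{updatess} and the bounded-gradient assumption, this gives the $\eta^2\tau^2G^2/K$ term. The noise term has variance $|\mathcal{S}|K\sigma_h^2\sigma_z^2$ per entry. Summing over $N$ complex entries and scaling yields $\sigma_z^2N/(K|\mathcal{S}|\sigma_h^2)$. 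The deviation of $\frac1K\sum_k|h_{m,k}^n|^2$ from $\sigma_h^2$ is absorbed into the same $1/K$ term.

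\textbf{Main error-free term.} We follow the standard local-SGD analysis (Li et al., \cite{li2019convergence}, as adapted in \cite{amiri2020BFL}) and expand
\begin{equation*}
\|\boldsymbol{\theta}(t)-\boldsymbol{\theta}^*-\eta\textstyle\sum_{m,i}\tfrac{|\mathcal{B}_m|}{B}\nabla F_m(\boldsymbol{\theta}^i_m,\xi^i_m)\|^2 .
\end{equation*}
The inner product is handled with $\mu$-strong convexity and $L$-smoothness, splitting $F_m(\boldsymbol{\theta}_m^i)-F^*$ as $(F_m(\boldsymbol{\theta}_m^i)-F_m^*)+(F_m^*-F^*)$. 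The second piece, after weighting, produces the $2\eta(\tau-1)\Gamma$ term, using $\eta\le 1/(\mu\tau)$ to keep the signs right. The drift $\|\boldsymbol{\theta}_m^i-\boldsymbol{\theta}(t)\|^2\le \eta^2(i-1)^2G^2$, summed over $i$, gives $\sum_{i}(i-1)^2=\tau(\tau-1)(2\tau-1)/6$. This is the origin of the $(1+\mu(1-\eta))$ term. The squared-gradient terms give $\eta^2(\tau^2+\tau-1)G^2$. What remains is the contraction factor $1-\mu\eta(\tau-\eta(\tau-1))=A(t)$, with $\eta\le1$ ensuring that $A(t)\ge 0$.

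\textbf{Scheduling term (main obstacle).} The difference $\Delta\boldsymbol{\theta}_{PS}-\Delta\boldsymbol{\theta}^{\mathrm{full}}$ is deterministic given the schedule and is not zero mean, so it cannot simply be discarded. We decompose each local update as $-\eta\tau\nabla F_m(\boldsymbol{\theta}(t))$ plus a drift correction. By $L$-smoothness and the drift bound above, this correction is at most $\eta^2\tau(\tau-1)LG$ in norm. The definition of $\epsilon$ enters here: we take it to bound $\|\frac1{|\mathcal{S}|}\sum_{m\in\mathcal{S}}\nabla F_m-\sum_m\frac{|\mathcal{B}_m|}{B}\nabla F_m\|$, the discrepancy the scheduler tries to minimize. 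Together these give
\begin{equation*}
\|\Delta\boldsymbol{\theta}_{PS}-\Delta\boldsymbol{\theta}^{\mathrm{full}}\|\le \eta^2\tau(\tau-1)LG+\eta\tau\epsilon .
\end{equation*}
Squaring produces the squared term in $B(t)$. For the cross term with the main term, we use Cauchy--Schwarz and bound the norm of the main term by a constant $c$, using the bounded-gradient assumption and boundedness of the iterates. This yields the linear term in $B(t)$.

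The delicate part is making this step rigorous: the cross term depends on $\|\boldsymbol{\theta}(t)-\boldsymbol{\theta}^*\|$, so $c$ must be justified, for instance through a uniform bound on the iterate distance. This is why the statement writes ``some constant $c\ge0$''.
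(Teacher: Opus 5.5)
Your proposal follows the paper's proof essentially step for step: the three-piece split is exactly its decomposition through $\boldsymbol{w}(t+1)$ (noiseless partial participation) and $\boldsymbol{v}(t+1)$ (full participation), your OTA, main, scheduling-square and scheduling-cross bounds are its Lemmas~1--4 (imported from \cite{amiri2020BFL} and \cite{balakrishnan2022diverse}), and unrolling gives the stated bound; using $|\mathcal{B}_m|/B$ weights instead of $1/M$ makes no difference for equal dataset sizes. Two refinements leave the route unchanged: the constant $c$ you flag already follows from the assumptions, because strong convexity and the gradient bound at $\boldsymbol{\theta}_m^1(t)=\boldsymbol{\theta}(t)$ give $\|\boldsymbol{\theta}(t)-\boldsymbol{\theta}^*\|_2\le\|\nabla F(\boldsymbol{\theta}(t))\|_2/\mu\le G/\mu$, so $c=4G/\mu$ works (using $\eta\tau\le 1/\mu$); and $L$-smoothness alone cannot bound your drift correction by $\eta^2\tau(\tau-1)LG$, since with stochastic local gradients it also carries minibatch noise (which the paper's Lemma~3 likewise glosses over), a point you can close by letting $\epsilon$ bound the per-step discrepancy of the stochastic gradients actually computed, which yields $\|\Delta\boldsymbol{\theta}_{PS}-\Delta\boldsymbol{\theta}^{\mathrm{full}}\|_2\le\eta\tau\epsilon$ directly.
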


\begin{proof} %%%%%%%%%%%%%%%%%% Thm1 Proof
See Appendix \ref{appendixA}.
\end{proof}

We note that A$(i)$ represents the decay rate of the distance from the initial starting point to the optimal solution. In $B(i)$, the first two terms represent the transmission error due to the wireless fading MAC with blind transmitters, and the third and fourth terms are related to federated averaging. Additionally, we emphasize that the last two terms represent the error caused by partial user participation, similar to \cite{balakrishnan2022diverse}, with $\epsilon$ in (\ref{conv3}) being the gradient approximation error defined as follows 
\begin{align} \label{epsilon}
\epsilon \!\triangleq \!\left\| %\left( 
\frac{1}{M} \!\!\sum_{m=1}^{M} \!\nabla F_m(\theta_m(t) \!)
\!-\!
\frac{1}{\left| S(t) \right|} \!\!\sum_{m \in \mathcal{S}(t)}  \!\!\!\nabla F_m(\theta_m(t)\!)   %\right)
\right\|_2 \!.
\end{align} 

% ---------------------------------------------------------------
% Corollary 1
% ---------------------------------------------------------------
\begin{cor}[Convergence under Decaying Stepsize]
\label{cor1}
Under the conditions of Theorem~\ref{thm1}, assume $|\mathcal{S}(t)| \geq S_{\min} > 0$ for all $t$, and that there exists a uniform bound $\epsilon(t) \leq \bar{\epsilon}$ for all $t$. Let the stepsize be $\eta(t) = c_0/(t+t_0)$, where $t_0$ is chosen sufficiently large so that $\eta(0) \leq \min\!\left\{1,\frac{1}{\mu\tau}\right\}$. Define $q_0 \triangleq \tau - (\tau-1)\eta(0) > 0$, $\alpha \triangleq \mu q_0 c_0$, and $\kappa_\alpha \triangleq (1 + 1/t_0)^\alpha$, and require $\alpha > 1$, i.e., $c_0 > 1/(\mu q_0)$.

\noindent\textbf{(Finite $K$):} The bound in Theorem~\ref{thm1} satisfies
\begin{align}
    &\mathbb{E}\!\left[\|\boldsymbol{\theta}(t) - \boldsymbol{\theta}^*\|_2^2\right]
    \leq \left(\frac{t_0}{t+t_0}\right)^{\alpha}
    \|\boldsymbol{\theta}(0)-\boldsymbol{\theta}^*\|_2^2 \notag\\
    &\quad + \underbrace{\frac{\sigma_z^2 N}{K S_{\min} \sigma_h^2}}_{\triangleq\, B_0}
    \cdot\Phi(t)
    + \frac{\kappa_\alpha B_1}{\mu q_0}
    \left[1 - \left(\frac{t_0}{t+t_0}\right)^{\alpha}\right]
    + R_{2,K}(t),
    \label{cor1:finiteK}
\end{align}
where $\Phi(t) \triangleq \sum_{j=0}^{t-1}\prod_{i=j+1}^{t-1}A(i)$, $B_1$ and $B_2$ are defined in~\eqref{eq:B1}--\eqref{eq:B2}, and $R_{2,K}(t)$ is defined in~\eqref{eq:R2K}. Since $A(i) = 1 - \Theta(1/i)$, the accumulated factor satisfies $\Phi(t) = \Theta(t)$, so the $B_0$ term grows linearly and the derived bound does not establish convergence to a finite neighborhood for fixed finite $K$. Such a guarantee would require the OTA noise contribution to vanish or decay with $t$, for example through increasing $K$, increasing transmit power, or an appropriately scaled communication model.

\noindent\textbf{(Large $K$):} As $K \to \infty$, $B_0 \to 0$, $\Phi(t)$ contributes nothing, and $R_{2,K}(t) \to R_2(t)$ as defined in~\eqref{eq:R2}, so the bound reduces to
\begin{align}
    &\mathbb{E}\!\left[\|\boldsymbol{\theta}(t) - \boldsymbol{\theta}^*\|_2^2\right]
    \leq \left(\frac{t_0}{t+t_0}\right)^{\alpha}
    \|\boldsymbol{\theta}(0)-\boldsymbol{\theta}^*\|_2^2 \notag\\
    &\quad + \frac{\kappa_\alpha B_1}{\mu q_0}
    \left[1 - \left(\frac{t_0}{t+t_0}\right)^{\alpha}\right]
    + R_2(t),
    \label{cor1:largeK}
\end{align}
where $R_2(t) = O(1/t)$. As $t \to \infty$, both the initial error term and $R_2(t)$ vanish, and the model converges to a residual neighborhood satisfying:
\begin{align}
    \limsup_{t\to\infty}\, \mathbb{E}\!\left[\|\boldsymbol{\theta}(t)-\boldsymbol{\theta}^*\|_2^2\right]
    \leq \frac{\kappa_\alpha B_1}{\mu q_0},
    \label{cor1:limsup}
\end{align}
whose size is governed solely by data heterogeneity $\Gamma$ and gradient approximation error $\bar{\epsilon}$ through $B_1 = 2(\tau-1)\Gamma + \tau\bar{\epsilon} c$.
\end{cor}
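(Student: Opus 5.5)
Starting from the recursion bound of Theorem~\ref{thm1}, we will plug in $\eta(t)=c_0/(t+t_0)$ and control the product and the weighted sum separately. For the contraction factor, we use $\eta(i)\le\eta(0)$ to get $\tau-\eta(i)(\tau-1)\ge q_0$, so $A(i)\le 1-\mu q_0 c_0/(i+t_0)=1-\alpha/(i+t_0)\le \exp(-\alpha/(i+t_0))$. Comparing the sum with $\int \mathrm{d}x/(x+t_0)$ gives $\prod_{i=j+1}^{t-1}A(i)\le \big((j+1+t_0)/(t+t_0)\big)^{\alpha}$, and in particular $\prod_{i=0}^{t-1}A(i)\le (t_0/(t+t_0))^{\alpha}$. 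This handles the initial-error term directly.

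Next we will split $B(j)$ by its order in $\eta(j)$, after using $|\mathcal{S}(j)|\ge S_{\min}$ and $\epsilon(j)\le\bar\epsilon$. The $\eta$-independent OTA noise term is bounded by $B_0=\sigma_z^2N/(KS_{\min}\sigma_h^2)$, which produces exactly $B_0\Phi(t)$. The terms that are linear in $\eta$ give $\eta(j)B_1$ with $B_1=2(\tau-1)\Gamma+\tau\bar\epsilon c$. All remaining terms are $O(\eta^2)$: the $G^2/K$ term, the local-drift terms, the square $(\eta^2\tau(\tau-1)LG+\eta\tau\bar\epsilon)^2$, and the cross term $\eta^2\tau(\tau-1)LGc$. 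We collect these as $\eta(j)^2B_2$, with $B_2$ depending on $K$ through the $\tau^2G^2/K$ piece; this $K$-dependence is why the remainder is labeled $R_{2,K}$.

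For the linear part, we will bound
\[
\sum_{j}\frac{c_0B_1}{j+t_0}\Big(\frac{j+1+t_0}{t+t_0}\Big)^{\alpha}.
\]
Using $(j+1+t_0)/(j+t_0)\le 1+1/t_0$, this is at most $\kappa_\alpha c_0B_1 (t+t_0)^{-\alpha}\sum_j (j+t_0)^{\alpha-1}$. Integral comparison gives $\sum_j(j+t_0)^{\alpha-1}\le ((t+t_0)^\alpha-t_0^\alpha)/\alpha$ for $\alpha\ge1$, and after absorbing boundary terms into the remainder the bound becomes $\frac{\kappa_\alpha B_1}{\mu q_0}[1-(t_0/(t+t_0))^\alpha]$. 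For the quadratic part, the same device gives $\kappa_\alpha c_0^2B_2(t+t_0)^{-\alpha}\sum_j (j+t_0)^{\alpha-2}$, which is $O(1/t)$ because $\alpha>1$ (the case $\alpha=2$ gives a log factor that we will handle, or we will strengthen to $\alpha>2$ if needed). We define $R_{2,K}(t)$ to be this quantity. For the finite-$K$ statement we then observe that $1-A(i)=\Theta(1/i)$, so $\prod_{i>j}A(i)\asymp ((j+t_0)/(t+t_0))^{\alpha'}$, the sum $\Phi(t)$ is $\Theta(t)$, and the $B_0$ term grows linearly.

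For large $K$ we let $K\to\infty$ with $t$ fixed. Then $B_0\to0$, $B_2$ loses its $1/K$ piece, and $R_{2,K}\to R_2=O(1/t)$. Taking $\limsup_t$ kills both $(t_0/(t+t_0))^\alpha$ and $R_2$, which yields \eqref{cor1:limsup}. We expect the main obstacle to be the careful integral comparisons for the product and the sums. In particular, the direction of the monotonicity of $(x+t_0)^{\alpha-1}$ and the off-by-one shift $j+1$ must be arranged so that the constant comes out exactly as $\kappa_\alpha/(\mu q_0)$ rather than with extra factors; we would first try to absorb any boundary discrepancy into $R_{2,K}$.
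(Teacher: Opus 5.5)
Your proposal is correct and follows the paper's proof essentially step for step. The shared steps are the exponential/integral bound on $\prod A(i)$, the split $B(j)\le B_0+\eta(j)B_1+\eta^2(j)B_2$, the $\kappa_\alpha$ shift, integral comparison of the weighted sums, and the $K\to\infty$ limit. Your comparison $\sum_{j=0}^{t-1}(j+t_0)^{\alpha-1}\le((t+t_0)^\alpha-t_0^\alpha)/\alpha$ holds because the summand is increasing, and it gives the stated bracket exactly, with no boundary terms to absorb.

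One small correction: at $\alpha=2$ the sum $\sum_{j}(j+t_0)^{\alpha-2}$ is just $t$, so there is no log factor; that only occurs at the excluded value $\alpha=1$. Bounding this sum by $t_0^{\alpha-2}+((t+t_0)^{\alpha-1}-t_0^{\alpha-1})/(\alpha-1)$, which holds for every $\alpha>1$, recovers the paper's $R_{2,K}(t)$ in \eqref{eq:R2K} without strengthening to $\alpha>2$.
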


\begin{proof}
See Appendix \ref{appendixB}.
\end{proof}

% ---------------------------------------------------------------
% Remark 1
% ---------------------------------------------------------------
\begin{remark}[Connection to Scheduling Design]
\label{rem:scheduling}
Corollary~\ref{cor1} directly motivates the scheduling policies proposed in the following subsections. In the large-$K$ regime, the asymptotic residual neighborhood is governed solely by $\kappa_\alpha B_1/(\mu q_0)$, where $B_1 = 2(\tau-1)\Gamma + \tau\bar{\epsilon} c$ depends on the gradient approximation error $\bar{\epsilon}$ linearly. Since $\bar{\epsilon}$ measures the discrepancy between the full-participation gradient and the scheduled-subset gradient, reducing it via careful user selection directly shrinks the limiting neighborhood. The $\eta^2$-order terms involving $B_2^\infty$, which contain a quadratic dependence on $\bar{\epsilon}$, contribute only to the transient behavior through $R_2(t) = O(1/t)$ and vanish asymptotically; they nonetheless affect the multiplicative constant of the finite-time error bound, so reducing $\bar{\epsilon}$ improves both the asymptotic neighborhood and the finite-time error bound. This provides a principled theoretical justification for the diversity-aware scheduling strategies developed in the following subsections: by selecting users whose combined update best approximates the full-participation average, the proposed entropy-based and least-squares estimation (LSE)-based policies are designed to reduce, or approximately control, $\bar{\epsilon}$, thereby pushing the residual neighborhood downward. Furthermore, the OTA noise term $B_0 = \sigma_z^2 N/(K S_{\min}\sigma_h^2)$ reveals that ensuring a larger minimum scheduled set size $S_{\min}$ suppresses the channel noise contribution, providing an additional argument for participation-maximizing scheduling under the energy-harvesting constraints of the system.
\end{remark}

% ---------------------------------------------------------------
% Corollary 2
% ---------------------------------------------------------------
\begin{cor}[Steady-State Error under Constant Stepsize]
\label{cor2}
Under the conditions of Theorem~\ref{thm1}, assume that $|\mathcal{S}(t)| \geq S_{\min} > 0$ and $\epsilon(t) \leq \bar{\epsilon}$ for all $t$, and let $\bar{B}$ denote the corresponding uniform upper bound on $B(i)$ obtained by evaluating~\eqref{conv3} with $|\mathcal{S}(i)| = S_{\min}$ and $\epsilon = \bar{\epsilon}$. Let $\eta(t) = \eta$ be constant across all iterations, chosen such that $0 < \eta \leq \min\!\left\{1, \frac{1}{\mu\tau}\right\}$, so that $A \triangleq 1 - \mu\eta(\tau-\eta(\tau-1)) \in [0,1)$ is constant. Then the bound in Theorem~\ref{thm1} satisfies:
\begin{align}
    \mathbb{E}\!\left[\|\boldsymbol{\theta}(t) - \boldsymbol{\theta}^*\|_2^2\right]
    \leq A^t \|\boldsymbol{\theta}(0) - \boldsymbol{\theta}^*\|_2^2
    + \frac{\bar{B}}{1-A},
    \label{cor2:bound}
\end{align}
\begin{figure*}[t]
\normalsize
\begin{align}
    \frac{\bar{B}}{1-A} =\;
    &\underbrace{\frac{\sigma_z^2 N}{K S_{\min} \sigma_h^2\,\mu\eta(\tau-\eta(\tau-1))}}_{\text{OTA channel noise}}
    + \underbrace{\frac{\eta\tau^2 G^2/K + (1+\mu)G^2\eta\tau(\tau-1)(2\tau-1)/6 + \eta(\tau^2+\tau-1)G^2}{\mu(\tau-\eta(\tau-1))}}_{\text{stochastic gradient noise}}
    \notag\\
    &+ \underbrace{\frac{2(\tau-1)\Gamma + \tau\bar{\epsilon} c}{\mu(\tau-\eta(\tau-1))}}_{\text{data heterogeneity and participation bias}}
    + \underbrace{\frac{\eta\left[\tau(\tau-1)LG+\tau\bar{\epsilon}\right]^2 + \eta\,\tau(\tau-1)LGc}{\mu(\tau-\eta(\tau-1))}}_{\text{partial participation}}
    \label{cor2:decomp}
\end{align}
\hrulefill
\vspace*{4pt}
\end{figure*}
where the first term vanishes as $t \to \infty$, and the model converges to a steady-state error floor:
\begin{align}
    \limsup_{t\to\infty}\,
    \mathbb{E}\!\left[\|\boldsymbol{\theta}(t) - \boldsymbol{\theta}^*\|_2^2\right]
    \leq \frac{\bar{B}}{1-A}.
    \label{cor2:floor}
\end{align}
The floor $\bar{B}/(1-A)$ decomposes as in~\eqref{cor2:decomp}, using $1-A = \mu\eta(\tau-\eta(\tau-1))$.
\end{cor}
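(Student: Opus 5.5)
Specializing Theorem~\ref{thm1} to a constant stepsize reduces the recursion to a geometric one. With $\eta(i)=\eta$ for all $i$, the contraction factor $A(i)$ becomes the constant $A = 1-\mu\eta(\tau-\eta(\tau-1))$. The first task is to check that $A\in[0,1)$ under $0<\eta\le\min\{1,1/(\mu\tau)\}$:
\begin{itemize}
\item Since $\eta\le 1$, we have $\tau-\eta(\tau-1)\ge 1>0$, so $1-A>0$.
\item Since $\tau-\eta(\tau-1)\le\tau$ and $\mu\eta\tau\le 1$, we have $\mu\eta(\tau-\eta(\tau-1))\le 1$, so $A\ge 0$.
\end{itemize}
Substituting into \eqref{conv1}, the product $\prod_{i=0}^{t-1}A(i)$ collapses to $A^t$, which gives the first term of \eqref{cor2:bound} directly.

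Next I bound $B(j)$ uniformly by $\bar B$. Every term in \eqref{conv3} is nonnegative; this uses $\Gamma\ge 0$, which holds because $F^*=\sum_m \frac{|\mathcal B_m|}{B}F_m(\boldsymbol\theta^*)\ge\sum_m\frac{|\mathcal B_m|}{B}F_m^*$. The dependence on the varying quantities is monotone:
\begin{itemize}
\item $B(j)$ is decreasing in $|\mathcal S(j)|$, through the term $\sigma_z^2N/(K|\mathcal S(j)|\sigma_h^2)$;
\item $B(j)$ is increasing in $\epsilon$, through $(\eta^2\tau(\tau-1)LG+\eta\tau\epsilon)^2$ and the linear term in $\epsilon$ multiplied by $c\ge 0$.
\end{itemize}
Hence $|\mathcal S(j)|\ge S_{\min}$ and $\epsilon(j)\le\bar\epsilon$ give $B(j)\le\bar B$ for every $j$. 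The accumulated term is then bounded by a geometric series:
\[
\sum_{j=0}^{t-1}B(j)A^{t-1-j}\le\bar B\sum_{k=0}^{t-1}A^k=\bar B\,\frac{1-A^t}{1-A}\le\frac{\bar B}{1-A}.
\]
Together with the previous step this yields \eqref{cor2:bound}. Because $A^t\to 0$, taking $\limsup_{t\to\infty}$ yields \eqref{cor2:floor}.

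The remaining and most error-prone step is the explicit decomposition \eqref{cor2:decomp}. I divide each term of $\bar B$ by $1-A=\mu\eta(\tau-\eta(\tau-1))$:
\begin{itemize}
\item The OTA noise term has no factor of $\eta$, so its denominator keeps the full $\mu\eta(\cdots)$.
\item The $\eta^2$ terms ($\eta^2\tau^2G^2/K$, the $\tau(\tau-1)(2\tau-1)/6$ term, and $\eta^2(\tau^2+\tau-1)G^2$) lose one power of $\eta$, leaving $\eta(\cdot)/(\mu(\tau-\eta(\tau-1)))$.
\item The $\eta$-linear terms, $2\eta(\tau-1)\Gamma$ and $\eta\tau\bar\epsilon c$, become $\eta$-free, giving the heterogeneity and participation-bias floor.
\item Expanding $(\eta^2\tau(\tau-1)LG+\eta\tau\bar\epsilon)^2=\eta^2[\eta\tau(\tau-1)LG+\tau\bar\epsilon]^2$ gives an $O(\eta)$ contribution after division. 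The cross term $\eta^2\tau(\tau-1)LGc$ likewise gives $\eta\tau(\tau-1)LGc/(\mu(\cdots))$.
\end{itemize}

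Two bookkeeping points need care when regrouping these pieces into the labeled terms of \eqref{cor2:decomp}:
\begin{itemize}
\item The factor $(1+\mu(1-\eta))$ must be upper-bounded by $(1+\mu)$, as the display does.
\item The squared bracket should be bounded with an inner factor at most $\tau(\tau-1)LG+\tau\bar\epsilon$, which is valid since $\eta\le 1$.
\end{itemize}
With these bounds the decomposition holds as an upper bound on $\bar B/(1-A)$, which suffices for \eqref{cor2:floor}.
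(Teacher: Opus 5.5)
Your proposal is correct and follows the paper's proof: collapse the product to $A^t$, bound $B(j)\le\bar B$ uniformly, sum the geometric series, let $t\to\infty$, and obtain the decomposition by dividing $\bar B$ term by term by $1-A=\mu\eta(\tau-\eta(\tau-1))$. You are right that, with $\bar B$ defined literally by evaluating \eqref{conv3}, the display \eqref{cor2:decomp} holds only as an upper bound (through $1+\mu(1-\eta)\le 1+\mu$ and $(\eta a+b)^2\le(a+b)^2$), which the paper's ``grouping terms'' glosses over; it becomes an exact equality if $\bar B$ is taken to be the relaxed bound $B_0+\eta B_1+\eta^2 B_2$ of Appendix~\ref{appendixB}, and that relaxed bound is still a valid uniform upper bound on $B(i)$.
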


\begin{proof}
Under the stated assumptions, $A(i) = A \in [0,1)$ and $B(i) \leq \bar{B}$ for all $i$, so the bound in~\eqref{conv1} gives:
\begin{align}
    &\mathbb{E}\!\left[\|\boldsymbol{\theta}(t)-\boldsymbol{\theta}^*\|_2^2\right]
    \leq A^t\|\boldsymbol{\theta}(0)-\boldsymbol{\theta}^*\|_2^2
    + \bar{B}\sum_{j=0}^{t-1}A^{t-1-j} \notag\\
    &= A^t\|\boldsymbol{\theta}(0)-\boldsymbol{\theta}^*\|_2^2
    + \bar{B}\cdot\frac{1-A^t}{1-A}
    \leq A^t\|\boldsymbol{\theta}(0)-\boldsymbol{\theta}^*\|_2^2
    + \frac{\bar{B}}{1-A},
\end{align}
where the geometric sum is evaluated in closed form since $A < 1$. Taking $t \to \infty$, $A^t \to 0$, giving~\eqref{cor2:floor}. The decomposition~\eqref{cor2:decomp} follows from substituting $\bar{B}$ evaluated at constant $\eta$, $S_{\min}$, and $\bar{\epsilon}$ together with $1-A = \mu\eta(\tau-\eta(\tau-1))$ into $\bar{B}/(1-A)$, and grouping terms according to their respective sources of error.
\end{proof}

% ---------------------------------------------------------------
% Remark 2
% ---------------------------------------------------------------
\begin{remark}[Interpretation of the Steady-State Floor]
\label{rem:floor}
The decomposition in~\eqref{cor2:decomp} reveals the contribution of each system component to the irreducible error floor, and has several important implications.

First, the \emph{OTA channel noise} term scales as $\Theta(1/\eta)$ and therefore grows without bound as $\eta \to 0$, representing the dominant trade-off in the constant-stepsize regime. It also decreases with $K$ and $S_{\min}$, confirming that larger antenna arrays and higher minimum user participation suppress the wireless channel's impact on learning performance. In the limit $K \to \infty$, this term vanishes.

Second, the \emph{data heterogeneity and participation bias} term contains two contributions: $2(\tau-1)\Gamma/(\mu(\tau-\eta(\tau-1)))$, which approaches the finite constant $2(\tau-1)\Gamma/(\mu\tau)$ as $\eta \to 0$ and represents a fundamental limit imposed by the non-i.i.d. data distribution that cannot be eliminated by stepsize tuning; and $\tau\bar{\epsilon} c/(\mu(\tau-\eta(\tau-1)))$, which approaches the finite constant $\bar{\epsilon} c/\mu$ as $\eta \to 0$ and can be reduced through careful user scheduling.

Third, and most directly relevant to the proposed framework, the \emph{partial participation} term is $O(\eta)$ for fixed $L$, $G$, $\tau$, and $\bar{\epsilon}$, and vanishes as $\eta \to 0$. The component quadratic in $\bar{\epsilon}$ scales as $O(\eta\tau\bar{\epsilon}^2/\mu)$, while the entire term is $O(\eta)$. Together with the linear contribution $\bar{\epsilon} c/\mu$ in the previous term, $\bar{\epsilon}$ appears in multiple components of the floor, and reducing it through careful user selection reduces both. The diversity-aware scheduling strategies in the following subsections are designed to reduce, or approximately control, $\bar{\epsilon}$, thereby pushing the steady-state floor downward.

Finally, there is an inherent \emph{trade-off in the stepsize $\eta$}: the OTA channel noise term scales as $\Theta(1/\eta)$ and dominates as $\eta \to 0$; the stochastic gradient noise and partial participation terms are $O(\eta)$ and decrease with $\eta$; and the heterogeneity and linear-$\bar{\epsilon}$ components approach finite constants as $\eta \to 0$. The genuine trade-off is therefore between the OTA noise floor, which worsens with smaller $\eta$, and the gradient noise and quadratic participation terms, which improve. The optimal constant stepsize balances these competing effects and can be chosen, in principle, as a function of $\mu$, $\tau$, $\Gamma$, $\bar{\epsilon}$, $K$, and $S_{\min}$.
\end{remark}

\subsection{\MakeUppercase{Entropy-based User Scheduling with Known Data Distributions}}

Assuming that all MDs disclose their data distributions to the PS in advance, we can select a subset of users that effectively represent all data labels in the network. Based on this, the PS characterizes the label distribution of each user $m \in [M]$ as $L_{m} = \left[ l_{m,0}, l_{m,1}, \dots, l_{m,{N_c}-1} \right]$, where $N_c$ is the total number of classes, and $l_{m,{n_c}}$ represents the portion of the $m$-th user's data corresponding to label $n_c$. At each iteration, the PS computes the label distribution for all available user subsets as a probability mass function and selects the one with the highest Shannon entropy, indicating the most balanced distribution. While this strategy is similar to that in \cite{lutz2024entropybased}, we extend our approach to a more practical setup that incorporates OTA transmission, a wireless fading MAC, and blind transmitters, showing the effectiveness of entropy-based user selection for EH devices under practical constraints.

\subsection{\MakeUppercase{User Clustering and Scheduling with Unknown Data Distribution}}

We consider a more realistic scenario in which the PS does not know the user data distributions, thereby preserving user privacy. In this case, we rely on the relationship between users' model updates and the underlying data distribution, similar to \cite{wang2020rl, fraboni2021clusteredsampling, balakrishnan2022diverse}. Unlike these studies, our approach, due to OTA transmission, is constrained to using a noisy estimate of the sum of updates from all selected users at each iteration. 
We demonstrate that a representation of user updates can be estimated at the PS, enabling clustering based on similarities in these representations, thereby minimizing the error due to partial participation in \eqref{epsilon}. This approach selects suitable users while preventing redundant information transfer and conserving energy, subject to the constraints of EH devices.

To achieve this, we use LSE to construct a representation of the updates as follows.
Over $T$ estimation iterations, the PS stores normalized global updates from (\ref{global_update}) while all active users participate without scheduling, a phase termed the \textit{estimation phase}. Note that we normalize the received global updates to mitigate possible scale discrepancies. At the end of this estimation window, PS estimates the representative updates based on stored global updates and participation information. We emphasize that the goal is to estimate a representation of user updates rather than recovering the individual updates themselves.

We define a matrix $\hat{\boldsymbol{\Theta}}_{PS}$, whose rows represent global model updates $\Delta\hat{\boldsymbol{\theta}}_{PS}(t)$ from (\ref{global_update}): 
\begin{align}
\hat{\boldsymbol{\Theta}}_{PS} &= [\Delta \hat{\boldsymbol{\theta}}_{PS}(t-T+1); \cdots ;\Delta \hat{\boldsymbol{\theta}}_{PS}(t)] \in \mathbb{R}^{T \times 2N}\\
&= \begin{bmatrix}
% \hat{\boldsymbol{\Theta}}_{PS, t-T+1} \quad 
\hat{\boldsymbol{\Theta}}_{PS, t-T+1}, 
\dots, 
\hat{\boldsymbol{\Theta}}_{PS, j}, 
\dots,
\hat{\boldsymbol{\Theta}}_{PS, t} 
\end{bmatrix}^T_{T \times 2N}.
\end{align}

For the $j$-th iteration with $j\le T$,  the $j$-th row of this matrix can be expressed as 
\begin{align}
\hat{\boldsymbol{\Theta}}_{PS, j} = \boldsymbol{A}_{j} \boldsymbol{\Theta}_{j} + \boldsymbol{N}_{j}^{'},
\end{align}
where $\boldsymbol{A}_{j}$ is a binary participation vector with  $\boldsymbol{A}_j \in \{0,1\}^{1 \times M}$, and $\boldsymbol{\Theta}_{j} \in \mathbb{R}^{M \times 2N}$, with each row representing the local model update for a specific user $m \in [M]$, denoted as $\Delta \boldsymbol{\theta}_{j,m}$.
% different users. 
Additionally, $\boldsymbol{N}_{j}^{'} \in \mathbb{R}^{1 \times 2N}$, whose $d$-th element is denoted by ${N}_{j,d}^{'}$ for $d \in [2N]$, represents the effective noise arising from MAC fading, AWGN, and PS combining errors. 
We also define $\boldsymbol{\Theta}_{rep} \in \mathbb{R}^{M \times 2N}$ as a representation of local updates. Using this, $\hat{\boldsymbol{\Theta}}_{PS, j}$ can be written as:
\begin{align} \label{est1}
 \hat{\boldsymbol{\Theta}}_{PS, j} =& \boldsymbol{A}_{j} (\boldsymbol{\Theta}_{rep} + \boldsymbol{\Theta}_{\textit{diff},j}) + \boldsymbol{N}_{j}^{'},
\end{align}
where $\boldsymbol{\Theta}_{\textit{diff},j}$ is defined as the difference between $ \boldsymbol{\Theta}_{j} - \boldsymbol{\Theta}_{rep} $.
Combining (\ref{est1}) for $j \in \{1, \dots, T\}$ and defining a total noise term $\boldsymbol{N}^{*}_{j} \triangleq \boldsymbol{A}_{j} \boldsymbol{\Theta}_{\textit{diff},j} + \boldsymbol{N}^{'}_{j}$, which represents the noise due to the channel, interference from the blind transmitters, and the difference between representative updates and the real updates, we obtain
\begin{equation} \label{rep_grad}
\hat{\boldsymbol{\Theta}}_{PS} = \boldsymbol{A} \boldsymbol{\Theta}_{rep} + \boldsymbol{N}^{*},
\end{equation}
where $\boldsymbol{A} \in \{0,1\}^{T \times M}$, $\boldsymbol{\Theta}_{rep} \in \mathbb{R}^{M \times 2N} $ and $\boldsymbol{N}^{*} \in \mathbb{R}^{T \times 2N}$.

By solving the LSE of $\boldsymbol{\Theta}_{rep}$ in (\ref{rep_grad}), we can get an estimate for the representative updates as $\hat{\boldsymbol{\Theta}}_{rep}$. Using this representation, the PS can infer characteristics of the users' data distribution by measuring similarity between user representations, which can then be used in the user selection procedure. Notably, the PS infers similarity among user representations without accessing their data distribution, preserving user privacy.

Due to the limited and stochastic nature of energy arrivals, some users may dominate the training and introduce bias toward specific labels and users.
By employing \textit{cosine similarity}, users are clustered to promote diverse user contributions, with the expected number of users per cluster determined by each cluster's energy distribution to ensure unbiased training. This approach helps reduce bias arising from non-i.i.d. data and provides fair performance across users, as noted in \cite{wang2020rl, balakrishnan2022diverse}.

\subsection{\MakeUppercase{Visualization of Cosine Similarity Based Clustering}}

% \begin{figure}
%     \centering
%     % First figure
%     \begin{subfigure}[t]{0.49\columnwidth}
%         \centering
%         \includegraphics[trim={1.9cm 0.45cm 1cm 0.83cm},clip,width=\linewidth]{figures/ch3/updates_cos45_new.pdf}
%         \vspace{-0.6cm}
%         \captionsetup{font=scriptsize}
%         \caption{Cosine similarity on real user updates.}
%         \label{fig:updates_cos45}
%     \end{subfigure}
%     % Second figure
%     \begin{subfigure}[t]{0.49\columnwidth}
%         \centering
%         \includegraphics[trim={1.9cm 0.45cm 1cm 0.83cm},clip,width=\linewidth]{figures/ch3/rep_cos45_new.pdf}
%         \vspace{-0.6cm}
%         \captionsetup{font=scriptsize}
%         \caption{Cosine similarity on estimated user representations.}
%         \label{fig:rep_cos45}
%     \end{subfigure}
%     \vspace{-0.15cm}
%     \captionsetup{font=footnotesize}
%     \caption{The visualization of cosine similarity on the user clusters.}
%     \label{fig:cos45}
%     % \vspace{-0.7cm}
% \end{figure}

\begin{figure}[t]
\centering

\subfloat[Cosine similarity on real user updates.]{
\includegraphics[trim={1.9cm 0.45cm 1cm 0.83cm},clip,width=0.44\columnwidth]{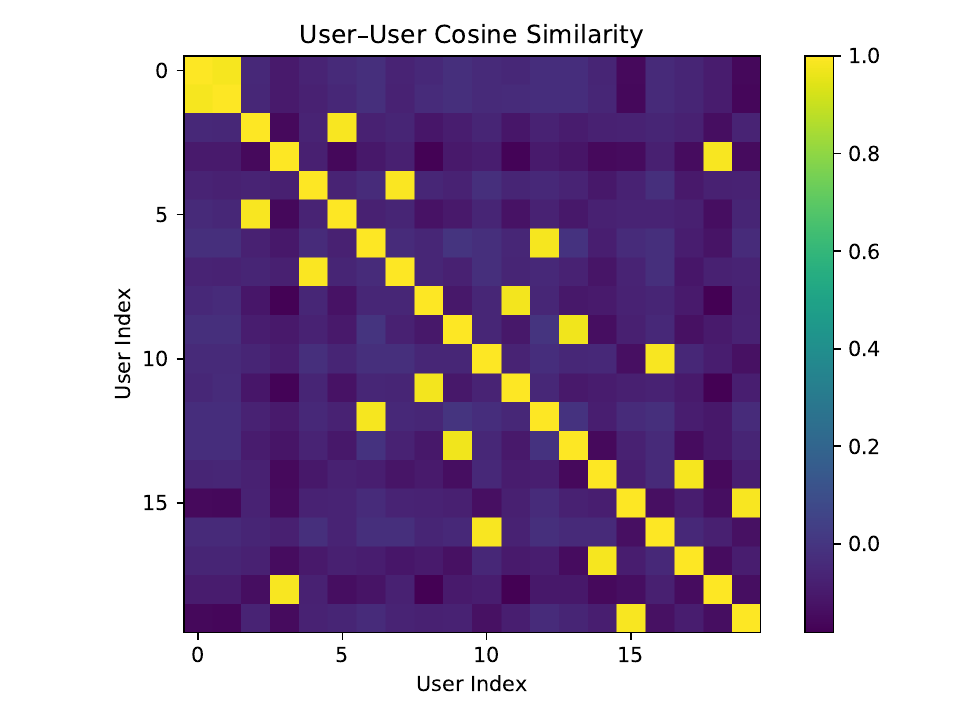}
\label{fig:updates_cos45}
}
\hspace{0.02\columnwidth}
\subfloat[Cosine similarity on estimated user representations.]{
\includegraphics[trim={1.9cm 0.45cm 1cm 0.83cm},clip,width=0.44\columnwidth]{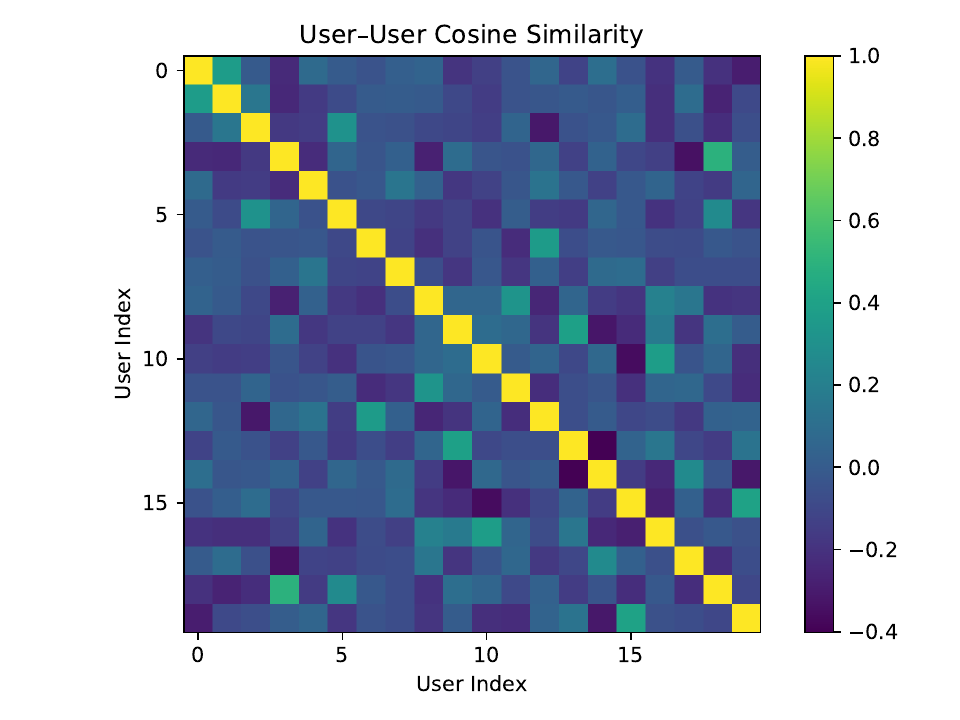}
\label{fig:rep_cos45}
}

\caption{The visualization of cosine similarity on the user clusters.}
\label{fig:cos45}
\end{figure}

To illustrate the relevance of cosine similarity in our setting, we consider a simple example using the MNIST dataset. A single-layer neural network with $2N = 7850$ parameters is trained using 20 MDs, each holding data from a single class. We compute the cosine similarity between users’ model updates at the initial iteration and compare it with the cosine similarity obtained from user representations estimated via the LSE method.

Fig.~\ref{fig:cos45} shows the cosine similarity matrices based on the true updates and their estimated representations. As observed in Fig.~\ref{fig:updates_cos45} and Fig.~\ref{fig:rep_cos45}, the estimated similarities closely match the true similarity patterns, indicating that the proposed approach accurately captures relationships among users. This confirms that user characteristics and the underlying clustering structure can be inferred directly from aggregated over-the-air signals.

% ------------------------------------------------------------
% ------------------------------------------------------------
% ------------------------------------------------------------

% ------------------------------------------------------------
% ------------------------------------------------------------
% ------------------------------------------------------------

\section{\MakeUppercase{Personalized Clustered Federated Learning with OTA Aggregation for EH Devices}}
\label{sec4}

While the user scheduling approaches for highly heterogeneous FL setups, proposed in the previous section, provide a way to train a fair global model and reduce training bias caused by data heterogeneity, many real-world applications (e.g., recommendation systems or online journals) can instead benefit from this heterogeneity by training multiple personalized models for users with specific preferences. However, training multiple personalized models for different clusters typically requires separate transmissions (e.g., sequential updates from all active users) from each cluster, or multiple parameter servers, which increases either latency or system cost. To address this, in our subsequent work, we propose an over-the-air personalized clustered FL approach that can independently recover the local updates of each cluster, even with synchronized transmission, using a single parameter server.

\subsection{\MakeUppercase{System Model for the Personalized CFL with EH Devices}}

We consider a CFL system with heterogeneous data distributions for $M$ EH devices. In this setup, the goal is to minimize a global loss function collaboratively across $H$ clusters, with each cluster $h \in [H]$ maintaining a personalized, distinct model. Each cluster $h$ is composed of a subset of the $M$ devices, where the clusters are mutually exclusive, that is, no device belongs to more than one cluster. For each cluster, the global model parameters $\boldsymbol{\theta}_h \in \mathbb{R}^{2N}$ are optimized. Each model $\boldsymbol{\theta}_h$ is designed to minimize the local loss function corresponding to its cluster $h$. The global loss function is defined as:
\begin{equation}
F(\boldsymbol{\theta}_1, \boldsymbol{\theta}_2, \dots, \boldsymbol{\theta}_H) =  \sum_{h=1}^{H} \sum_{m \in \mathcal{C}_h} \frac{|\mathcal{B}_m|}{B} F_{m}(\boldsymbol{\theta}_h),
\end{equation}
where $F_{m}(\boldsymbol{\theta}_h)$ represents the average empirical local loss for the $m$-th user in cluster $h$ with model parameters $\boldsymbol{\theta}_h$, and $\mathcal{C}_h$ is the set of users in cluster $h \in [H]$. For the $m$-th user with local dataset $\mathcal{B}_m$, the local loss function is defined as:
\begin{equation}
F_m(\boldsymbol{\theta}_h) = \frac{1}{|\mathcal{B}_m|} \sum_{\boldsymbol{u} \in \mathcal{B}_m} f(\boldsymbol{\theta}_h, \boldsymbol{u}),
\end{equation}
where $f(\boldsymbol{\theta}_h, \boldsymbol{u})$ is the empirical loss corresponding to the data sample $\boldsymbol{u}$ in the local dataset $\mathcal{B}_m$.

Consistent with the earlier discussion, EH devices are modeled using a Bernoulli energy-arrival process with harvesting probability $p_e^m(t)$, and each iteration consumes 1 unit of energy for local computation and transmission of updates. However, unlike the setup in the previous section, we assume that the harvested energy is immediately consumed in the next iteration. That is, no energy storage or scheduling is performed in this setting. We define $\mathcal{S}_h(t)$ as the set of active and participating users in cluster $h$ at global iteration $t$, with $\sum_{h=1}^{H} \mathcal{S}_h(t) = \mathcal{S}(t)$. Note that $\mathcal{S}_h(t)$ is a subset of the overall user set in the cluster, i.e., $\mathcal{S}_h(t) \subseteq \mathcal{C}_h$, where $\mathcal{C}_h$ denotes the users assigned to cluster $h$.

Our approach to clustering similar users based on their data distributions and update directions enables training a separate personalized model for each cluster, tailored to their observed data patterns. When users are using EH devices, limited energy availability may leave some users without sufficient energy to perform local computations and transmissions. In such cases, similar users within the group can make meaningful contributions and help one another develop a more personalized model for that cluster, rather than relying on the standard FL approach. In the standard FL framework, all the users are treated uniformly, and a single global model is learned, which often fails to serve all users effectively due to underlying data heterogeneity. Furthermore, unlike the user-scheduling approach proposed in the previous section, which targets a single global model that represents all users, personalized CFL enables specialized models tailored to user preferences, thereby capturing diverse user requirements more effectively.

At the $t$-th global iteration, the PS broadcasts the latest global models, $\boldsymbol{\theta}_1(t), \boldsymbol{\theta}_2(t), \dots, \boldsymbol{\theta}_H(t)$, corresponding to each cluster $h \in [H]$. In response, the active mobile devices in each cluster $h$ perform $\tau$ iterations of local SGD to minimize their individual local loss functions, $F_m(\boldsymbol{\theta})$, for $m \in \mathcal{S}_h(t)$. 
Subsequently, the model updates obtained by the MDs are transmitted back to the PS to contribute to the global learning process, with each cluster providing its local update to improve the cluster-level global models.

To compute local model updates in clustered FL, the $m$-th user in cluster $h$ performs local SGD at the $i$-th local and $t$-th global iteration according to \eqref{lc_update_rule}. After completing the local steps, the user computes the model update to be shared with the PS, as given in \eqref{lc_update}.
Once all users have computed their model updates locally, they simultaneously transmit them to the PS via over-the-air aggregation. Similar to the previous section, the model updates of the users, $\Delta \boldsymbol{\theta}_{m}^{cx}(t) \in \mathbb{C}^{N}$, $m\in \mathcal{S}(t)$, are transmitted as complex signals at iteration $t$, represented as \eqref{complex_update}. The PS combines the received signals from all users in each cluster, yielding cluster-specific global updates that are subsequently used to refine the cluster-specific global models.

In an ideal setup, where the server can perfectly identify the individual user updates, each model can be updated easily by averaging the updates within the corresponding cluster as:
\begin{equation} \label{gl_update_error_free_clustered}
    \Delta \boldsymbol{\theta}_h(t) = \frac{1}{|\mathcal{S}_h(t)|} \sum_{m \in \mathcal{S}_h(t)} \Delta\boldsymbol{\theta}_m(t), 
\end{equation}
where $h \in [H]$ represents each cluster, and $\mathcal{S}_h(t)$ is the set of participating users in cluster $h$ at global iteration~$t$, as mentioned earlier.

After averaging the updates within each cluster, the global model for cluster $h$ can be updated as:
\begin{equation} \label{gl_update_part_clustered}
    \boldsymbol{\theta}_h(t+1) = \boldsymbol{\theta}_h(t) + \Delta \boldsymbol{\theta}_h(t),
\end{equation}
where $\Delta \boldsymbol{\theta}_h(t)$ represents the aggregated update for the model of cluster $h$, based on the updates from all users in that cluster.

\subsection{\MakeUppercase{Personalized Clustered FL with OTA Aggregation}}
We assume that mobile devices lack CSI and transmit their updates to the PS over a fading MAC channel, employing OTA aggregation. The PS is equipped with $K$ receive antennas to align the received signals, even in the absence of CSI at the transmitters, by leveraging aggregated channel state information. The key challenge is that the server needs to recover the sum of the local updates for each cluster. However, when users send their updates concurrently over the wireless channel, the PS only observes the sum of the local updates from all users. Since the PS receives only the aggregated global over-the-air update, it cannot distinguish among cluster contributions, making it challenging to recover individual cluster updates.
This problem is exacerbated by the absence of CSI at the transmitters, leaving users blind to channel conditions and making it significantly more difficult for the PS to separate and correctly attribute aggregated updates to their respective clusters.

A potential solution is to leverage channel knowledge on the server side. Using the CSI that the PS can gather, one can employ minimum mean-square error (MMSE) estimation at the receiver to recover each user's individual updates. In this case, the PS can use the aggregated received signal and apply the MMSE technique to estimate the individual updates for each cluster, despite the concurrent transmissions. The MMSE approach separates the mixed signals of different clusters, enabling the server to accurately recover each cluster's local updates by exploiting signals observed at the different receive antennas. Using MMSE, the server can effectively perform CFL even without CSI at the users' side. This enables the PS to aggregate updates from each cluster, recover individual contributions, and perform necessary model updates while maintaining the benefits of concurrent transmission over a shared wireless channel.

Refer to \eqref{rec_signal_0}, which describes the received signal at the $k$-th antenna of the PS in the OTA-FL setup with EH devices.
Equivalently, in the personalized clustered FL setup, the received signal at the PS for the $k$-th antenna at iteration $t$ can be expressed as a sum over clusters, with each cluster $h$ transmitting its updates. The received signal becomes:
\begin{equation} \label{rec_signal_clustered}
    \boldsymbol{y}_{PS,k}(t) = \sum_{h \in [H]} \sum_{m \in \mathcal{S}_h(t)} \boldsymbol{h}_{m,k}(t) \circ \boldsymbol{x}_m(t) + \boldsymbol{z}_{PS,k}(t),
\end{equation}
where the first sum runs over the clusters, and the second sum runs over the users within each cluster, $\mathcal{S}_h(t)$, at global iteration $t$. 
Our goal is to design a combining technique at the receivers to combine $ \boldsymbol{y}_{PS,k}(t) $ for each cluster $ h \in [H] $ to ensure convergence guarantees for the personal model of each cluster instead of the single global model, and update the models as in  (\ref{gl_update_part_clustered}).

The core challenge lies in the accuracy of signal recovery with limited CSI, creating a trade-off between estimation accuracy and signaling overhead.
% in wireless federated learning systems. 
To address this, we investigate different estimation strategies based on the granularity of CSI available at the PS. Full per-user CSI can be obtained via pilot-based training, in which each user transmits known pilot symbols, enabling the PS to estimate each user's channel response. In multi-carrier systems such as OFDM, these pilots can be embedded on designated subcarriers, thereby avoiding the need to occupy the entire bandwidth~\cite{amiri2020dsgd, busra2023}. 
Assigning distinct pilot subcarriers to all users minimizes interference and enables accurate channel estimation; however, it incurs significant signaling overhead and becomes impractical with many users or rapidly varying channels.

To reduce overhead, the PS may instead rely on partial CSI, such as per-cluster channel information. In this setting, the sum of the channel coefficients for users within a cluster is obtained using common pilot subcarriers assigned to all users in that cluster, rather than separate pilots for each user. This allows the PS to recover the effective aggregated channel for each cluster, which we refer to as partial cluster-level CSI throughout this work.

% \subsubsection{\MakeUppercase{Full User-Level CSI Available at the PS}}
\noindent\textit{1) Full User-Level CSI Available at the PS:}

In this setting, the PS has full CSI for each user and each antenna at the PS. That is, it knows the complete channel coefficient array $\boldsymbol{H_f} \in \mathbb{C}^{N \times K \times M_s}$, where 
$M_s$ is the total number of active users and defined as $M_s \triangleq |\mathcal{S}(t)|$. With this detailed channel knowledge, the PS can apply estimation and combining methods to recover either per-user updates or aggregate cluster-level updates. This approach enables more accurate signal separation but requires extensive CSI estimation, which may be infeasible due to the associated cost and overhead in practical wireless FL systems.

% \paragraph{MMSE Combining with Full CSI} 

\indent\textit{a) MMSE Combining with Full CSI}

For the MMSE combining with the full CSI, we consider the equivalent channel model for (\ref{rec_signal_clustered}). Specifically, for the $n$-th symbol, the channel model is expressed as:
\begin{equation} \label{channel_model}
\boldsymbol{Y}_n = \boldsymbol{H}_n \boldsymbol{X}_n + \boldsymbol{N}_n,
\end{equation}
where $ \boldsymbol{H}_n \in \mathbb{C}^{ K \times M_s} $ is the channel matrix known at the receiver side, where each element of the matrix corresponds to the $h_{m,k}$ defined in (\ref{rec_signal_0}) and used in (\ref{rec_signal_clustered}). Also, $ \boldsymbol{Y}_n \in \mathbb{C}^{ K} $ is the received signal, $ \boldsymbol{X}_n \in \mathbb{C}^{ M_s} $ is the transmitted signal, and $ \boldsymbol{N}_n \in \mathbb{C}^{ K} $ is the additive noise. 
Note that for ease of illustration, we omit the iteration index $t$ from the parameters.

Given the received signal $\boldsymbol{Y}_n$ and the full channel matrix $\boldsymbol{H}_n$, PS aims to recover the transmitted signal $\boldsymbol{X}_n$ using a linear estimator of the form $\hat{\boldsymbol{X}_n} = \boldsymbol{W}_n \boldsymbol{Y}_n$. The MMSE estimator is obtained by minimizing the mean squared error (MSE) between the true and estimated signals, i.e., $\mathbb{E} \left[ \| \boldsymbol{X}_n - \boldsymbol{W}_n \boldsymbol{Y}_n \|_F^2 \right]$, yielding\footnote{The derivation follows standard linear MMSE estimation and is omitted for brevity.}
\begin{equation} \label{mmse_eq}
     \hat{\boldsymbol{X}_n} = \boldsymbol{C_{xx}} \boldsymbol{H}_n^H \left( \boldsymbol{H}_n \boldsymbol{C_{xx}} \boldsymbol{H}_n^H + \boldsymbol{C_{nn}} \right)^{-1} \boldsymbol{Y}_n .
\end{equation}
This is the MMSE estimator for $\boldsymbol{X}_n$, given the observation $\boldsymbol{Y}_n$, full CSI $\boldsymbol{H}_n$, and known signal and noise statistics.

Once the transmitted updates are estimated as $\hat{\boldsymbol{X}_n}$, they correspond to the recovered versions of individual user updates $\Delta \hat{\boldsymbol{\theta}}_m(t)$ for each scheduled user $m \in \mathcal{S}(t)$. The PS can then compute the aggregated update for each cluster by averaging the estimated updates of users assigned to cluster $h$ as:
\begin{equation} \label{gl_update_error_free_clustered_estimated}
    \Delta \hat{\boldsymbol{\theta}}_h(t) = \frac{1}{|\mathcal{S}_h(t)|} \sum_{m \in \mathcal{S}_h(t)} \Delta\hat{\boldsymbol{\theta}}_m(t),
\end{equation}
where $h \in [H]$ denotes the cluster index and $\mathcal{S}_h(t)$ is the corresponding set of scheduled users. These aggregated updates are then used to update the cluster-specific global models as: $    \boldsymbol{\theta}_h(t+1) = \boldsymbol{\theta}_h(t) + \Delta \hat{\boldsymbol{\theta}}_h(t),$
allowing each cluster model to evolve independently based on updates from its own user group.

% \subsubsection{\MakeUppercase{Partial Cluster-Level CSI Available at the PS}}
\noindent\textit{2) Partial Cluster-Level CSI Available at the PS:}

In this setting, the PS lacks access to individual user-level CSI. Instead, it obtains partial channel knowledge in the form of aggregated CSI at the cluster level. Specifically, for each cluster \( h \in [H] \), the PS is assumed to know the aggregated channel vector equal to the sum of the channel vectors of users in that cluster. For the \( k \)-th antenna, this can be written as \( \sum_{m \in \mathcal{S}_h(t)} \boldsymbol{h}_{m,k}(t) \).
These aggregated vectors are used to construct the cluster-level channel matrix \( \boldsymbol{H}_{c,n} \in \mathbb{C}^{K \times H} \) for the $n$-th symbol, where each column corresponds to a cluster. The estimation of such aggregated CSI can be performed by assigning a common pilot signal to all users within a cluster, allowing the PS to capture the superimposed channel response for that cluster in a single measurement. This approach significantly reduces CSI acquisition overhead relative to the full per-user CSI case, but it also limits the signal resolution and the ability to distinguish among users within the same cluster.

% \paragraph{MMSE Combining with Partial CSI}
\indent\textit{a) MMSE Combining with Partial CSI}

The $n$-th symbol of the received signal still follows the standard linear model introduced in \eqref{channel_model}: $\boldsymbol{Y}_n = \boldsymbol{H}_{c,n} \boldsymbol{X}_{c,n} + \boldsymbol{N}^{'}_n,$ where $\boldsymbol{Y}_n \in \mathbb{C}^{ K \times 1}$ denotes the received signal across $K$ antennas for symbol $n$,
$\boldsymbol{H}_{c,n} \in \mathbb{C}^{K \times H}$ is the aggregated cluster-level channel matrix,
$\boldsymbol{X}_{c,n} \in \mathbb{C}^{ H}$ is the matrix of transmitted cluster updates
(each column corresponding to one cluster),
and $\boldsymbol{N}^{'}_n \in \mathbb{C}^{K \times 1}$ denotes additive white Gaussian noise, 
which also captures intra-cluster interference arising from users experiencing different channels.

Each transmitted cluster signal $\boldsymbol{X}_{c,n}$ is assumed to be the sum of updates from all the users in the corresponding cluster. Since individual user-level recovery is not possible in this case, the PS applies a linear MMSE estimator to recover cluster-level updates. 
We also note that the estimated $\hat{\boldsymbol{X}}_{c,n} \in \mathbb{C}^{H }$ corresponds to the cluster-level aggregated updates. Each row of $\hat{\boldsymbol{x}}_{c,n}^{(h)}$ represents the MMSE estimate of the aggregated update from cluster $h$. 
We first define $\boldsymbol{A} \in \{0,1\}^{M \times H}$ as the cluster assignment matrix, mapping each of the $M$ users to one of the $H$ clusters, with orthogonal columns. We also define the cluster-sum projector as $\boldsymbol{P} \triangleq \boldsymbol{A}  (\boldsymbol{A}^{T}\boldsymbol{A})^{-1} \boldsymbol{A}^{T}$. 
This results in the cluster-summed local updates
$\boldsymbol{X}_{c,n} \triangleq \boldsymbol{A}^{T} \boldsymbol{X}_n$
and the associated clustered channel matrix
$\boldsymbol{H}_{c,n} \triangleq \boldsymbol{H}_n \boldsymbol{A}
(\boldsymbol{A}^{T}\boldsymbol{A})^{-1}$. Hence, the corresponding input-output relationship can be rewritten as
\begingroup
\allowdisplaybreaks
\begin{align}
& \boldsymbol{Y}_n = \boldsymbol{H}_n \boldsymbol{X}_n + \boldsymbol{N}^{'}_n \\
& \hspace{0.2cm}=  \boldsymbol{H}_n \boldsymbol{P} \boldsymbol{X}_n + 
\boldsymbol{H}_n (\boldsymbol{I}-\boldsymbol{P}) \boldsymbol{X}_n +\boldsymbol{N}^{'}_n \\
&\hspace{0.2cm}= \boldsymbol{H}_n \boldsymbol{A} ( \boldsymbol{A}^{T}\boldsymbol{A})^{-1} \boldsymbol{A}^{T} \boldsymbol{X}_n + 
\boldsymbol{H}_n (\boldsymbol{I}-\boldsymbol{P}) \boldsymbol{X}_n +\boldsymbol{N}^{'}_n \\
&\hspace{0.2cm}=
\boldsymbol{H}_{c,n} \boldsymbol{X}_{c,n} +  
\boldsymbol{H}_n (\boldsymbol{I}-\boldsymbol{P}) \boldsymbol{X}_n +\boldsymbol{N}^{'}_n\\
&\hspace{0.2cm} =
\boldsymbol{H}_{c,n} \boldsymbol{X}_{c,n} +\boldsymbol{\Tilde{N}}_n
\end{align}
\endgroup
where $\boldsymbol{\tilde{N}}_n$ represents the joint noise term, which consists of the effective noise $\boldsymbol{N}'_n$ and the interference term $\boldsymbol{H}_n(\boldsymbol{I}-\boldsymbol{P})\boldsymbol{X}_n$.
The latter captures the intra-cluster deviations of the local updates after clustering, and can be interpreted as a structured noise component arising from imperfect alignment among users within the same cluster.

Hence, estimating \( \boldsymbol{X}_{c,n} \) using only cluster-level partial CSI yields the average of the local updates across clusters. Then, the cluster-level MMSE estimate becomes 
\begin{equation} \label{cluster_MMSE}
    \hat{\boldsymbol{X}}_{c,n} \!= \boldsymbol{C}_{x_c x_c} \boldsymbol{H}_{c,n}^H \left( \boldsymbol{H}_{c,n} \boldsymbol{C}_{x_c x_c} \boldsymbol{H}_{c,n}^H \!+\! \boldsymbol{C}_{nn} \right)^{\!-1} \boldsymbol{Y}_n,
\end{equation} 
with $\boldsymbol{C}_{x_c x_c} = \boldsymbol{A}^{T} \boldsymbol{C}_{xx} \boldsymbol{A}$. Solving \eqref{cluster_MMSE} in a similar manner to \eqref{mmse_eq}, by substituting $\boldsymbol{C}_{xx} = \boldsymbol{C}_{x_c x_c}$ and $\boldsymbol{H}_n = \boldsymbol{H}_{c,n}$, and using the received signal from all antennas $\boldsymbol{Y}_n$ along with the cluster-level CSI $\boldsymbol{H}_{c,n}$, one can directly obtain the desired signal, i.e., the average of the transmitted updates from each cluster, without needing to recover individual user updates.
This shows that the cluster-level estimate is simply the sum of the user-level MMSE estimates within each cluster.

Using the estimate $\hat{\boldsymbol{X}}_{c,n}$, the PS can construct cluster-specific model updates in the same manner as \eqref{gl_update_error_free_clustered_estimated}.
This approach enables the PS to recover aggregated updates from each cluster, even with partial CSI availability, thereby enabling effective cluster-level model updates without requiring individual user-level CSI.

% \paragraph{Cluster-wise Weighted Combining (CWC) with Partial CSI}
\indent\textit{b) Cluster-wise Weighted Combining (CWC) with Partial CSI}

Using the received signal defined in \eqref{rec_signal_clustered}, the PS can isolate the contribution of cluster $h \in [H]$ by applying a channel-weighted combining operation. Specifically, to get the combined signal for cluster $h$, the PS performs:
\begin{equation} \label{cluster_combining_bfl}
\boldsymbol{y}_{PS}^{(h)}(t) = \frac{1}{K} \sum_{k=1}^{K} \left( \sum_{m \in \mathcal{S}_h(t)} \boldsymbol{h}_{m,k}(t) \right)^* \circ \boldsymbol{y}_{PS,k}(t).
\end{equation}
% where $\circ$ denotes element-wise (Hadamard) product. 
This operation acts as a beamformer targeting cluster $h$.

We focus on the $n$-th symbol of the PS's beamformed signal for cluster $h$, denoted by $y_{\mathrm{PS}}^{n,(h)}(t)$:
% Using the combining rule from (\ref{cluster_combining_bfl}), this symbol can be written as:
\begin{equation}
    y_{PS}^{n,(h)}(t) = \frac{1}{K} \sum_{k=1}^{K} \left( \sum_{m \in \mathcal{S}_h(t)} h_{m,k}^{n}(t) \right)^* \cdot y_{PS,k}^{n}(t).
\end{equation}
In this expression, $h_{m,k}^{n}(t)$ is the channel coefficient from user $m \in \mathcal{S}_h(t)$ to antenna $k$ for the $n$-th symbol, and $y_{PS,k}^{n}(t)$ denotes the received symbol at antenna $k$. The sum $\sum_{m \in \mathcal{S}_h(t)} h_{m,k}^{n}(t)$ forms a cluster-specific channel signature at antenna $k$.
% , and the conjugate $(\cdot)^*$ acts as a matched filter.
Averaging across antennas aligns signals from cluster $h$, while incoherent signals from other clusters are suppressed, resulting in a coherent estimate of the desired cluster update.

The $n$-th symbol becomes:
\begingroup
\allowdisplaybreaks
\begin{align}
&y_{PS}^{n,(h)}(t) = \underbrace{\sum_{m \in \mathcal{S}_h(t)} \left( \frac{1}{K} \sum_{k=1}^{K} |h_{m,k}^{n}(t)|^2 \right) \Delta \theta_{m}^{n,cx}(t)}_{\text{(1) Signal Term}} \nonumber\\
&+ \underbrace{\sum_{m \in \mathcal{S}_h(t)}\sum_{\substack{m' \in \mathcal{S}_h(t) \\ m' \ne m}} \left( \frac{1}{K} \sum_{k=1}^{K} (h_{m,k}^{n}(t))^* h_{m',k}^{n}(t) \right) \Delta \theta_{m'}^{n,cx}(t)}_{\text{(2) Intra-Cluster Interference}} \nonumber\\
&+ \!\underbrace{\sum_{\substack{g \in [H] \\ g \ne h}} \sum_{\substack{m\in \mathcal{S}_h(t)\\ m'\in \mathcal{S}_g(t)}} \!\!\left( \!\frac{1}{K} \sum_{k=1}^{K} (h_{m,k}^{n}(t))^* h_{m',k}^{n}(t) \!\right) \Delta \theta_{m'}^{n,cx}(t)}_{\text{(3) Inter-Cluster Interference}} \nonumber\\ &+ \underbrace{\sum_{m \in \mathcal{S}_h(t)} \left( \frac{1}{K} \sum_{k=1}^{K} (h_{m,k}^{n}(t))^* z_{PS,k}^{n}(t) \right)}_{\text{(4) Noise Term}}.
\end{align}
\endgroup

The combined signal $y_{PS}^{n,(h)}(t)$ is similar to that in \eqref{rec_signal}, consisting of the desired signal term, interference terms, and the noise term. However, due to clustering and the use of partial CSI in the cluster-wise weighted combining (CWC) approach, the interference term consists of two components: (1) intra-cluster interference, arising from imperfect alignment among users within the same cluster, which diminishes as the number of antennas $K \to \infty$; and (2) inter-cluster interference, caused by signal leakage from other clusters due to overlapping channels, which also vanishes under i.i.d. channel assumptions and large $K$.

Ignoring the interference and noise terms in the combined signal, one can use the signal term to estimate the desired cluster average as
\begin{equation}
    \sum_{m \in \mathcal{S}_h(t)} \left( \frac{1}{K} \sum_{k=1}^{K} |h_{m,k}^{n}(t)|^2 \right) \Delta\theta_{m}^{n,\text{cx}}(t),
\end{equation}
which aggregates the users' updates in cluster $h$ along with their corresponding effective channel gains.
The PS then recovers the $n$-th component of the update for cluster $h$ as:
\begin{align}
    \Delta \hat{\boldsymbol{\theta}}_h^n(t) &= \frac{1}{|\mathcal{S}_h(t)| \sigma_h^2} \, \text{Re}\left\{ y_{PS}^{n,(h)}(t) \right\}, \\
    \Delta \hat{\boldsymbol{\theta}}_h^{n+N}(t) &= \frac{1}{|\mathcal{S}_h(t)| \sigma_h^2} \, \text{Im}\left\{ y_{PS}^{n,(h)}(t) \right\},
\end{align}
where $\sigma_h^2$ is the average per-user channel power for cluster $h$. Finally, the PS updates the model of cluster $h$ by:
\begin{equation}
    \boldsymbol{\theta}_h(t+1) = \boldsymbol{\theta}_h(t) + \Delta \hat{\boldsymbol{\theta}}_h(t).
\end{equation}

This combining strategy is closely related to the one introduced in Section~\ref{sec2}, particularly in ~\eqref{combined_signal}, where the global model is recovered by summing all user channel gains across the entire network. In contrast, the current method focuses on combining at the cluster level, using only the sum of the channel gains within a specific cluster. While the core idea remains similar, leveraging conjugate channel responses for coherent combining, the interference characteristics differ. Specifically, the current approach introduces both intra-cluster interference due to misalignment among users within the same cluster and inter-cluster interference from overlapping channels between users in different clusters. 
Although learning performance is expected to decrease slightly due to additional intra-cluster interference, our proposed approach enables training multiple personalized models based on user data characteristics and preferences. Hence, despite this additional impairment, the approach can help users achieve improved learning performance for their specific applications.

% ------------------------------------------------------------
% ------------------------------------------------------------
% ------------------------------------------------------------

% ------------------------------------------------------------
% ------------------------------------------------------------
% ------------------------------------------------------------

\section{\MakeUppercase{Numerical Results}}
\label{sec5}

In this section, we present numerical results for the proposed unified framework, which comprises two complementary components: a cluster-aware scheduling approach for fair global model training and a personalized clustered federated learning scheme for OTA-FL with energy-harvesting mobile devices under highly heterogeneous data distributions.

\subsection{\MakeUppercase{Performance of Scheduling Strategies for Global FL}}

We evaluate the performance of our proposed user scheduling methods across multiple scenarios. We consider image classification tasks on the MNIST \cite{deng2012mnist}, FMNIST \cite{xiao2017fashionmnistnovelimagedataset}, and CIFAR-10 \cite{cifar10} datasets under non-i.i.d. data distributions. 
For MNIST and FMNIST, we use a single-layer network with 784 inputs and 10 outputs ($2N = 7850$); for CIFAR-10, we adopt a CNN ($2N = 797962$) as in \cite{acar2021federated}.
Training is performed using SGD with a learning rate of 0.05, a learning-rate scheduler, $\tau = 5$, and a mini-batch size of $|\xi_m(t)| = 100$ for MNIST and FMNIST, and $\tau = 3$ and $|\xi_m(t)| = 128$ for CIFAR-10.

To simulate highly non-i.i.d. data, we consider two different distribution scenarios. In the first scenario, users' data is limited to a fixed number of labels, either 1 or 2 classes assigned per user. In the second one, we use the Dirichlet distribution to sample $\boldsymbol{p_m} \sim \text{Dir}_{N_c}(\beta)$ with $ \boldsymbol{p_m} = [p_{m,0}, \cdots, p_{m,N_c-1}]$, and user  $m$  receives  $p_{m,{n_c}}$ portion of its data from class  $n_c \in [N_c]$. $\beta$ is a Dirichlet distribution parameter, where smaller values of $\beta$ lead to more unbalanced partitions. We evaluate our setup on highly non-i.i.d. data against a no-scheduling baseline, in which users participate whenever they have sufficient energy.
Throughout the simulations, users are connected to a PS via a wireless fading MAC, in which the channel gains from each user to each PS antenna are i.i.d. The selected parameters are $K = 200$, $\sigma_{h}^{2} = 1$, and $\sigma_{z}^{2} = 0.1$.

% \begin{figure}
%     \centering
%     % First figure
%     \begin{subfigure}{0.48\columnwidth}
%         \centering
%         \includegraphics[trim={0.4cm 0.0cm 0.37cm 0.3cm},clip,width=\textwidth]{figures/ch3/upd_ws1_ci_dir01_100_tnr.pdf}
%         \vspace{-0.6cm}
%         \captionsetup{font=scriptsize}
%         \caption{$\beta = 0.1$.}
%         \label{fig:ci100_dir01}
%     \end{subfigure}
%     % Second figure
%     \begin{subfigure}{0.48\columnwidth}
%         \centering
%         \includegraphics[trim={0.4cm 0.0cm 0.37cm 0.3cm},clip,width=\textwidth]{figures/ch3/upd_ws1_ci_dir02_100_tnr.pdf}
%         \vspace{-0.6cm}
%         \captionsetup{font=scriptsize}
%         \caption{$\beta = 0.2$.}
%         \label{fig:ci100_dir02}
%     \end{subfigure}
%     \vspace{-0.15cm}
%     \captionsetup{font=footnotesize}
%     \caption{Test accuracy of entropy-based scheduling for CIFAR-10 with $M = 100$,
%     $\left| \mathcal{B}_{m} \right| = 500$ and $p_e^{m}(t) = 0.1$.}
%     \label{fig:fig_ci100_dir0102}
%     % \vspace{-0.7cm}
% \end{figure}

\begin{figure}[t]
\centering

\subfloat[$\beta=0.1$.]{
\includegraphics[trim={0.4cm 0.0cm 0.37cm 0.3cm},clip,width=0.47\columnwidth]{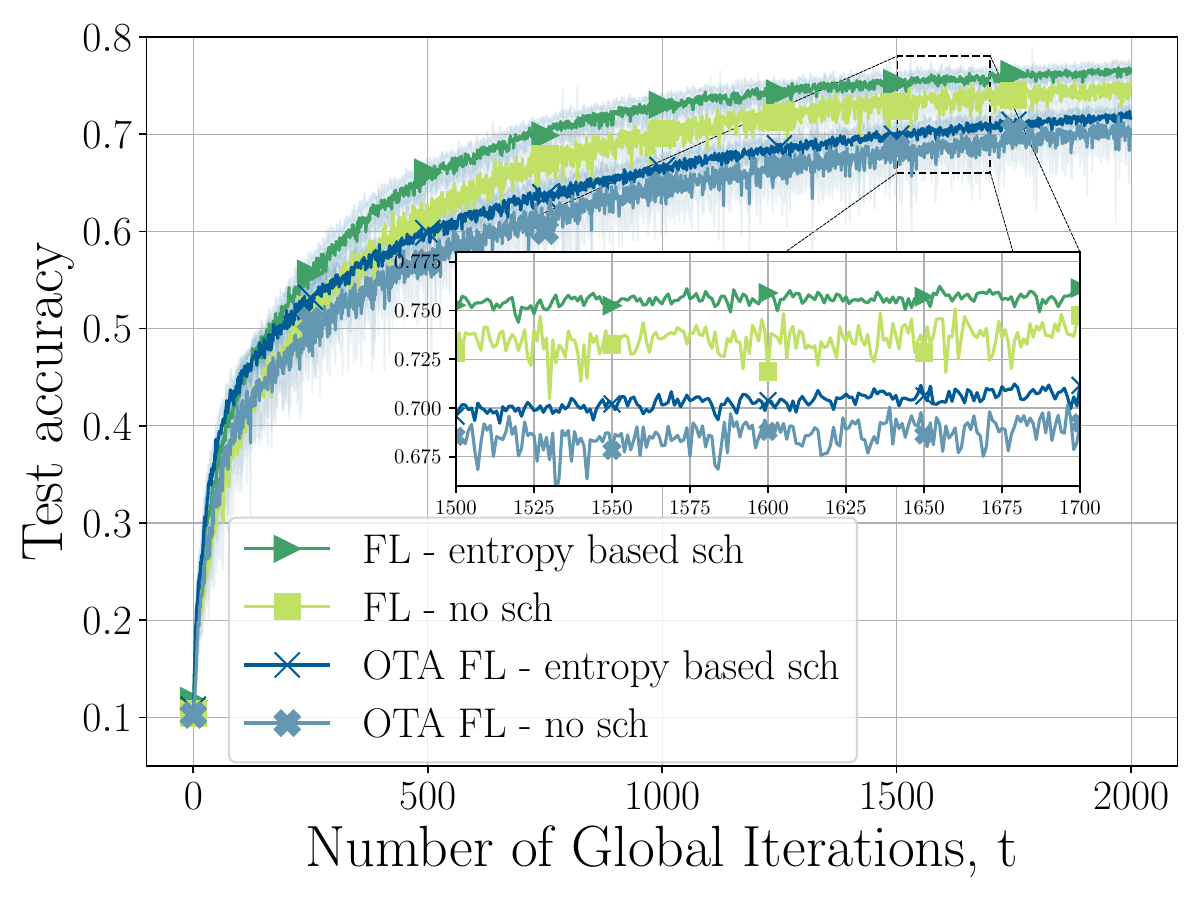}
\label{fig:ci100_dir01}
}
\hfill
\subfloat[$\beta=0.2$.]{
\includegraphics[trim={0.4cm 0.0cm 0.37cm 0.3cm},clip,width=0.47\columnwidth]{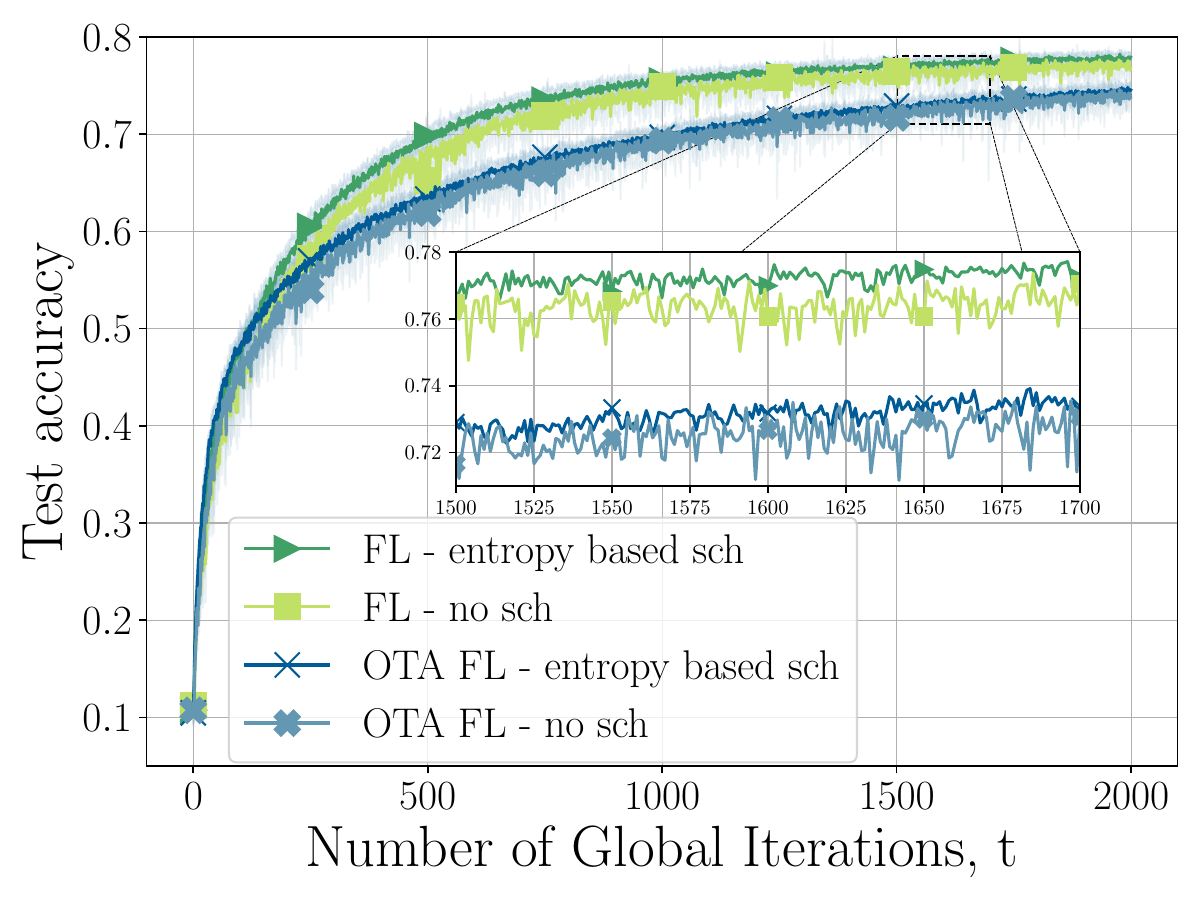}
\label{fig:ci100_dir02}
}

\vspace{-0.15cm}
\caption{Test accuracy of entropy-based scheduling for CIFAR-10 with $M=100$,
$\left|\mathcal{B}_{m}\right|=500$, and $p_e^{m}(t)=0.1$.}
\label{fig:fig_ci100_dir0102}
\end{figure}

In Fig. \ref{fig:fig_ci100_dir0102}, we demonstrate the performance of entropy-based scheduling for CIFAR-10 with $\beta \in \{0.1,0.2\}$  for $M = 100$ users with $\left| \mathcal{B}_{m} \right| = 500$ and $p_e^{m}(t) = 0.1$ for $m \in [M]$. We observe that the gains of our scheme are greater in scenarios with greater heterogeneity, in both the error-free and OTA FL cases. As the distribution becomes more heterogeneous (e.g., $ \beta=0.1$ in Fig. \ref{fig:ci100_dir01}), the impact of the proposed entropy-based scheduling increases. These plots demonstrate that diverse user selection, which yields a more balanced distribution of aggregated data, leads to higher accuracy. The improvements are evident in both error-free FL and OTA FL setups, highlighting the effectiveness of diversity-aware scheduling in mitigating the effects of data heterogeneity.

% \begin{figure}
%     \centering
%     % First figure
%     % \vspace{-7pt}
%     \begin{subfigure}{0.48\columnwidth}
%         \centering
%         \includegraphics[trim={0.4cm 0.2cm 0.37cm 0.3cm},clip,width=\textwidth]{figures/ch3/upd_ws1_mn40_niid1_100_tnr.pdf}
%         \vspace{-0.6cm}
%         \captionsetup{font=scriptsize}
%         \caption{1 class per user and $T=100$.}
%         \label{fig:mn40_niid1}
%     \end{subfigure}
%     % Second figure
%     \begin{subfigure}{0.48\columnwidth}
%         \centering
%         \includegraphics[trim={0.4cm 0.2cm 0.37cm 0.3cm},clip,width=\textwidth]{figures/ch3/upd_ws1_mn40_niid2_100_tnr.pdf}
%         \vspace{-0.6cm}
%         \captionsetup{font=scriptsize}
%         \caption{2 class per user and $T=200$.}
%         \label{fig:mn40_niid2}
%     \end{subfigure}
%     \vspace{-0.15cm}
%     \captionsetup{font=footnotesize}
%     \caption{Test accuracy for MNIST with $M \!=\! 40$, $\left| \mathcal{B}_{m} \right| \!=\! 1250$, $p_e^{m}(t) \!=\! 0.25$.}
%     \label{fig:fig_mn40_niid12}
%     % \vspace{-0.35cm}
% \end{figure}

\begin{figure}[t]
\centering

\subfloat[1 class per user and $T=100$.]{
\includegraphics[trim={0.4cm 0.2cm 0.37cm 0.3cm},clip,width=0.47\columnwidth]
{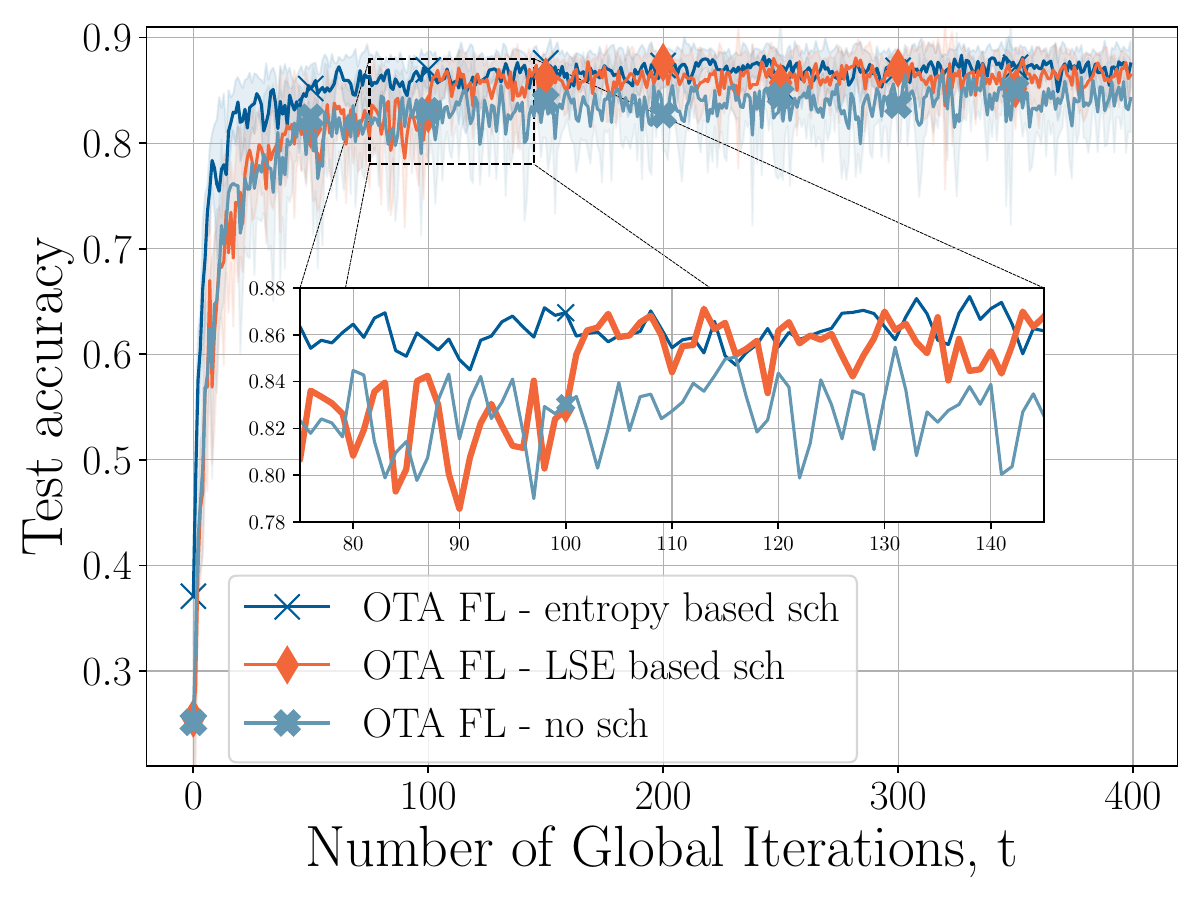}
\label{fig:mn40_niid1}
}
\hfill
\subfloat[2 classes per user and $T=200$.]{
\includegraphics[trim={0.4cm 0.2cm 0.37cm 0.3cm},clip,width=0.47\columnwidth]
{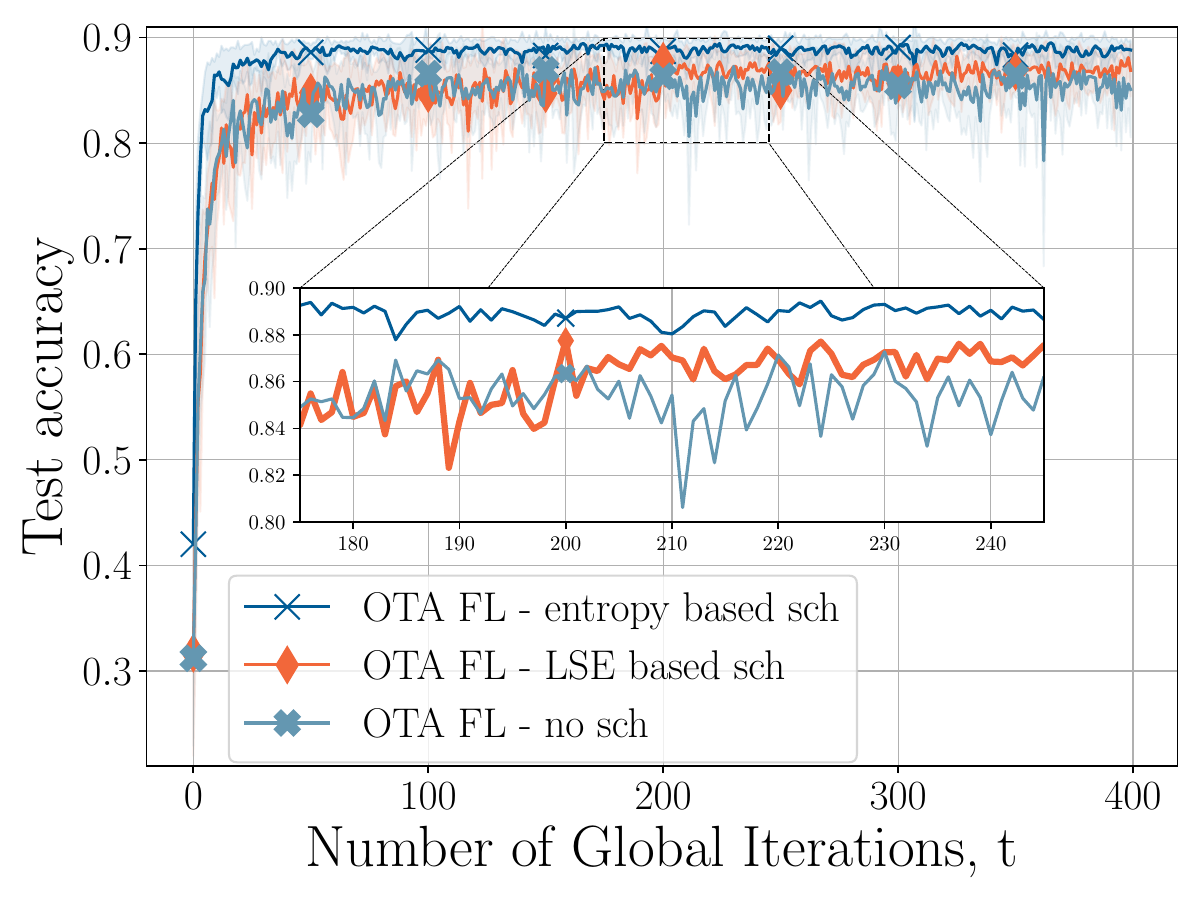}
\label{fig:mn40_niid2}
}

\vspace{-0.15cm}
\caption{Test accuracy for MNIST with $M=40$, $\left|\mathcal{B}_{m}\right|=1250$, and $p_e^{m}(t)=0.25$.}
\label{fig:fig_mn40_niid12}
\end{figure}

Fig. \ref{fig:fig_mn40_niid12} shows the mean test accuracies when the data distributions are not known at the PS. We consider scenarios with one class or two classes per user on the MNIST dataset, with $M = 40$, $\left| \mathcal{B}_{m} \right| = 1250$, $p_e^{m}(t) = 0.25$ for $m \in [M]$ and estimation phases of $T=100$ and $T=200$ iterations. In both cases, entropy-based scheduling yields higher and more stable accuracy. For cases with unknown data distributions, the PS estimates local user representations after $T$ iterations and groups users into 10 clusters, scheduling one user per cluster each iteration. 
As in the previous case, our scheme achieves greater gains in more heterogeneous scenarios. Notably, although estimation and clustering become more difficult in less heterogeneous cases, the proposed scheduling approach still consistently outperforms the no-scheduling baseline.

% \begin{figure}[t]
%     \centering
%     % First figure
%     \begin{subfigure}{0.48\columnwidth}
%         \centering
%         \includegraphics[trim={0.4cm 0.2cm 0.37cm 0.3cm},clip,width=\textwidth]{figures/ch3/upd_ws1_mn20_niid1_100_tnr.pdf}
%         \vspace{-0.6cm}
%         \captionsetup{font=scriptsize}
%         \caption{MNIST, $M = 20$, 1 class per user and $T=100$.}
%         \label{fig:mn20_niid1}
%     \end{subfigure}
%     % % Second figure
%     \begin{subfigure}{0.48\columnwidth}
%         \centering
%         \includegraphics[trim={0.4cm 0.2cm 0.37cm 0.3cm},clip,width=\textwidth]{figures/ch3/upd_ws1_fas40_niid1_200_tnr.pdf}
%         \vspace{-0.6cm}
%         \captionsetup{font=scriptsize}
%         \caption{FMNIST, $M = 40$, 1 class per user and $T=200$.}
%         % \caption{1 class per user and $T=200$.}
        
%         \label{fig:fas40_niid1}
%     \end{subfigure}
%     \vspace{-0.15cm}
%     \captionsetup{font=footnotesize}
%     \caption{Test accuracy for MNIST and FMNIST.}
%     % \caption{Test accuracy for FMNIST with $M \!=\! 40$, $\left| \mathcal{B}_{m} \right| \!=\! 1250$, $p_e^{m}(t) \!=\! 0.25$. }
%     \label{fig:fig_mn20_fas40}
%     % \vspace{-0.7cm}
% \end{figure}

\begin{figure}[t]
\centering

\subfloat[MNIST, $M=20$, 1 class per user and $T=100$.]{
\includegraphics[trim={0.4cm 0.2cm 0.37cm 0.3cm},clip,width=0.47\columnwidth]
{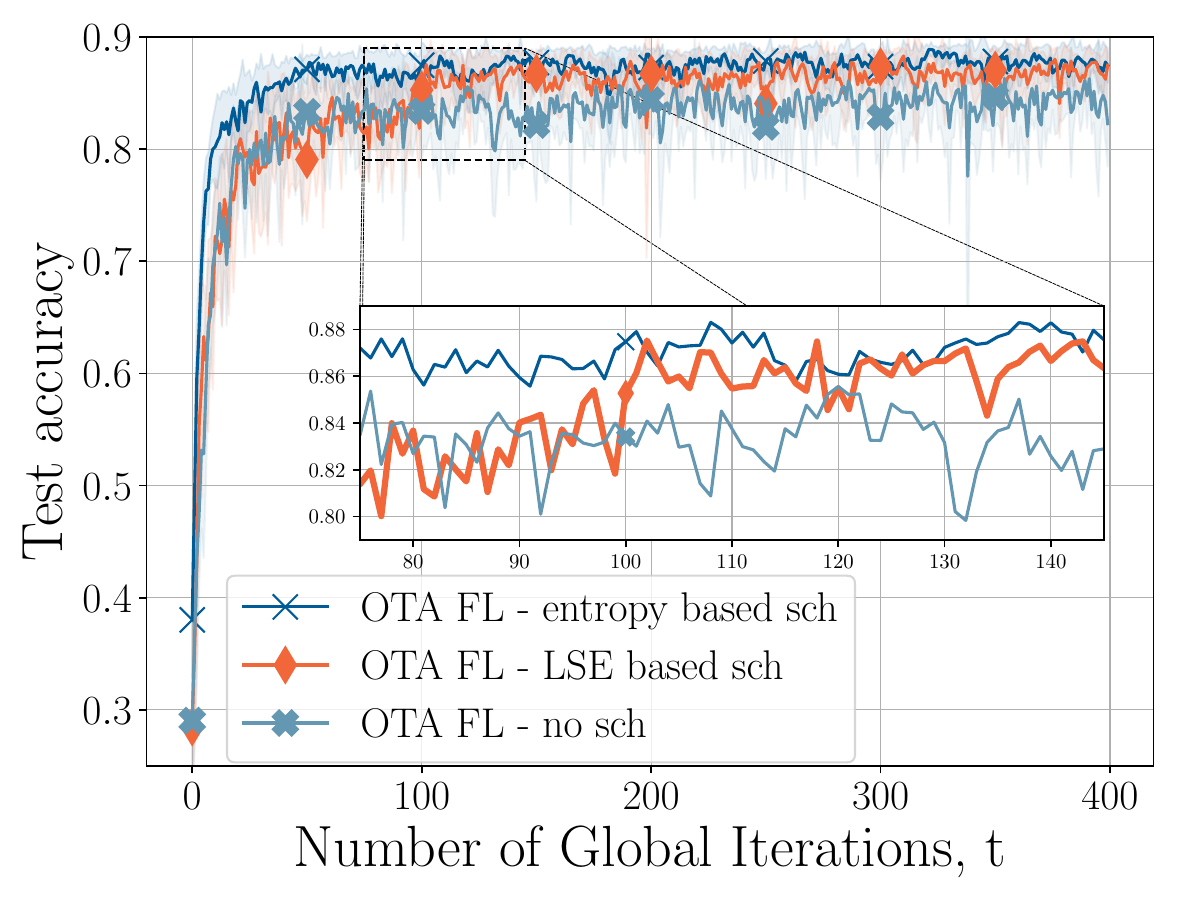}
\label{fig:mn20_niid1}
}
\hfill
\subfloat[FMNIST, $M=40$, 1 class per user and $T=200$.]{
\includegraphics[trim={0.4cm 0.2cm 0.37cm 0.3cm},clip,width=0.47\columnwidth]
{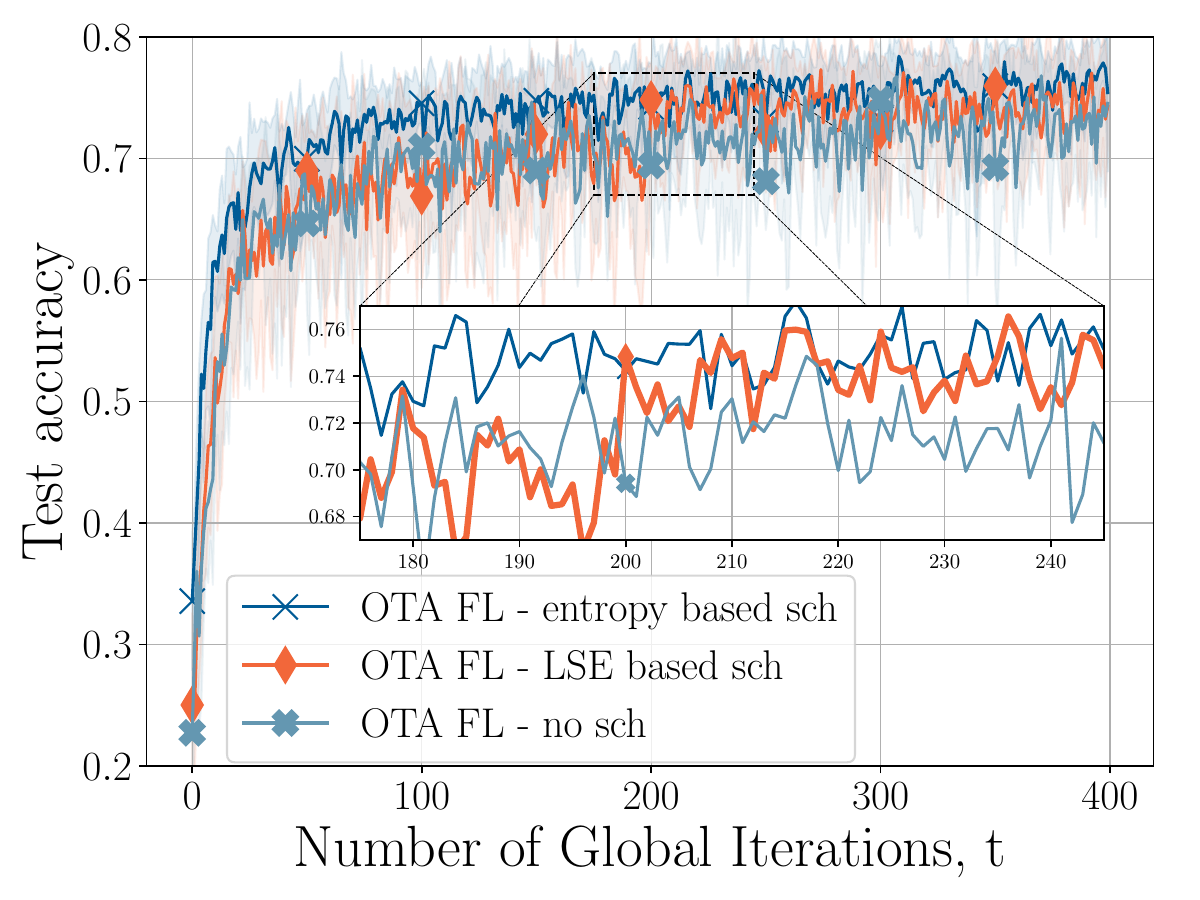}
\label{fig:fas40_niid1}
}

\vspace{-0.15cm}
\caption{Test accuracy for MNIST and FMNIST.}
\label{fig:fig_mn20_fas40}
\end{figure}

% To demonstrate the effectiveness of our proposed method on a different dataset, 
While Fig.~\ref{fig:mn20_niid1} shows trends similar to previous cases, Fig.~\ref{fig:fas40_niid1} presents the performance of our scheduling policies on the FMNIST dataset. In this setup, we consider $M = 40$ users, each with a local dataset size of $\left| \mathcal{B}_{m} \right| = 1250$, and  $p_e^{m}(t) = 0.25$ for $m \in [M]$.  
% Users are divided into 10 scheduling clusters, and one user from each cluster is selected per iteration, following the same approach as in the previous case. 
As in the MNIST setting, entropy-based scheduling outperforms the no-scheduling baseline when data distributions are known. With unknown data distributions, performance improves after the estimation phase and approaches the entropy-based case.

In summary, when data distributions are known, entropy-based scheduling consistently outperforms the no-scheduling baseline by selecting a more representative set of users.
In more realistic settings with unknown data distributions, performance improves after estimation and approaches that of entropy-based scheduling. These results demonstrate that diversity-aware scheduling improves OTA FL for EH devices by mitigating bias due to partial participation.
Moreover, least-squares–based user-representation estimation enables effective scheduling using only aggregated OTA signals, thereby preserving privacy while limiting redundant updates. 
Overall, both entropy- and LSE-based methods better approximate full participation, leading to faster convergence and improved generalization under communication and energy constraints.

\subsection{\MakeUppercase{Performance of Combining Strategies for Clustered FL}}

In this section, we assess the effectiveness of the proposed user-combining strategies for clustered federated learning across various scenarios, aiming to train multiple personalized models tailored to each cluster to better reflect user characteristics. We consider image classification on the MNIST dataset \cite{deng2012mnist} under non-i.i.d. data distributions, using the same setup as in the previous simulations.

We evaluate the three combining methods introduced earlier. In the first setting, MMSE combining with full CSI, the PS has access to complete CSI for each user-to-antenna link, enabling a more precise recovery of individual user updates. In the second setting, MMSE combined with partial CSI, the PS has access only to aggregated CSI at the cluster level, enabling it to recover the combined updates for each cluster. Finally, in the third setting, CWC with partial CSI, the PS employs a beamforming-like approach using the same aggregated cluster-level CSI to extract cluster-specific updates.

To simulate a highly non-i.i.d.\ setting, we adopt a label-partitioning strategy in which each user is assigned data from a single class. For the initial experiments, we employ a simple clustering approach based on class labels, grouping users by their assigned classes. Specifically, we define three clusters ($H = 3$) to evaluate the effectiveness of the proposed methods in a controlled environment. The first cluster contains users with data from classes 0--2, the second from classes 3--6, and the third from the remaining classes.

We compare our cluster-specific methods with two standard FL baselines that train a single global model. In the first baseline, the model is trained using individual user updates recovered under full CSI, similar to MMSE combining with full CSI.
Instead of maintaining separate cluster models, all recovered user updates are averaged to update a single global model. In the second baseline, following Section~\ref{sec3}, PS combines signals from the $K$ antennas by summing the overall channel gains to produce a single global model. 
For Figs. \ref{fig:M20K20} and \ref{fig:M20K5vsK20}, each cluster-specific model is evaluated on the test samples whose labels belong to the corresponding cluster, and the single-global-model baselines are evaluated on the same cluster-specific test sets for a fair comparison.

% \begin{figure*}[t]
%     \centering
%     % First figure
%     \begin{subfigure}{0.5\columnwidth}
%         \centering
%         \includegraphics[clip,width=\textwidth]{figures/ch4/M20K20new1.pdf}
%         \vspace{-0.6cm}
%         \captionsetup{font=scriptsize}
%         \caption{Cluster 1}
%         \label{fig:M20K20_1}
%     \end{subfigure}
%     % Second figure
%     \begin{subfigure}{0.5\columnwidth}
%         \centering
%         \includegraphics[clip,width=\textwidth]{figures/ch4/M20K20new2.pdf}
%         \vspace{-0.6cm}
%         \captionsetup{font=scriptsize}
%         \caption{Cluster 2}
%         \label{fig:M20K20_2}
%     \end{subfigure}
%     \begin{subfigure}{0.5\columnwidth}
%         \centering
%         \includegraphics[clip,width=\textwidth]{figures/ch4/M20K20new3.pdf}
%         \vspace{-0.6cm}
%         \captionsetup{font=scriptsize}
%         \caption{Cluster 3}
%         \label{fig:M20K20_3}
%     \end{subfigure}
%     \vspace{-0.15cm}
%     \captionsetup{font=footnotesize}
%     \caption{Test accuracy for CFL with $M = 20$, $H=3$ and $K=20$.}
%     \label{fig:M20K20}
%     \vspace{-0.45cm}
% \end{figure*}

\begin{figure*}[t]
\centering

\subfloat[Cluster 1]{
\includegraphics[clip,width=0.30\textwidth]
{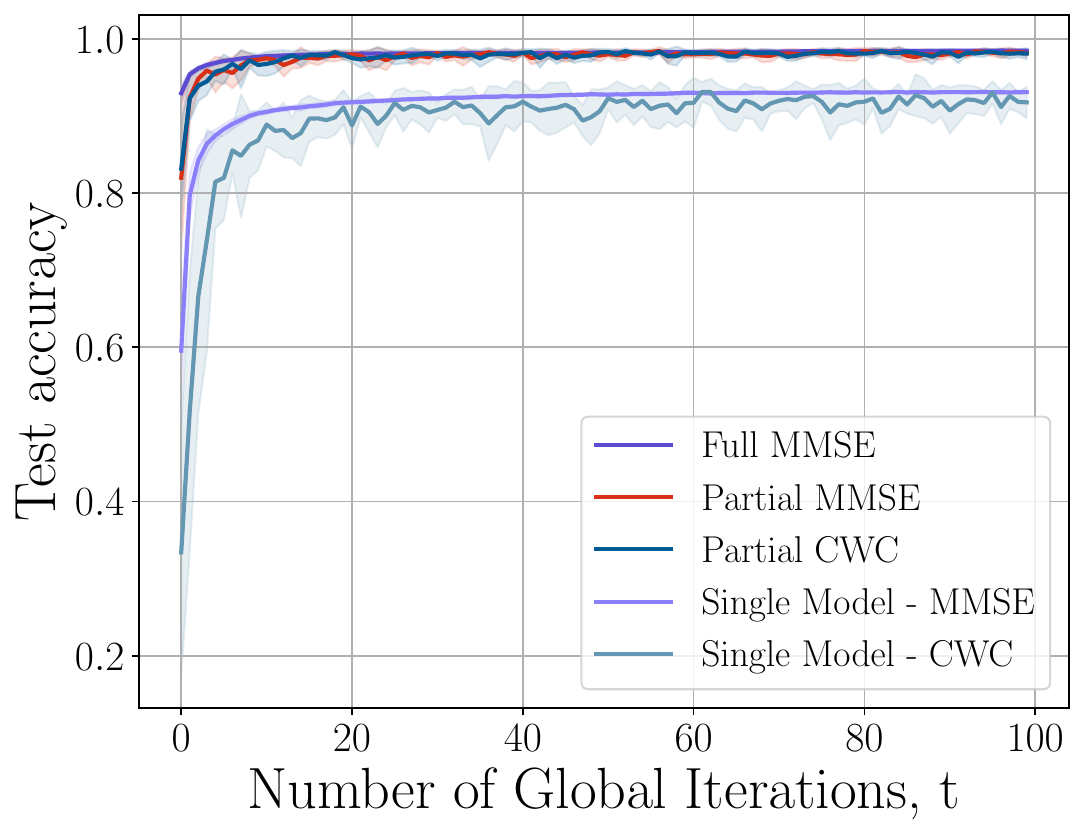}
\label{fig:M20K20_1}
}
\hfill
\subfloat[Cluster 2]{
\includegraphics[clip,width=0.30\textwidth]
{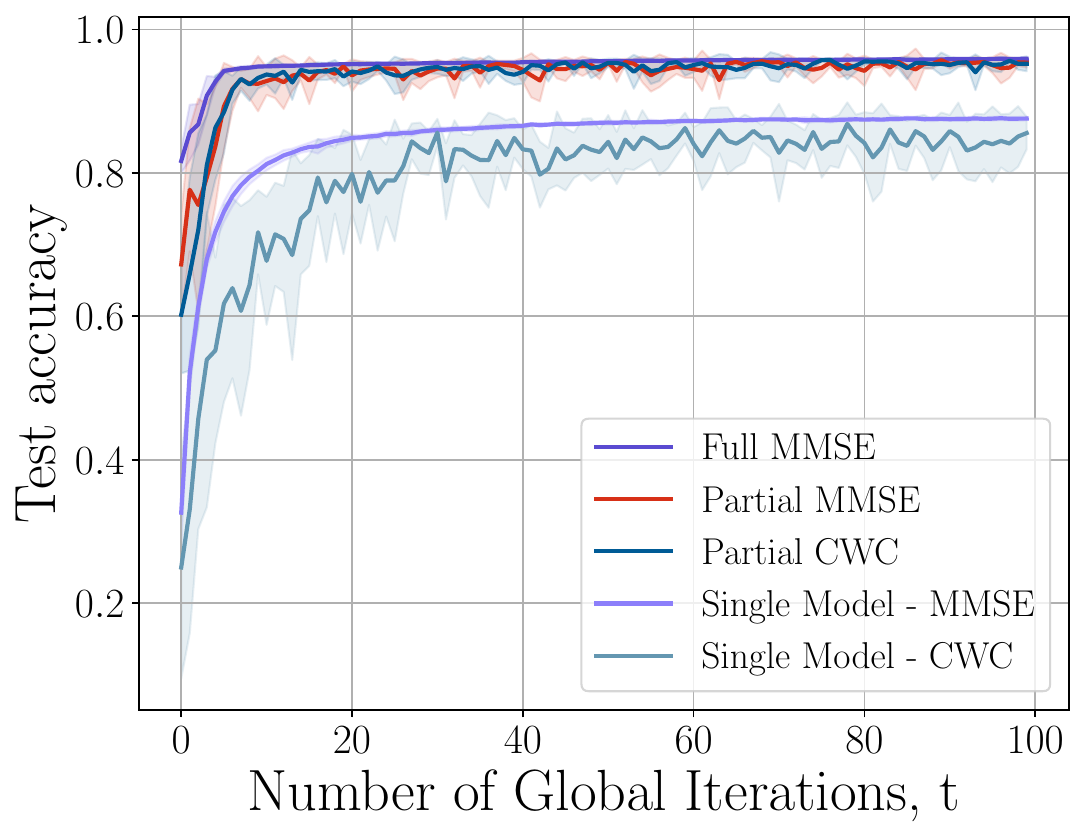}
\label{fig:M20K20_2}
}
\hfill
\subfloat[Cluster 3]{
\includegraphics[clip,width=0.30\textwidth]
{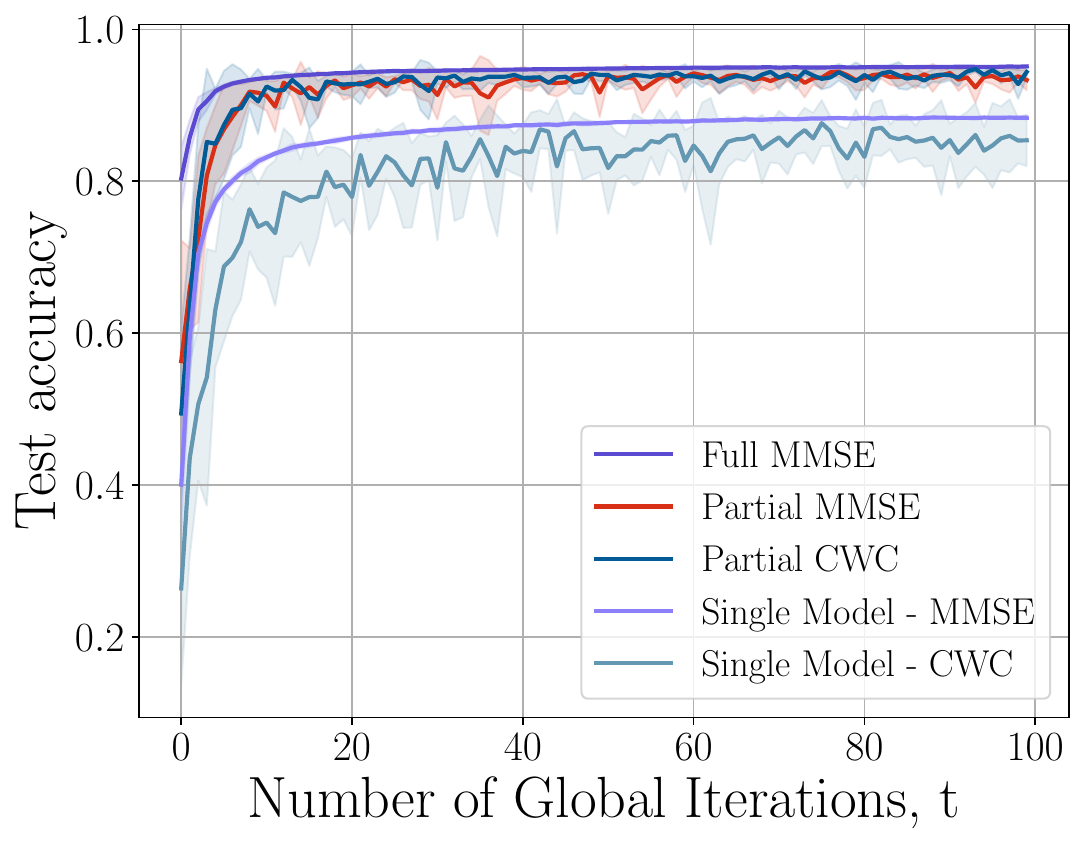}
\label{fig:M20K20_3}
}

\vspace{-0.15cm}
\caption{Test accuracy for CFL with $M=20$, $H=3$, and $K=20$.}
\label{fig:M20K20}
\vspace{-0.45cm}

\end{figure*}

To isolate the combining gain from EH effects, we first consider an ideal case where all users participate in every iteration.
The results under this full‐participation assumption are shown in Fig.~\ref{fig:M20K20}.
We set $M = 20$, $K = 20$, $\sigma_z^2 = 10^{-6}$, and $\sigma_h^2 = 1$. 
We observe that all three cluster-based models, which provide more specialized and personalized global models for users, consistently outperform single-global-model approaches. Notably, even without access to full CSI, both the partial MMSE and cluster-wise weighted combining methods still benefit significantly from clustering. This highlights that leveraging a clustered structure can yield performance gains that surpass even those of a full CSI-based single model. As expected, the full CSI MMSE method outperforms both partial CSI-based approaches, MMSE and CWC, highlighting the performance advantage of having complete channel knowledge. This aligns with our intuition, as access to fine-grained CSI enables more accurate recovery of updates and model aggregation. 

% \begin{figure*}
%     \centering
%     % First figure
%     \begin{subfigure}{0.5\columnwidth}
%         \centering
%         \includegraphics[clip,width=\textwidth]{figures/ch4/M20K5vs20new1.pdf}
%         \vspace{-0.6cm}
%         \captionsetup{font=scriptsize}
%         \caption{Cluster 1}
%         \label{fig:M20K5vsK20_1}
%     \end{subfigure}
%     % Second figure
%     \begin{subfigure}{0.5\columnwidth}
%         \centering
%         \includegraphics[clip,width=\textwidth]{figures/ch4/M20K5vs20new2.pdf}
%         \vspace{-0.6cm}
%         \captionsetup{font=scriptsize}
%         \caption{Cluster 2}
%         \label{fig:M20K5vsK20_2}
%     \end{subfigure}
%     \begin{subfigure}{0.5\columnwidth}
%         \centering
%         \includegraphics[clip,width=\textwidth]{figures/ch4/M20K5vs20new3.pdf}
%         \vspace{-0.6cm}
%         \captionsetup{font=scriptsize}
%         \caption{Cluster 3}
%         \label{fig:M20K5vsK20_3}
%     \end{subfigure}
%     \vspace{-0.15cm}
%     \captionsetup{font=footnotesize}
%     \caption{Test accuracy for CFL with $M = 20$, $H=3$ and $K\in \{5,20\}$.}
%     \label{fig:M20K5vsK20}
%     \vspace{-0.6cm}
% \end{figure*}

\begin{figure*}[t]
\centering

\subfloat[Cluster 1]{
\includegraphics[clip,width=0.30\textwidth]
{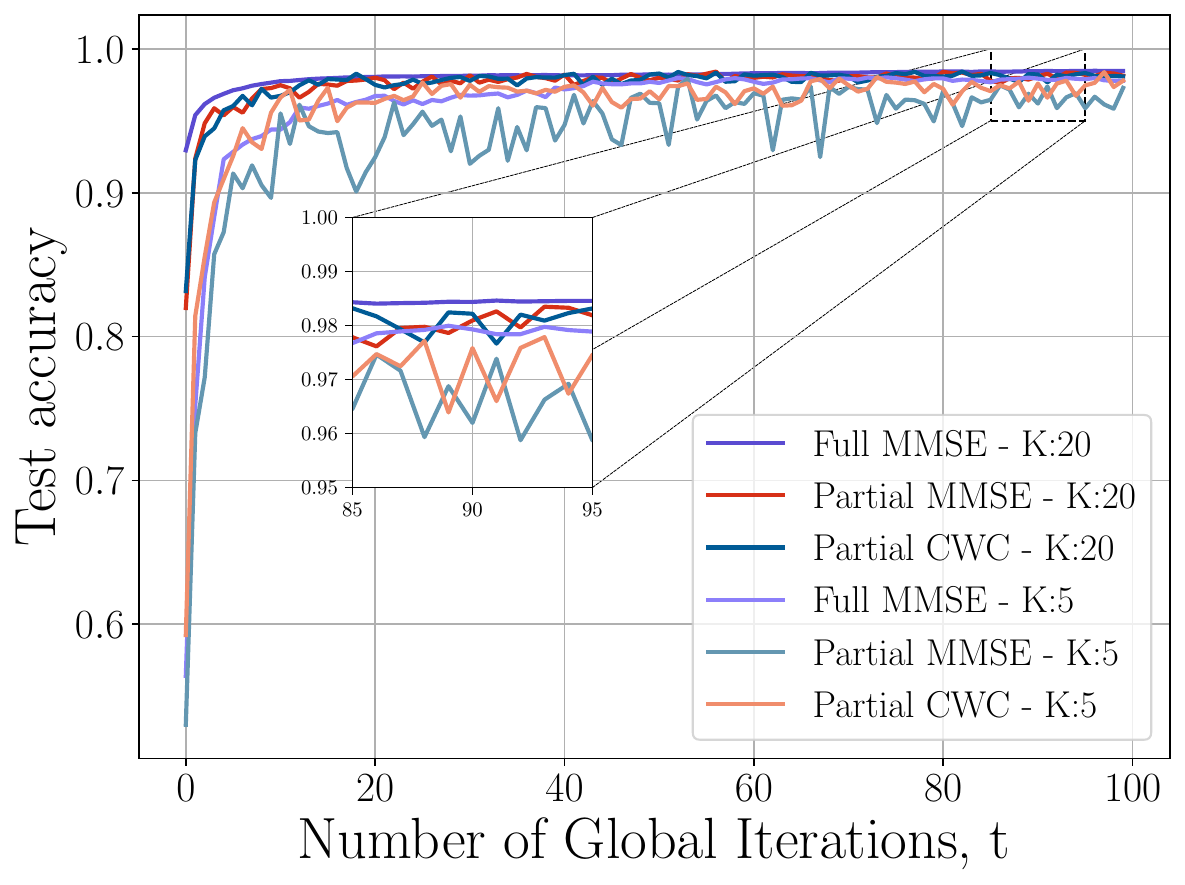}
\label{fig:M20K5vsK20_1}
}
\hfill
\subfloat[Cluster 2]{
\includegraphics[clip,width=0.30\textwidth]
{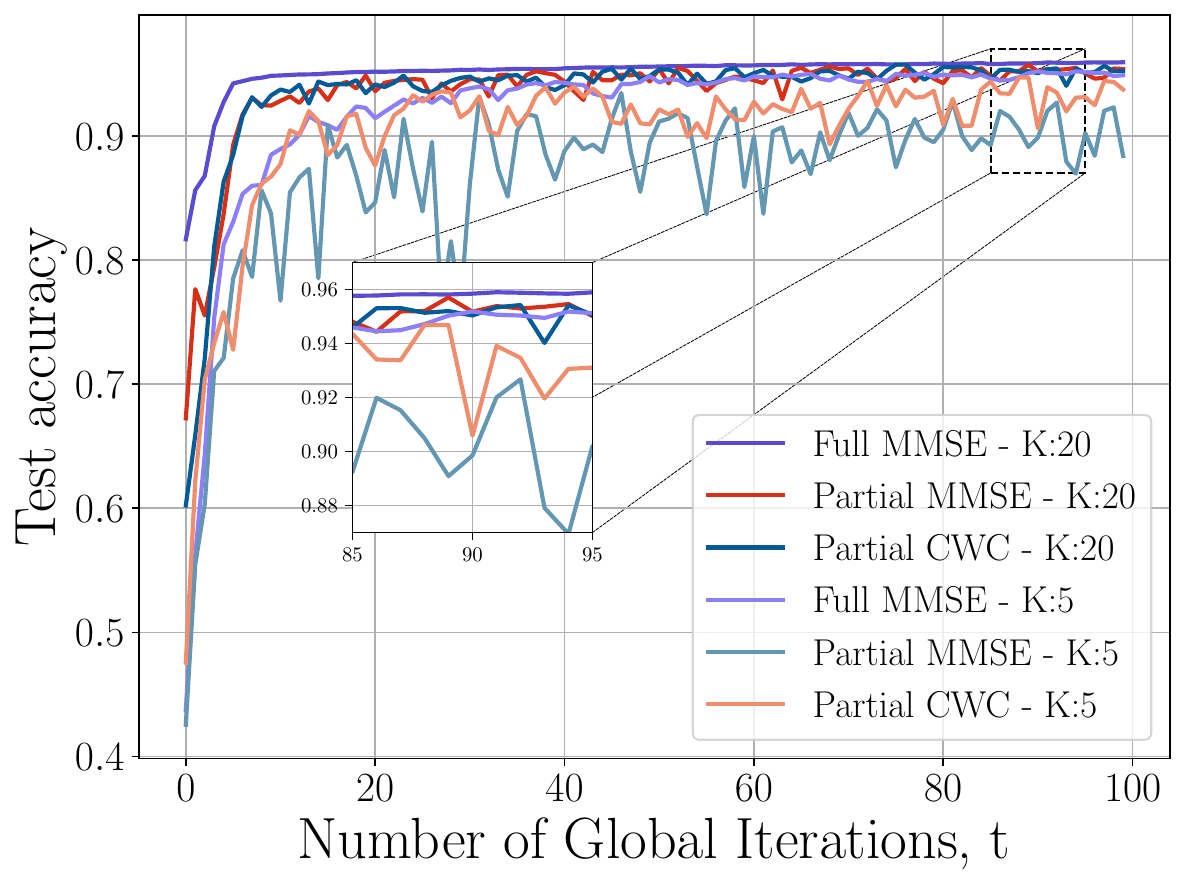}
\label{fig:M20K5vsK20_2}
}
\hfill
\subfloat[Cluster 3]{
\includegraphics[clip,width=0.30\textwidth]
{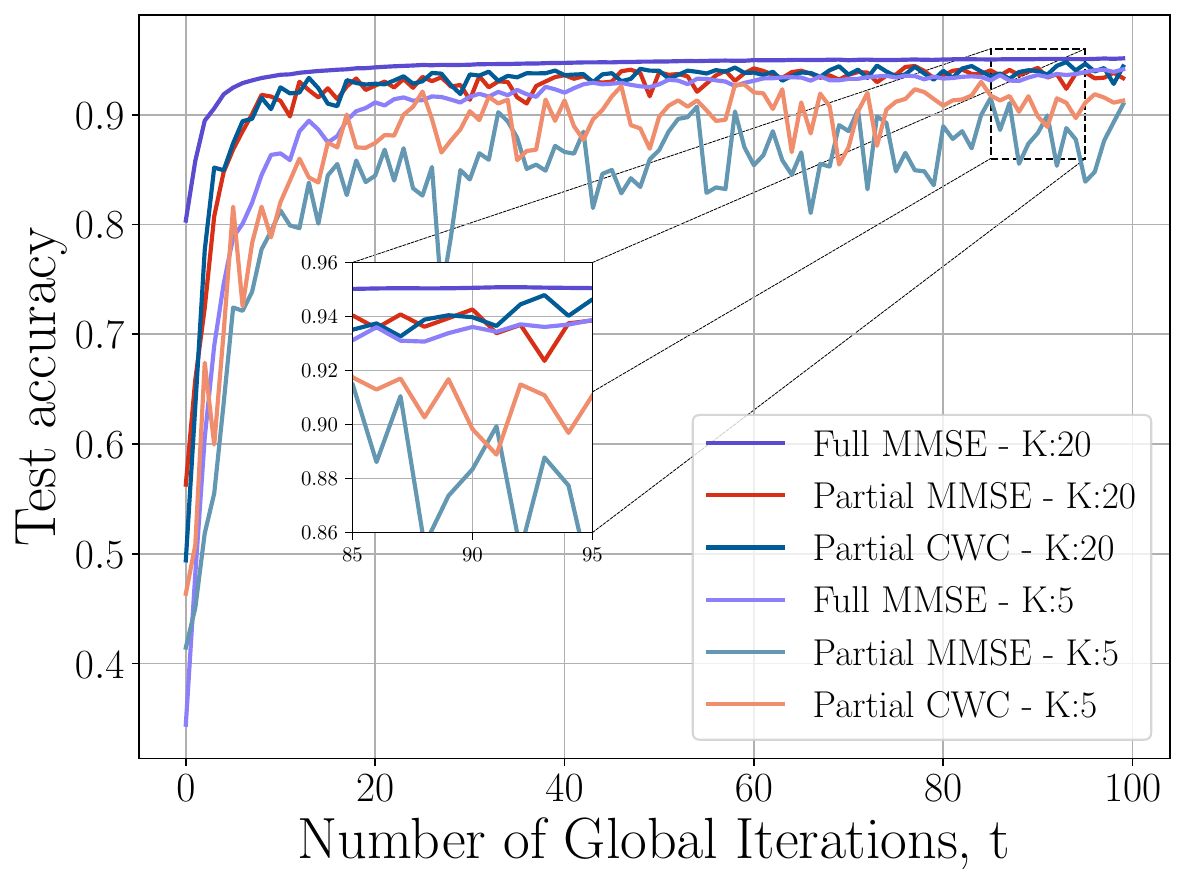}
\label{fig:M20K5vsK20_3}
}

\vspace{-0.15cm}
\caption{Test accuracy for CFL with $M=20$, $H=3$, and $K\in\{5,20\}$.}
\label{fig:M20K5vsK20}
\vspace{-0.6cm}

\end{figure*}

In Fig.~\ref{fig:M20K5vsK20}, we compare the performance of our proposed combining methods under different numbers of receiver antennas. Specifically, we evaluate setups with $K = 5$ and $K = 20$ antennas, using the same channel model as in the previous setup, to examine the effect of antenna count on each combining method's performance. We observe that the full CSI MMSE method exhibits the highest resilience to reduced antenna count, with only a slight performance drop compared to the $K = 20$ case. In contrast, the partial CSI methods (both MMSE and CWC) exhibit notably lower, less stable learning performance under limited-antenna conditions. Nevertheless, they achieve a relatively acceptable accuracy, typically around 5–10 percentage points lower than that of the full CSI MMSE setup with $K = 20$.

Next, we present the performance of our proposed combining methods on the CIFAR-10 dataset \cite{cifar10}. For CIFAR-10, we use a CNN with a total of \( 2N = 61{,}006 \) parameters. The architecture comprises two convolutional layers, each followed by max-pooling, and three fully connected layers. 
We also incorporate EH characteristics for each user, assuming a constant EH probability of \( p_e^{m}(t) = 0.25 \), \( \forall m, t \). We compare our proposed combining methods with the standard FL setup that uses a single global model. In the highly non-i.i.d. setup, we divide users into five categories, each associated with a subset of 4 classes. Each user receives a data distribution drawn from the label subset of its assigned category using a Dirichlet distribution, i.e., \( \boldsymbol{p}_m \sim \text{Dir}_{4}(\beta) \), where \( \boldsymbol{p}_m = [p_{m,0}, \cdots, p_{m,3}] \). Accordingly, user \( m \) receives \( p_{m,n_c} \) portion of its local data from class \( n_c \in [4] \), with the Dirichlet concentration parameter set to \( \beta = 1 \). Note that, for ease of comparison, we report the average cluster-wise accuracy as a single curve.

Fig.~\ref{fig:CFL_cifar} shows that the proposed combining methods successfully demonstrate the feasibility of delivering distinct personalized models to different clusters simultaneously over the air. As expected, the full CSI MMSE method outperforms both the partial MMSE and partial CWC approaches, consistent with earlier results. 
We also observe that the standard FL approach, which learns a single global model for all users, performs significantly worse than the MNIST results shown in Fig.~\ref{fig:M20K20}.
This performance gap likely reflects the increased complexity of the CIFAR-10 dataset, highlighting the importance of deploying more personalized models in such settings.
\begin{figure}[t]
    \centering
    % First figure
    \includegraphics[trim={0 0.3cm 0 0},clip,width=.7\linewidth]{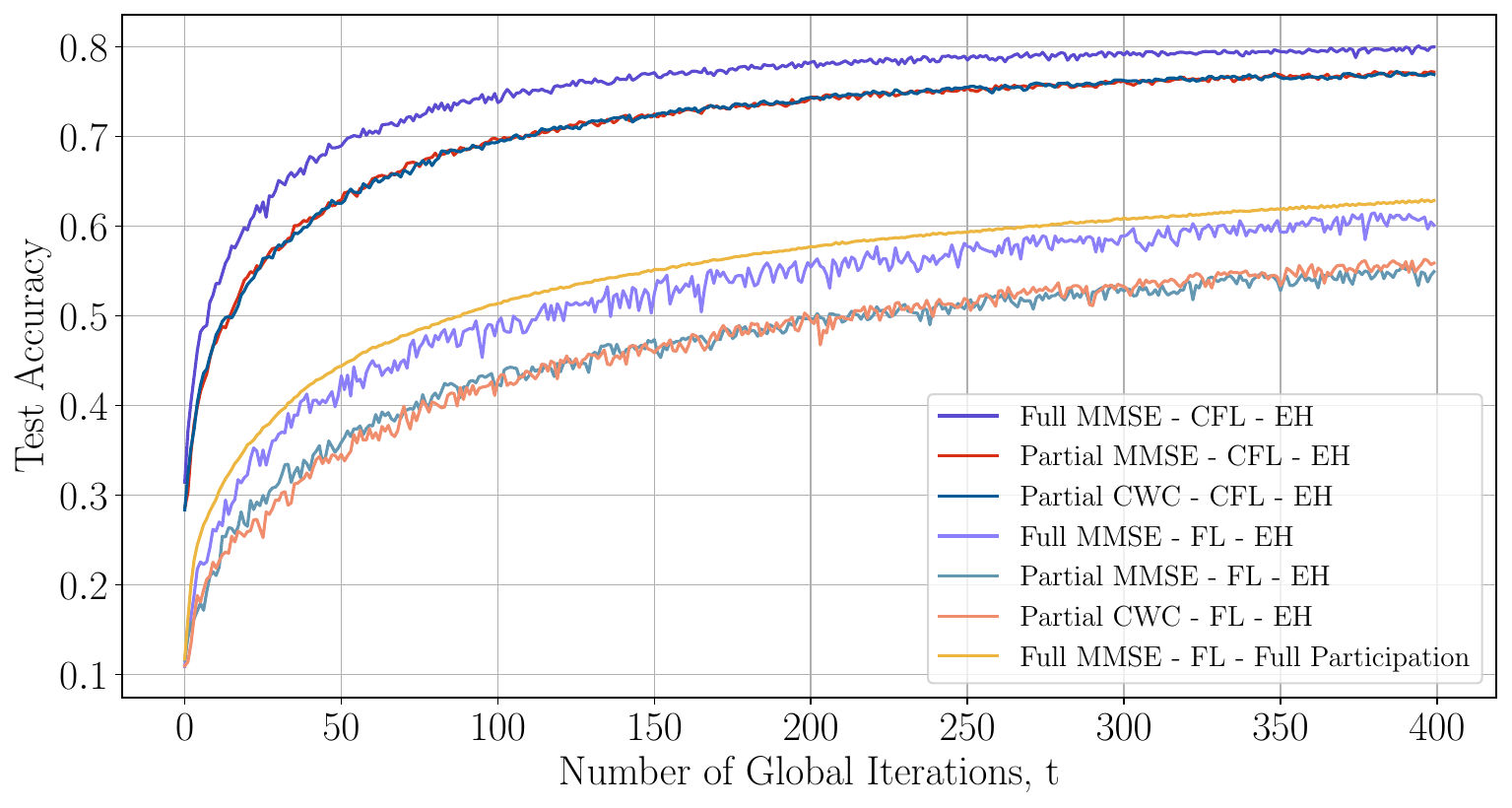}
    % \vspace{-0.2cm}
    %\vspace{-0.15cm}
    \captionsetup{font=footnotesize}
    \caption{Test accuracy for CIFAR-10, CFL with $M = 40$, $H=5$ and $K=80$.}
    \label{fig:CFL_cifar}
    % \vspace{-0.7cm}
\end{figure}

When full CSI is available, the full MMSE method enables accurate recovery of individual updates and achieves the highest performance and robustness, especially with a limited number of antennas. With only partial CSI, partial MMSE and partial CWC provide practical alternatives that extract cluster-level updates to support clustered training. Although their performance is lower than that of full MMSE, they still outperform standard federated learning baselines and enable effective cluster-level personalization under limited CSI. Overall, the results highlight the trade-offs between performance and CSI availability and validate the proposed OTA FL strategies in heterogeneous wireless environments.

\begin{remark}[Comparison of Global and Personalized Learning Paradigms]
The numerical results highlight the complementary nature of the two proposed approaches. When a single model that captures the average characteristics of all users is desired, the cluster-aware global learning framework provides a balanced and fair solution. In contrast, when user-specific performance is prioritized, the personalized clustered learning approach more effectively exploits data heterogeneity to adapt models to individual users' preferences. These two strategies represent opposite ends of the design spectrum, i.e., generalization versus personalization, while both benefit from over-the-air aggregation to achieve communication efficiency under energy-harvesting and wireless constraints. 
\end{remark}

% ------------------------------------------------------------
% ------------------------------------------------------------
% ------------------------------------------------------------

% ------------------------------------------------------------
% ------------------------------------------------------------
% ------------------------------------------------------------

\section{\MakeUppercase{Conclusions}}
\label{conc}
This paper investigates how the inherent clustered structure of heterogeneous users can be exploited to improve federated learning in communication- and energy-constrained wireless networks. 
We first provide a convergence analysis under both decaying and constant stepsizes, characterizing the steady-state error floor and identifying the explicit contributions of OTA channel noise, data heterogeneity, and gradient approximation error to the limiting bound, thereby motivating scheduling.
We then propose cluster-aware, diversity-aware user scheduling strategies that lead to fairer and more representative training when learning a single global model, even under stochastic energy harvesting and OTA aggregation. Specifically, entropy-based scheduling and least-squares–based inference of user representations enable effective client selection for both known and unknown data distributions using only aggregated OTA signals.
We then extend the framework to personalized clustered FL, where separate models are trained for naturally aligned user groups. By developing OTA aggregation and combining methods across different levels of channel state information, the proposed approach enables simultaneous training of multiple personalized models on a single parameter server. These results highlight promising directions for future research on scalable distributed optimization, privacy-preserving inference from aggregated signals, and adaptive communication–learning co-design in large-scale wireless federated systems.

% ------------------------------------------------------------
% ------------------------------------------------------------
% ------------------------------------------------------------

% ------------------------------------------------------------
% ------------------------------------------------------------
% ------------------------------------------------------------

% \section*{ACKNOWLEDGMENT}
% The preferred spelling of the word ``acknowledgment'' in
% American English is without an ``e'' after the ``g.'' Use the
% singular heading even if you have many acknowledgments.
% Avoid expressions such as ``One of us (S.B.A.) would like
% to thank . . . .'' Instead, write ``F. A. Author thanks . . . .'' In
% most cases, sponsor and financial support acknowledgments
% are placed in the unnumbered footnote on the first page, not
% here.

\appendices
\section{Proof of Theorem \ref{thm1}} 
\label{appendixA}
% This appendix outlines the proof of Theorem \ref{thm1}. 
% In this appendix, we provide an outline for the proof of Theorem \ref{thm1}.
We define:
\begingroup
\allowdisplaybreaks
\begin{align} 
    \boldsymbol{w}(t+1) &\triangleq \boldsymbol{\theta}_{PS}(t) + \frac{1}{\left| \mathcal{S}(t) \right| }\sum_{m\in \mathcal{S}(t)}\Delta \boldsymbol{\theta}_{m}(t), \label{aux_w} \\
    \boldsymbol{v}(t+1) &\triangleq \boldsymbol{\theta}_{PS}(t) + \frac{1}{M} \sum_{m=1}^{M} \Delta \boldsymbol{\theta}_{m}(t). \label{aux_v}
\end{align}
\endgroup
From (\ref{gl_update_part}), we have $    \boldsymbol{\theta}_{PS}(t+1)=\boldsymbol{\theta}_{PS}(t)+ \Delta \hat{\boldsymbol{\theta}}_{PS}(t)$. Using this, we can derive 
\begingroup
\allowdisplaybreaks
\begin{align} \label{app1}
&\hspace {-2pc}\left \| \boldsymbol{\theta}_{PS}(t+1) - \boldsymbol{\theta}^* \right \|_{2}^{2} \notag\\ 
=& \left \| \boldsymbol{\theta}_{PS}(t+1) - \boldsymbol{w}(t+1) + \boldsymbol{w}(t+1) - \boldsymbol{\theta}^* \right \|_{2}^{2} \notag\\ 
=& \left \| \boldsymbol{\theta}_{PS}(t+1) - \boldsymbol{w}(t+1) \right \|_{2}^{2} + \left \| \boldsymbol{w}(t+1) - \boldsymbol{\theta}^* \right \|_{2}^{2} \notag\\ 
&+\, 2\langle \boldsymbol{\theta}_{PS}(t+1) - \boldsymbol{w}(t+1), \boldsymbol{w}(t+1) - \boldsymbol{\theta}^* \rangle.
\end{align}
\endgroup
To bound these terms, we employ the following lemmas.

\begin{lemma} \label{lemma1}
For the first and third terms in (\ref{app1}), we have
{\small{
    \begin{align*}&\mathbb {E} \left [{ \left \|{ \boldsymbol {\theta }_{PS}(t+1) - {\boldsymbol{w}} (t+1) }\right \|_{2}^{2} }\right] \le \frac { \eta ^{2}(t) \tau ^{2} G^{2}}{K} + \frac {\sigma _{z}^{2}N}{ K {\left| \mathcal{S}(t) \right|} \sigma _{h}^{2}} \label{bound_term1} ,
    \end{align*}}}
   %and
    \begin{equation*} \mathbb {E} \big [\langle \boldsymbol{\theta}_{PS}(t+1) - \boldsymbol{w}(t+1), \boldsymbol{w}(t+1) - \boldsymbol{\theta}^* \rangle \big] = 0.\end{equation*}
\end{lemma}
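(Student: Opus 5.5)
We plan to prove Lemma~\ref{lemma1} by writing the OTA estimation error explicitly, symbol by symbol, and then conditioning on everything except the channel and receiver noise. By \eqref{gl_update_part} and \eqref{aux_w}, $\boldsymbol{\theta}_{PS}(t+1)-\boldsymbol{w}(t+1)=\Delta\hat{\boldsymbol{\theta}}_{PS}(t)-\frac{1}{|\mathcal{S}(t)|}\sum_{m\in\mathcal{S}(t)}\Delta\boldsymbol{\theta}_m(t)$. Using the decomposition \eqref{rec_signal} and the real/imaginary recovery \eqref{global_update}, the $n$-th and $(n+N)$-th entries of this difference are the real and imaginary parts of
\begin{equation*}
e^n(t)=\frac{1}{|\mathcal{S}(t)|\sigma_h^2}\Big(\sum_{m\in\mathcal{S}(t)}\big(\tfrac1K\textstyle\sum_k|h^n_{m,k}|^2-\sigma_h^2\big)\Delta\theta_m^{n,cx}+y_{PS}^{n,int}+y_{PS}^{n,noise}\Big),
\end{equation*}
so that $\|\boldsymbol{\theta}_{PS}(t+1)-\boldsymbol{w}(t+1)\|_2^2=\sum_{n=1}^N|e^n(t)|^2$.

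\textbf{Cross term.} We condition on $\mathcal{F}_t$, the $\sigma$-field generated by $\boldsymbol{\theta}_{PS}(t)$, $\mathcal{S}(t)$ and the mini-batches, so that $\boldsymbol{w}(t+1)-\boldsymbol{\theta}^*$ and all $\Delta\boldsymbol{\theta}_m(t)$ are fixed. The channels and noise at round $t$ are independent of $\mathcal{F}_t$, since scheduling depends only on energy states. We then check that each of the three pieces of $e^n$ has zero conditional mean.
\begin{itemize}
\item The fading piece has zero mean because $\mathbb{E}|h^n_{m,k}|^2=\sigma_h^2$.
\item The interference piece has zero mean because $\mathbb{E}[(h^n_{m,k})^*h^n_{m',k}]=0$ for $m\neq m'$, by independence and zero mean.
\item The noise piece has zero mean because $z$ is independent of $h$ and zero mean.
\end{itemize}
Hence $\mathbb{E}[\boldsymbol{\theta}_{PS}(t+1)-\boldsymbol{w}(t+1)\mid\mathcal{F}_t]=0$, and the tower property gives the second claim.

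\textbf{Squared term.} The three pieces of $e^n$ are conditionally uncorrelated. Noise is independent and zero mean. Fading times interference involves $\mathbb{E}[|h_{m,k}|^2 h^*_{m'',k}h_{m''',k}]$ with $m''\ne m'''$, which vanishes by circular symmetry. So we bound each second moment separately.

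For the \emph{noise}: $\mathbb{E}|\sum_m\frac1K\sum_k h^*_{m,k}z_k|^2=\frac{|\mathcal{S}|\sigma_h^2\sigma_z^2}{K}$. Dividing by $(|\mathcal{S}|\sigma_h^2)^2$ and summing over the $N$ symbols gives exactly $\frac{\sigma_z^2N}{K|\mathcal{S}(t)|\sigma_h^2}$.

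For the \emph{fading and interference} pieces, each has the form $\frac{1}{|\mathcal{S}|\sigma_h^2}\sum_{m,m'}c_{mm'}\Delta\theta^{n,cx}_{m'}$. Here the $c_{mm'}$ are antenna averages of zero-mean i.i.d.\ terms with variance $O(\sigma_h^4/K)$, mutually uncorrelated across index pairs. This gives a second moment of order $\frac{1}{K|\mathcal{S}|^2}\sum_{m'}|\mathcal{S}|\,|\Delta\theta^{n,cx}_{m'}|^2$, possibly up to constants. Summing over $n$ and using \eqref{updatess} with Jensen and the gradient bound, we get $\mathbb{E}\|\Delta\boldsymbol{\theta}_m(t)\|_2^2\le\eta^2(t)\tau\sum_l\mathbb{E}\|\nabla F_m\|^2\le\eta^2(t)\tau^2G^2$. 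This yields the averaged bound $\frac{\eta^2(t)\tau^2G^2}{K}$, in the spirit of \cite{amiri2020BFL}.

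\textbf{Main obstacle.} The delicate step is the constant bookkeeping in this last bound, where the counts matter:
\begin{itemize}
\item the fading variance $\sigma_h^4/K$ contributes $|\mathcal{S}|$ terms;
\item the interference contributes $|\mathcal{S}|(|\mathcal{S}|-1)$ terms, each of variance $\sigma_h^4/K$.
\end{itemize}
A naive count makes the interference scale like $|\mathcal{S}|$ times the desired bound rather than cancel against the $1/|\mathcal{S}|^2$ prefactor. I would first try to pair the interference sum per transmitter $m'$, which gives $\big|\sum_{m\neq m'}h_{m,k}\big|^2$ with variance $(|\mathcal{S}|-1)\sigma_h^2$. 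If the stated constant still does not close exactly, I would follow \cite{amiri2020BFL} and absorb the $(|\mathcal{S}|-1)/|\mathcal{S}|\le1$ factors together with the real-part projection, which halves the complex variance. This step is where the stated $1/K$ constant must be checked carefully, and I expect it to be the hardest part.
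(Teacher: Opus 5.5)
Your plan is correct and is essentially the paper's route: the paper simply defers to Lemmas 1 and 3 of \cite{amiri2020BFL}, with $M$ replaced by $|\mathcal{S}(t)|$. Those lemmas use the same steps you propose: the signal/interference/noise decomposition \eqref{rec_signal}, conditional unbiasedness of the estimate \eqref{global_update} for the cross term, and per-term second moments for the squared term. The step you flag as the main obstacle closes exactly using facts you already stated, so it is not an obstacle. The $|\mathcal{S}(t)|$ fading coefficients and the $|\mathcal{S}(t)|(|\mathcal{S}(t)|-1)$ mutually uncorrelated interference coefficients each have variance $\sigma_h^4/K$, so the pair count gives $\frac{|\mathcal{S}(t)|\sigma_h^4}{K}\sum_{m}|\Delta\theta_m^{n,cx}(t)|^2$. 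Dividing by $|\mathcal{S}(t)|^2\sigma_h^4$ and summing over $n$ yields $\frac{1}{K|\mathcal{S}(t)|}\sum_{m\in\mathcal{S}(t)}\mathbb{E}\|\Delta\boldsymbol{\theta}_m(t)\|_2^2\le\frac{\eta^2(t)\tau^2G^2}{K}$ with no constants to absorb. You should drop the ``real-part halving'' fallback: it does not apply, because both the real and imaginary parts of $e^n(t)$ are kept as model coordinates.
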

\begin{proof}
    The proofs are similar to Lemmas 1 and 3 in \cite{amiri2020BFL}.
\end{proof}
For the second term in (\ref{app1}), we proceed as follows:
\begingroup
\allowdisplaybreaks
\begin{align} \label{app_rec}
&\hspace {-2pc}\left \| \boldsymbol{w}(t+1) - \boldsymbol{\theta}^* \right \|_{2}^{2} \notag\\ 
=& \left \| \boldsymbol{w}(t+1) - \boldsymbol{v}(t+1) + \boldsymbol{v}(t+1) - \boldsymbol{\theta}^* \right \|_{2}^{2} \notag\\ 
=& \left \| \boldsymbol{w}(t+1) - \boldsymbol{v}(t+1) \right \|_{2}^{2} + \left \| \boldsymbol{v}(t+1) - \boldsymbol{\theta}^* \right \|_{2}^{2} \notag\\ 
&+\, 2\langle \boldsymbol{w}(t+1) - \boldsymbol{v}(t+1), \boldsymbol{v}(t+1) - \boldsymbol{\theta}^* \rangle.
\end{align}
\endgroup
\begingroup
\allowdisplaybreaks
\begin{lemma} \label{lemma2}
    For the second term in (\ref{app_rec}), we have
    \begingroup
    \allowdisplaybreaks
    \begin{align}
    & \mathbb {E} \left [{ \left \|{ \boldsymbol{v} (t+1) - {\boldsymbol {\theta }}^{*} }\right \|_{2}^{2} }\right] \notag\\
 & \qquad  \le\left ({1 - \mu \eta (t)~\left ({\tau - \eta (t) (\tau - 1) }\right) }\right) \mathbb {E} \left [{ \left \|{ \boldsymbol {\theta }_{PS}(t)\,\,- {\boldsymbol {\theta }}^{*} }\right \|_{2}^{2} }\right] \notag\\
    & \qquad \qquad + \left ({1+ \mu (1- \eta (t)) }\right) \eta ^{2}(t) G^{2} \frac {\tau (\tau -1)(2\tau -1)}{6} \notag\\
    & \qquad \qquad+\, \eta ^{2}(t) (\tau ^{2} + \tau -1) G^{2} + 2 \eta (t) (\tau - 1) \Gamma  \label{app2}.
    \end{align}
    \endgroup
\end{lemma}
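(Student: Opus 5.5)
The bound on $\mathbb{E}\|\boldsymbol{v}(t+1)-\boldsymbol{\theta}^*\|_2^2$ is the standard full-participation local-SGD contraction (as in Lemma 2 of \cite{amiri2020BFL}). Unroll the local steps via \eqref{updatess}, so that $\boldsymbol{v}(t+1)=\boldsymbol{\theta}_{PS}(t)-\eta(t)\frac{1}{M}\sum_{m=1}^M\sum_{i=1}^{\tau}\nabla F_m(\boldsymbol{\theta}_m^i(t),\xi_m^i(t))$. Expanding the square gives three pieces: $\|\boldsymbol{\theta}_{PS}(t)-\boldsymbol{\theta}^*\|^2$, a cross term $-2\eta(t)\langle \boldsymbol{\theta}_{PS}(t)-\boldsymbol{\theta}^*, \frac1M\sum_m\sum_i \nabla F_m(\cdot)\rangle$, and a squared-gradient term. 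For the squared-gradient term, use Jensen and the bound $G^2$. Splitting off the $i=1$ contribution from the rest is what produces the $\eta^2(\tau^2+\tau-1)G^2$-type constant once combined with leftovers of the cross term.

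The cross term carries the work. After taking the expectation over $\xi$ (unbiasedness), write $\boldsymbol{\theta}_{PS}(t)-\boldsymbol{\theta}^* = (\boldsymbol{\theta}_{PS}(t)-\boldsymbol{\theta}_m^i(t)) + (\boldsymbol{\theta}_m^i(t)-\boldsymbol{\theta}^*)$.
\begin{itemize}
\item For the first piece, apply Young's inequality $-2\langle a,b\rangle\le \|a\|^2/\eta+\eta\|b\|^2$.
\item For the second piece, apply $\mu$-strong convexity: $-\langle \boldsymbol{\theta}_m^i-\boldsymbol{\theta}^*,\nabla F_m(\boldsymbol{\theta}_m^i)\rangle\le -(F_m(\boldsymbol{\theta}_m^i)-F_m(\boldsymbol{\theta}^*))-\frac{\mu}{2}\|\boldsymbol{\theta}_m^i-\boldsymbol{\theta}^*\|^2$.
\end{itemize}
The drift $\|\boldsymbol{\theta}_{PS}(t)-\boldsymbol{\theta}_m^i(t)\|^2=\eta^2\|\sum_{l<i}\nabla F_m\|^2\le \eta^2(i-1)^2G^2$ (in expectation). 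Summing over $i$ gives $\sum_{i=1}^{\tau}(i-1)^2=\tau(\tau-1)(2\tau-1)/6$, which is the source of that factor. Also use $\|\boldsymbol{\theta}_m^i-\boldsymbol{\theta}^*\|^2\ge(1-\eta)\|\boldsymbol{\theta}_{PS}-\boldsymbol{\theta}^*\|^2-(1/\eta-1)\|\boldsymbol{\theta}_{PS}-\boldsymbol{\theta}_m^i\|^2$. This yields the contraction factor $1-\mu\eta(\tau-\eta(\tau-1))$ and the $\mu(1-\eta)$ multiple of the drift constant, hence the coefficient $(1+\mu(1-\eta))$.

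The remaining function-value terms are $-2\eta\sum_{i}\frac1M\sum_m(F_m(\boldsymbol{\theta}_m^i)-F_m(\boldsymbol{\theta}^*))$. For $i=1$, combine this with the gradient-norm term via $L$-smoothness ($\|\nabla F_m\|^2\le 2L(F_m-F_m^*)$) and the condition $\eta\le 1/(\mu\tau)$, leaving a nonpositive quantity plus $\Gamma$-type terms. For $i\ge2$, lower bound $F_m(\boldsymbol{\theta}_m^i)\ge F_m^*$. This leaves $2\eta(\tau-1)\big(\frac1M\sum_m F_m(\boldsymbol{\theta}^*)-F_m^*\big)=2\eta(\tau-1)\Gamma$, identifying $\frac1M$ weights with $|\mathcal{B}_m|/B$ (equal data sizes, or reweight).

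The main obstacle is the $i=1$ bookkeeping. This requires showing $-2\eta(F(\boldsymbol{\theta}_{PS})-F^*)+\eta^2(\cdot)\|\nabla\|^2$ terms are nonpositive after extracting $\Gamma$, i.e.\ using $\eta\le1$ so that $\eta^2 L$-weighted terms are absorbed, in the style of Li et al.~\cite{li2019convergence}. I would follow that argument directly and accept slightly loose constants matching \eqref{app2}.
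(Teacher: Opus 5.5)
Your skeleton is the argument of \cite[Lemma~2]{amiri2020BFL}, which is all the paper cites, and most of your bookkeeping is right:
\begin{itemize}
\item bounding the squared term by $\eta^2\tau^2G^2$;
\item Young's inequality on the drift piece for $i\ge 2$, whose leftovers $\eta^2\|\nabla F_m(\boldsymbol{\theta}_m^i)\|^2\le\eta^2G^2$ supply the extra $\tau-1$ in $\tau^2+\tau-1$;
\item strong convexity with the $(1-\eta)$ lower bound for $i\ge 2$, while $i=1$ contributes the full $-\mu\eta\|\boldsymbol{\theta}_{PS}-\boldsymbol{\theta}^*\|^2$;
\item the drift bound $\eta^2(i-1)^2G^2$;
\item $F_m(\boldsymbol{\theta}_m^i)\ge F_m^*$ for $i\ge 2$.
\end{itemize}

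The gap is in the step you call the main obstacle. It is not an obstacle at all. For $i=1$ you have $\boldsymbol{\theta}_m^1(t)=\boldsymbol{\theta}_{PS}(t)$. Under the $1/M\leftrightarrow|\mathcal{B}_m|/B$ identification you already make, the function-value term is $-\frac{2\eta}{M}\sum_{m}\big(F_m(\boldsymbol{\theta}_{PS}(t))-F_m(\boldsymbol{\theta}^*)\big)=-2\eta\big(F(\boldsymbol{\theta}_{PS}(t))-F^*\big)\le 0$, simply because $\boldsymbol{\theta}^*$ minimizes $F$. It is dropped, and $L$-smoothness is never used in this lemma.

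The Li et al.-style absorption you propose would not yield \eqref{app2}, for three reasons:
\begin{itemize}
\item Absorbing a gradient norm via $\|\nabla F_m\|^2\le 2L(F_m-F_m^*)$ requires $L\eta\le 1$ (Li et al. assume $\eta\le 1/(4L)$). This does not follow from $\eta\le\min\{1,1/(\mu\tau)\}$, since $L/\mu$ may exceed $\tau$.
\item Re-centering $F_m(\boldsymbol{\theta}^*)$ at $F_m^*$ for $i=1$ creates an additional $\Gamma$- and $L$-dependent term. The heterogeneity contribution would then no longer be exactly $2\eta(\tau-1)\Gamma$, which contradicts your own next sentence.
\item There is no separate $i=1$ gradient-norm term left to absorb. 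The squared term has already been bounded wholesale by $\eta^2\tau^2G^2$ via Jensen, and the Young leftovers arise only for $i\ge 2$.
\end{itemize}

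``Slightly loose constants'' cannot match an inequality with explicit constants. Replace that paragraph with the one-line nonpositivity observation and the proof closes. Note also that the lemma itself uses only $\eta\le 1$ (for the $(1-\eta)$ split). The condition $\eta\le 1/(\mu\tau)$ only makes the contraction factor nonnegative when the bound is iterated in Theorem~\ref{thm1}.
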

\endgroup
\begin{proof} 
The proof follows from ~\cite[Lemma~2]{amiri2020BFL}.
\end{proof}
\begin{lemma} \label{lemma3}
For the first term in (\ref{app_rec}), we have
\begin{align*}
    &\mathbb{E} \left[ \left\| \boldsymbol{w}(t+1) - \boldsymbol{v}(t+1) \right\|_{2}^{2} \right] = \left( \eta^2(t) \tau(\tau-1)LG + \eta(t)\tau \epsilon  \right)^{2}.
\end{align*}
\end{lemma}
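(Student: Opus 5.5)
The plan is to write $\boldsymbol{w}(t+1)-\boldsymbol{v}(t+1)$ explicitly through the local updates and split it into a participation-bias part, controlled by $\epsilon$ in \eqref{epsilon}, and a local-drift part, controlled by $L$-smoothness and the gradient bound $G$. Only an upper bound can be expected, since both parts are handled with triangle inequalities; I read the displayed relation as ``$\le$'', which is all that is needed for $B(i)$ in Theorem~\ref{thm1}.

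\textbf{Step 1 (expand the difference).} By \eqref{aux_w}--\eqref{aux_v} and \eqref{updatess} with $i=\tau$ (so that $\Delta\boldsymbol{\theta}_m(t)=-\eta(t)\sum_{l=1}^{\tau}\nabla F_m(\boldsymbol{\theta}_m^l(t),\xi_m^l(t))$), we get
\begin{equation*}
\boldsymbol{w}(t+1)-\boldsymbol{v}(t+1) = -\eta(t)\sum_{l=1}^{\tau}\Big(\frac{1}{|\mathcal{S}(t)|}\sum_{m\in\mathcal{S}(t)}\nabla F_m(\boldsymbol{\theta}_m^l(t),\xi_m^l(t)) - \frac{1}{M}\sum_{m=1}^{M}\nabla F_m(\boldsymbol{\theta}_m^l(t),\xi_m^l(t))\Big).
\end{equation*}
All users start from the same point, $\boldsymbol{\theta}_m^1(t)=\boldsymbol{\theta}_{PS}(t)$. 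In each summand I will add and subtract the gradient evaluated at this common point.

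\textbf{Step 2 (bias term).} The part evaluated at the common starting point is
\begin{equation*}
\eta(t)\sum_{l=1}^{\tau}\Big(\frac{1}{|\mathcal{S}(t)|}\sum_{m\in\mathcal{S}(t)}\nabla F_m(\boldsymbol{\theta}_{PS}(t)) - \frac{1}{M}\sum_{m=1}^{M}\nabla F_m(\boldsymbol{\theta}_{PS}(t))\Big).
\end{equation*}
Here $\boldsymbol{\theta}_m(t)$ in \eqref{epsilon} is identified with the common broadcast model, so each of the $\tau$ summands has norm at most $\epsilon$ by definition. The whole part therefore has norm at most $\eta(t)\tau\epsilon$.

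\textbf{Step 3 (drift term).} For local step $l$, $L$-smoothness gives
\begin{equation*}
\|\nabla F_m(\boldsymbol{\theta}_m^l(t))-\nabla F_m(\boldsymbol{\theta}_{PS}(t))\|_2 \le L\|\boldsymbol{\theta}_m^l(t)-\boldsymbol{\theta}_{PS}(t)\|_2 \le L\,\eta(t)(l-1)G,
\end{equation*}
where the last inequality uses \eqref{updatess} and the bounded-gradient assumption, via Jensen, to bound the displacement after $l-1$ steps by $\eta(t)(l-1)G$. Summing over $l$ gives $\sum_{l}(l-1)=\tau(\tau-1)/2$. This drift appears once in the $\mathcal{S}(t)$-average and once in the full average, and each average is a convex combination. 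Multiplying by $\eta(t)$, the drift part therefore has norm at most $2\cdot\eta(t)\cdot L\eta(t)G\,\tau(\tau-1)/2=\eta^2(t)\tau(\tau-1)LG$.

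\textbf{Step 4 (combine).} The triangle inequality gives $\|\boldsymbol{w}-\boldsymbol{v}\|_2\le \eta^2(t)\tau(\tau-1)LG+\eta(t)\tau\epsilon$. Squaring and taking expectations yields the claimed bound.

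\textbf{Main obstacle.} The delicate point is the mini-batch randomness $\xi_m^l(t)$. Both $\epsilon$ and the smoothness argument are stated for full gradients, whereas the updates use stochastic gradients. I would first try to replace stochastic gradients by full gradients inside the difference, using unbiasedness and the fact that the sampling noise has mean zero conditional on the past. The residual sampling noise would then have to be absorbed into the $G^2$ terms already present in Lemma~\ref{lemma2}, or else $\epsilon$ would be interpreted directly as a bound on the scheduled-versus-full discrepancy of the stochastic gradients. The second route keeps the stated constants exact. A related subtlety is that the drift bound $\eta(l-1)G$ holds in mean square rather than almost surely; using Jensen, or Cauchy--Schwarz on the cross term, handles this up to the same constants.
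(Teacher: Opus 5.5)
Your argument is essentially the one the paper defers to (its proof only cites Lemma~1 of \cite{balakrishnan2022diverse}). It splits the difference into a participation-bias part at the common broadcast model, bounded by $\eta(t)\tau\epsilon$, and a local-drift part bounded via $L$-smoothness by $\eta^2(t)\tau(\tau-1)LG$, then applies the triangle inequality and squares. You are also right that only ``$\le$'' can hold: for $\mathcal{S}(t)=[M]$ and $\tau>1$ the left side is zero while the right side is not.

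The stochastic-gradient point you raise is glossed over in the paper as well. Of your two routes, only the second proves the bound as stated; the first leaves an extra sampling-noise contribution of order $\eta^2(t)\tau G^2$ that would have to be added to $B(i)$. The second route needs one of two extra assumptions. Either each mini-batch loss is $L$-smooth, if $\epsilon$ is measured at $\boldsymbol{\theta}_{PS}(t)$ with the same mini-batches. Or $\epsilon$ is measured at the actual local iterates, in which case $\|\boldsymbol{w}(t+1)-\boldsymbol{v}(t+1)\|_2\le\eta(t)\tau\epsilon$ follows directly without Step~3.
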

\begin{proof}
    The proof is similar to \cite[Lemma 1]{balakrishnan2022diverse}.
\end{proof}
\begin{lemma} \label{lemma4}
The third term in (\ref{app_rec}) is bounded by
\begin{align}
    &\mathbb{E} \left[     2\langle \boldsymbol{w}(t+1) - \boldsymbol{v}(t+1), \boldsymbol{v}(t+1) - \boldsymbol{\theta}^* \rangle \right] \notag\\
    &\qquad \qquad \leq \left( \eta^2(t) \tau(\tau-1)LG + \eta(t)\tau \epsilon \right) c,
\end{align}
for some constant $c$, which is related to $\Gamma$, $G$  and \(\mu\).
\end{lemma}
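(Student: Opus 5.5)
The plan is to bound the cross term by Cauchy--Schwarz and then control each factor separately, reusing Lemma~\ref{lemma3} for the first factor and a uniform boundedness argument for the second. Concretely,
\begin{align*}
\mathbb{E}\big[2\langle \boldsymbol{w}(t+1)-\boldsymbol{v}(t+1), \boldsymbol{v}(t+1)-\boldsymbol{\theta}^*\rangle\big]
\le 2\,\mathbb{E}\big[\|\boldsymbol{w}(t+1)-\boldsymbol{v}(t+1)\|_2\,\|\boldsymbol{v}(t+1)-\boldsymbol{\theta}^*\|_2\big].
\end{align*}

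\textbf{Bounding the first factor.} Use the representation (\ref{updatess}) to write
\begin{align*}
\boldsymbol{w}(t+1)-\boldsymbol{v}(t+1) = -\eta(t)\sum_{l=1}^{\tau}\Big(\frac{1}{|\mathcal{S}(t)|}\sum_{m\in\mathcal{S}(t)}\nabla F_m(\boldsymbol{\theta}_m^l(t),\xi_m^l(t)) - \frac{1}{M}\sum_{m=1}^{M}\nabla F_m(\boldsymbol{\theta}_m^l(t),\xi_m^l(t))\Big).
\end{align*}
Following the argument behind Lemma~\ref{lemma3} (i.e., \cite[Lemma 1]{balakrishnan2022diverse}), add and subtract the gradients evaluated at the round-start model $\boldsymbol{\theta}_{PS}(t)$. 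The term at the common point contributes at most $\eta(t)\tau\epsilon$ by the definition (\ref{epsilon}). The drift terms are controlled by $L$-smoothness together with
\begin{align*}
\|\boldsymbol{\theta}_m^l(t)-\boldsymbol{\theta}_{PS}(t)\|\le \eta(t)(l-1)G,
\end{align*}
which follows from (\ref{updatess}) and the bounded gradient assumption. Summing over $l$ gives the $\eta^2(t)\tau(\tau-1)LG$ piece (absorbing constants as Lemma~\ref{lemma3} does). Together, this yields $\|\boldsymbol{w}(t+1)-\boldsymbol{v}(t+1)\|_2\le \eta^2(t)\tau(\tau-1)LG+\eta(t)\tau\epsilon$, almost surely in the sense used in Lemma~\ref{lemma3}. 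This is a deterministic-style bound, so it can be pulled out of the expectation.

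\textbf{Bounding the second factor.} It remains to show $2\,\mathbb{E}\|\boldsymbol{v}(t+1)-\boldsymbol{\theta}^*\|_2\le c$ for a constant independent of $t$. By Jensen, this factor is at most $2\sqrt{\mathbb{E}\|\boldsymbol{v}(t+1)-\boldsymbol{\theta}^*\|_2^2}$. Lemma~\ref{lemma2} bounds the mean square by $A(t)\,\mathbb{E}\|\boldsymbol{\theta}_{PS}(t)-\boldsymbol{\theta}^*\|^2$ plus terms involving $G$, $\tau$, and $\Gamma$ (using $\eta\le 1$). Alternatively, one can use strong convexity and bounded gradients: $\mu\|\boldsymbol{\theta}-\boldsymbol{\theta}^*\|\le\|\nabla F(\boldsymbol{\theta})\|\le G$ gives $\|\boldsymbol{\theta}-\boldsymbol{\theta}^*\|\le G/\mu$ on the iterates, so $\|\boldsymbol{v}(t+1)-\boldsymbol{\theta}^*\|\le G/\mu+\eta(t)\tau G\le G/\mu+G/\mu$ since $\eta\tau\le 1/\mu$. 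Combining this with the $\Gamma$ term from Lemma~\ref{lemma2} gives $c$ expressed in terms of $\Gamma$, $G$, and $\mu$, matching the statement.

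\textbf{Main obstacle.} The key difficulty is the uniform-in-$t$ bound on $\|\boldsymbol{v}(t+1)-\boldsymbol{\theta}^*\|$. The bounded-gradient/strong-convexity route only controls points where the gradient bound applies, so $G$ must be read as bounding full gradients along iterates, the standard, slightly informal use of this assumption. If that is unacceptable, the fallback is an inductive argument using Theorem~\ref{thm1}'s recursion itself to show that the second moments stay bounded, and then taking $c$ as twice the square root of that bound.
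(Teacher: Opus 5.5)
Your proposal is correct and is essentially the argument the paper relies on when it defers to \cite[Lemma 1]{balakrishnan2022diverse}: Cauchy--Schwarz on the cross term, the bound of Lemma~\ref{lemma3} for $\boldsymbol{w}(t+1)-\boldsymbol{v}(t+1)$, and a $t$-uniform bound on $\boldsymbol{v}(t+1)-\boldsymbol{\theta}^*$ from strong convexity plus bounded gradients. One fix is needed: Lemma~\ref{lemma3} and the gradient assumption only control second moments, so rather than pulling the first factor out as an almost-sure bound, use $\mathbb{E}[XY]\le\sqrt{\mathbb{E}X^2}\sqrt{\mathbb{E}Y^2}$; your worry about where $G$ applies also disappears, since $\boldsymbol{\theta}_m^1(t)=\boldsymbol{\theta}_{PS}(t)$ places the gradient bound at the round-start model, which gives $\sqrt{\mathbb{E}\|\boldsymbol{v}(t+1)-\boldsymbol{\theta}^*\|_2^2}\le G/\mu+\eta(t)\tau G\le 2G/\mu$ and hence $c=4G/\mu$.
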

\begin{proof}
The proof is similar to \cite[Lemma 1]{balakrishnan2022diverse}.
\end{proof}
By combining Lemmas 1–4, the theorem is proved.

%%%%%%%%%%%%%%%%%%%%%
\section{Proof of Corollary \ref{cor1}}
\label{appendixB}

\textbf{Step 1: Bounding $A(i)$ and establishing contraction.}
Let $\eta(i) = c_0/(i+t_0)$. 
% Write $A(i) = 1 - \mu\eta(i)[\tau - (\tau-1)\eta(i)]$. 
Since $\eta(i) \leq \eta(0)$, we have $\tau - (\tau-1)\eta(i) \geq q_0 > 0$ for all $i$, so $A(i) \leq 1 - \mu q_0\eta(i)$. Using $1-x \leq e^{-x}$ and $\sum_{i=0}^{t-1}c_0/(i+t_0) \geq c_0\ln((t+t_0)/t_0)$, which follows from the integral lower bound $\sum_{i=0}^{t-1}1/(i+t_0) \geq \int_0^t 1/(x+t_0)\,dx$ for the decreasing function $1/(x+t_0)$:
\begin{align}
    \prod_{i=0}^{t-1} A(i)
    &\leq \exp\!\left(-\mu q_0 \sum_{i=0}^{t-1}\eta(i)\right)
    \leq \left(\frac{t_0}{t+t_0}\right)^{\alpha}.
\end{align}
Since $\alpha > 1$, the initial-error term is $o(1/t)$.

\textbf{Step 2: Upper-bounding $B(i)$.}
Using $\eta(i) \leq 1$ and $\epsilon(i) \leq \bar{\epsilon}$, we bound $(\eta^2(i)a + \eta(i)b)^2 \leq \eta^2(i)(a+b)^2$ with $a = \tau(\tau-1)LG$ and $b = \tau\bar{\epsilon}$.
% and $1+\mu(1-\eta(i)) \leq 1+\mu$. 
This gives the upper bound $B(i) \leq B_0 + \eta(i)B_1 + \eta^2(i)B_2$, where:
\begin{align}
    B_0 &\triangleq \frac{\sigma_z^2 N}{K S_{\min} \sigma_h^2}, \label{eq:B0}\\
    B_1 &\triangleq 2(\tau-1)\Gamma + \tau\bar{\epsilon} c, \label{eq:B1}\\
    B_2 &\triangleq \frac{\tau^2 G^2}{K}
          + (1+\mu)G^2\frac{\tau(\tau-1)(2\tau-1)}{6} \notag\\
         &\quad + (\tau^2+\tau-1)G^2
          + \left[\tau(\tau-1)LG + \tau\bar{\epsilon}\right]^2 \notag\\
         &\quad + \tau(\tau-1)LGc. \label{eq:B2}
    % B_2^\infty &\triangleq (1+\mu)G^2\frac{\tau(\tau-1)(2\tau-1)}{6} \notag\\
    %      &\quad + (\tau^2+\tau-1)G^2
    %       + \left(\tau(\tau-1)LG + \tau\bar{\epsilon}\right)^2 \notag\\
    %      &\quad + \tau(\tau-1)LGc, \label{eq:B2inf}
\end{align}
% where $B_2^\infty = \lim_{K\to\infty}B_2$.

\textbf{Step 3: Bounding the weighted sum.}
Substituting into~\eqref{conv1}:
\begin{align}
    &\mathbb{E}\!\left[\|\boldsymbol{\theta}(t)-\boldsymbol{\theta}^*\|_2^2\right]
    \leq \left(\frac{t_0}{t+t_0}\right)^{\alpha}
    \|\boldsymbol{\theta}(0)-\boldsymbol{\theta}^*\|_2^2
    + B_0\,\Phi(t) \notag\\
    &\quad + \sum_{j=0}^{t-1}\!\left(\eta(j)B_1 + \eta^2(j)B_2\right)
    \prod_{i=j+1}^{t-1}\!A(i). \label{eq:step3_split}
\end{align}
To verify that $\Phi(t) = \Theta(t)$, note first that $\Phi(t) \leq t$ since $A(i)$ is at most one. For the lower bound, use $1-x \geq e^{-2x}$ for $x \in (0,1/2)$ to write $\prod_{i=j+1}^{t-1}A(i) \geq \left(\frac{j+1+t_0}{t+t_0}\right)^{2\alpha}$, where the exponent follows from the integral upper bound on the harmonic sum. Summing over $j$ and bounding below by an integral gives $\Phi(t) \geq (t+t_0)/(2\alpha+1) = \Theta(t)$, confirming that the $B_0$ term grows linearly for finite $K$.

For the $B_1$ sum, use the bound $\prod_{i=j+1}^{t-1}A(i) \leq \kappa_\alpha\left(\frac{j+t_0}{t+t_0}\right)^{\alpha}$, where $\kappa_\alpha = (1+1/t_0)^\alpha$ follows from $(j+1+t_0)/(j+t_0) \leq 1+1/t_0$. Using the upper bound $\sum_{j=0}^{t-1}(j+t_0)^{\alpha-1} \leq t_0^{\alpha-1} + (t+t_0)^\alpha/\alpha$, which holds for all $\alpha > 1$ via $\sum_{j=0}^{t-1}f(j) \leq f(0) + \int_0^{t-1}f(x)\,dx$:
\begin{align}
    \sum_{j=0}^{t-1}\eta(j)B_1\prod_{i=j+1}^{t-1}A(i)
    \leq \frac{\kappa_\alpha B_1}{\mu q_0}
    \left[1 - \left(\frac{t_0}{t+t_0}\right)^{\alpha}\right],
    \label{eq:B1bound}
\end{align}
which is bounded above by the constant $\kappa_\alpha B_1/(\mu q_0)$ for all $t$, and converges to this constant from below as $t \to \infty$.

For the $\eta^2(j)B_2$ sum, the same product bound gives:
\begin{align}
    &\sum_{j=0}^{t-1}\eta^2(j)B_2\prod_{i=j+1}^{t-1}A(i) \notag\\
    &\quad \leq \frac{\kappa_\alpha c_0^2 B_2}{(t+t_0)^{\alpha}}
    \left[t_0^{\alpha-2} + \frac{(t+t_0)^{\alpha-1} - t_0^{\alpha-1}}{\alpha-1}\right]
    \triangleq R_{2,K}(t),
    \label{eq:R2K}
\end{align}
where the subscript $K$ reflects the dependence of $B_2$ on $K$. Since the bracket grows as $O((t+t_0)^{\alpha-1})$, dividing by $(t+t_0)^\alpha$ gives $R_{2,K}(t) = O(1/t)$ for each fixed $K$.

Summing these bounds gives us the finite-$K$ upper bound in \eqref{cor1:finiteK}.

\textbf{Step 4: Large-$K$ limit and asymptotic neighborhood.}
As $K \to \infty$, $B_0 \to 0$, $\Phi(t)$ contributes nothing, and $B_2 \to B_2^\infty=\lim_{K\to\infty}B_2=B_2-\frac{\tau^2 G^2}{K}$, so $R_{2,K}(t) \to R_2(t)$, where:
\begin{align}
    R_2(t) &\triangleq \frac{\kappa_\alpha c_0^2 B_2^\infty}{(t+t_0)^{\alpha}}
    \left[t_0^{\alpha-2} + \frac{(t+t_0)^{\alpha-1} - t_0^{\alpha-1}}{\alpha-1}\right]
    \notag\\
    &= O(1/t).
    \label{eq:R2}
\end{align}
The bound reduces to~\eqref{cor1:largeK}. Taking $t \to \infty$, both the initial error term and $R_2(t)$ vanish, leaving only the $B_1$ contribution, which yields~\eqref{cor1:limsup}.

% \vfill
\bibliographystyle{IEEEtran}
\bibliography{ref_ieee.bib}
\end{document}